\title{\Large{Learning HMMs with Nonparametric Emissions via \\
Spectral Decompositions of Continuous Matrices}}
\author{
  Kirthevasan~Kandasamy\thanks{Joint lead authors.} \\
  Carnegie Mellon University \\
  Pittsburgh, PA 15213 \\
  \texttt{kandasamy@cs.cmu.edu} \\
  \And
  Maruan~Al-Shedivat$^*$ \\
  Carnegie Mellon University \\
  Pittsburgh, PA 15213 \\
  \texttt{alshedivat@cs.cmu.edu} \\
  \And
  Eric~P.~Xing \\
  Carnegie Mellon University \\
  Pittsburgh, PA 15213 \\
  \texttt{epxing@cs.cmu.edu}
}
\begin{document}
\pdfoutput=1

\maketitle


\newcommand{\imtwowidth}{3.2in}
\newcommand{\imtwohspace}{-0.1in}
\newcommand{\imtextspace}{-0.15in}
\newcommand{\imcaptionspace}{-0.1in}

\newcommand{\insertQmCmFigure}{
\begin{figure*}
\subfigure{
  \includegraphics[width=\imtwowidth]{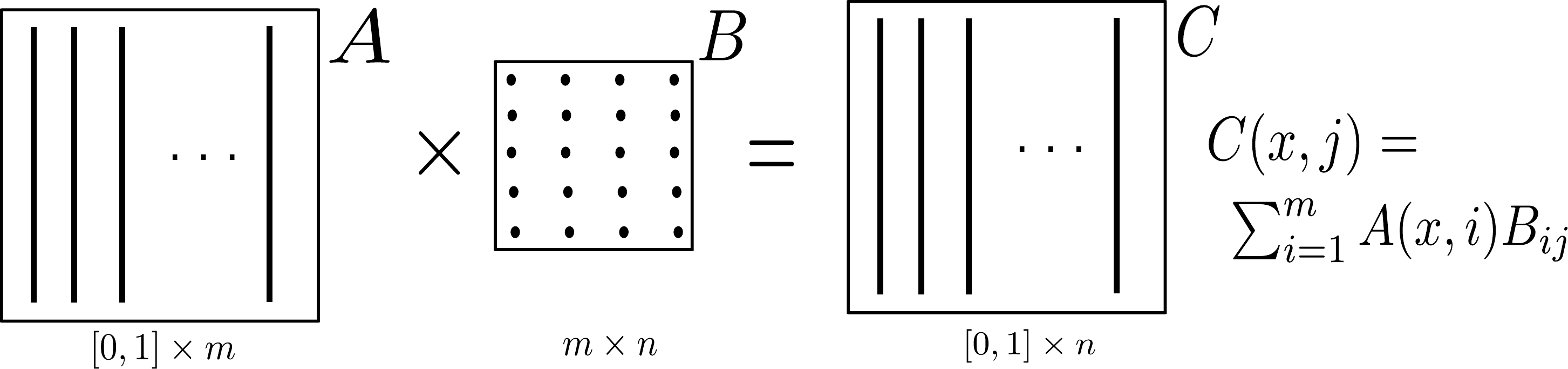} \hspace{\imtwohspace}
}
\subfigure{
  \includegraphics[width=\imtwowidth]{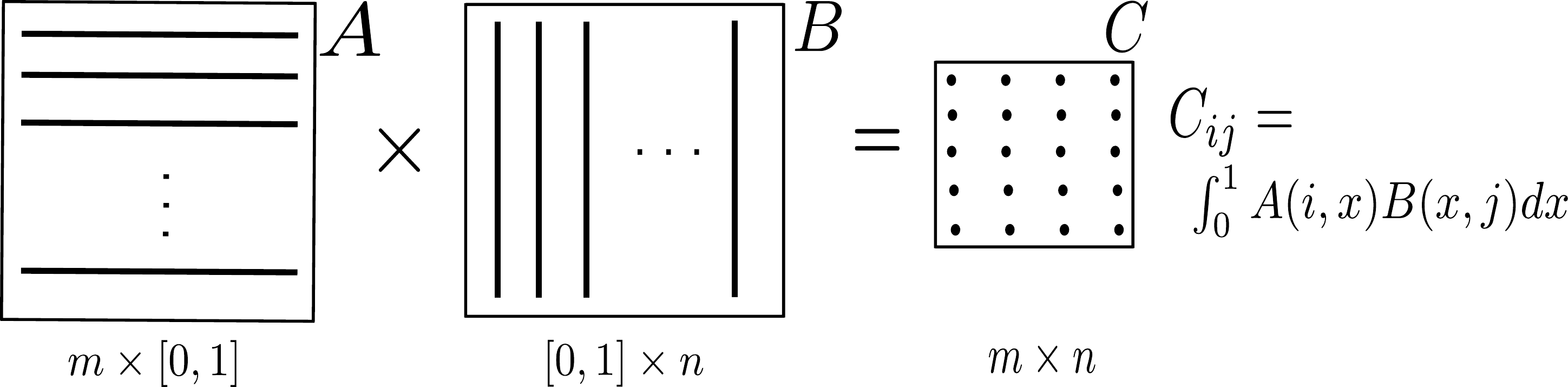} \hspace{\imtwohspace}
}
\vspace{\imcaptionspace}
\caption[]{
Illustration of qmatrix, cmatrix factorsiations.
\label{fig:QmCm}
}
\end{figure*}
}

\newcommand{\insertToyResults}{
  \begin{figure}[!t]
      \centering
      \includegraphics[width=0.325\columnwidth]{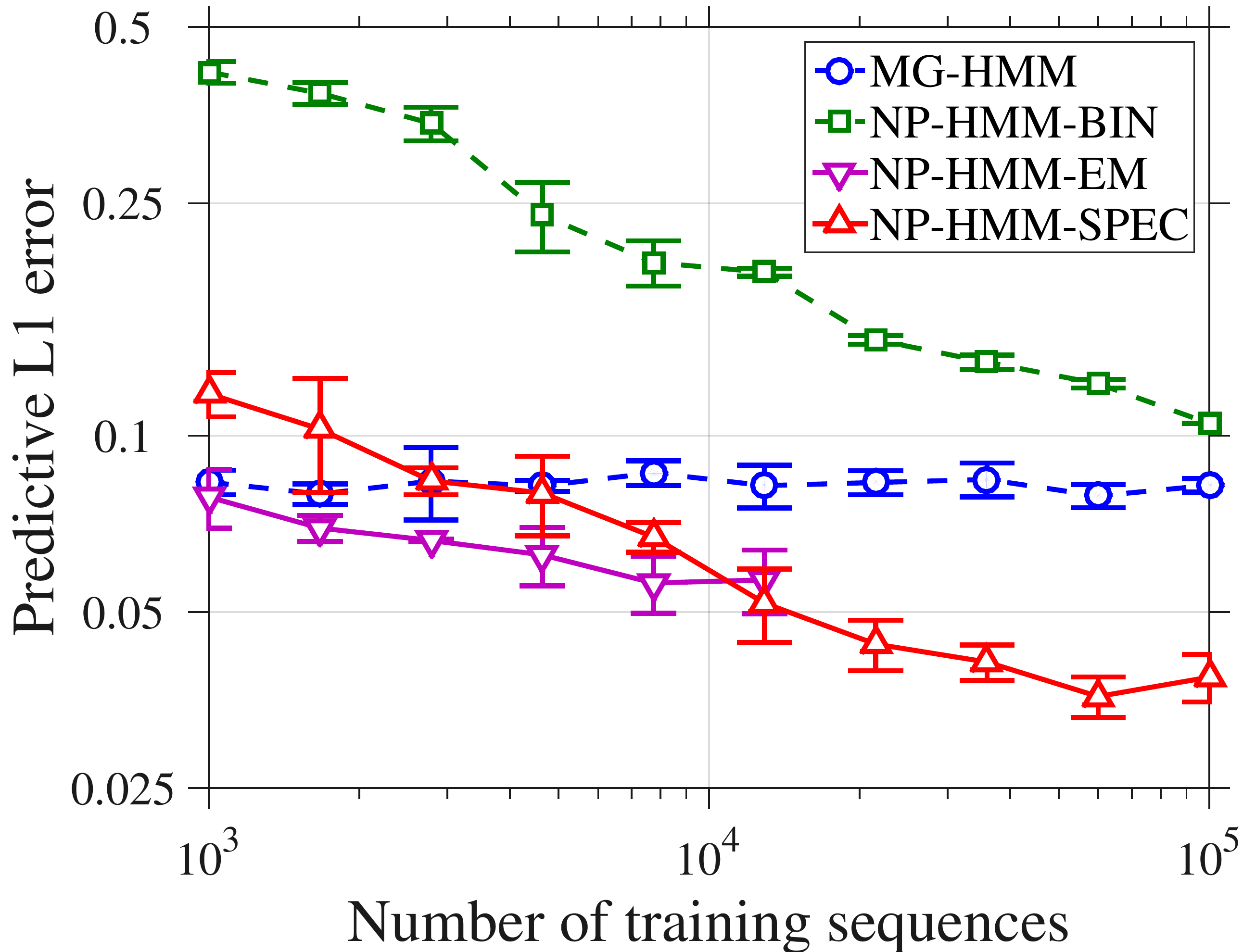}
      \includegraphics[width=0.325\columnwidth]{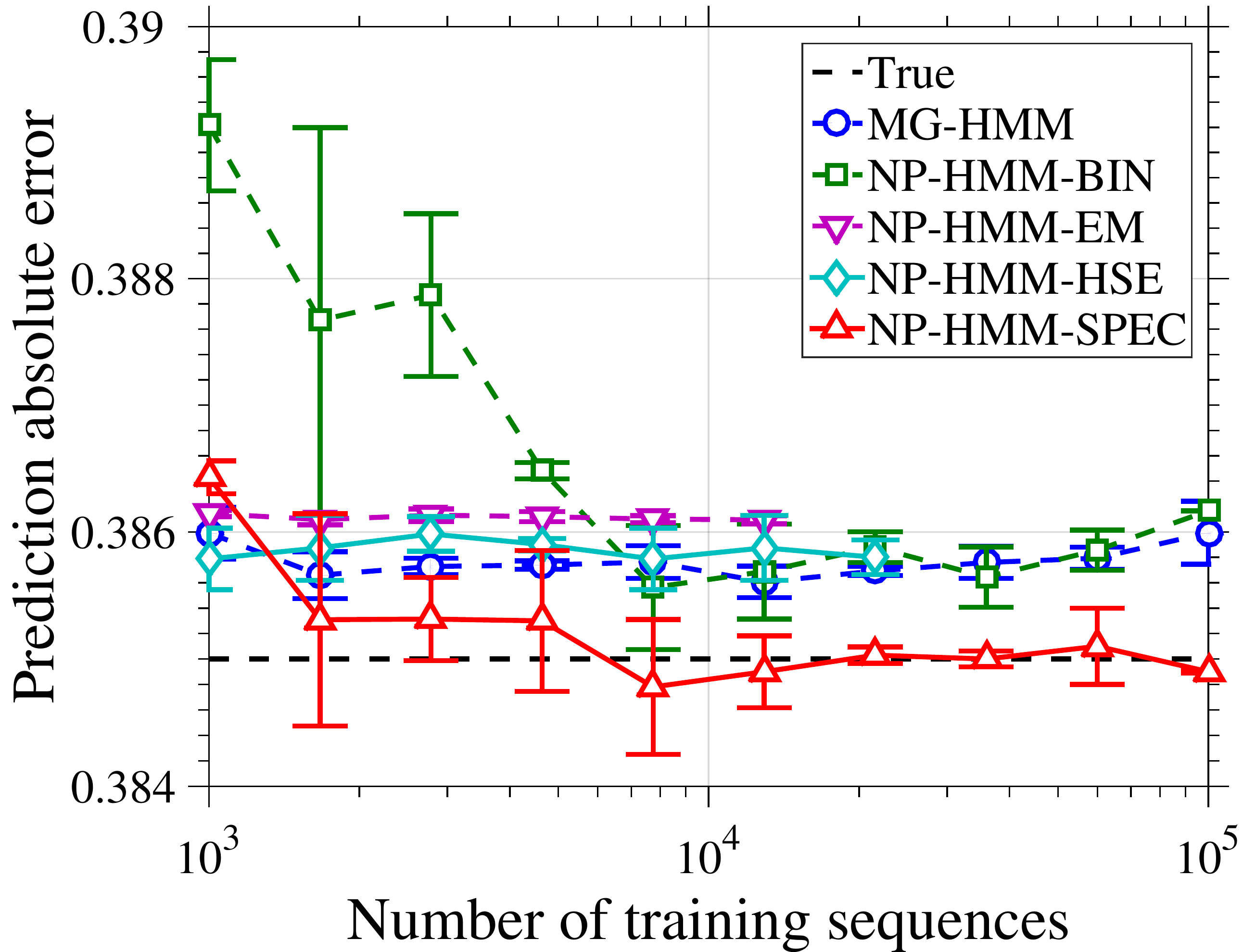}
      \includegraphics[width=0.325\columnwidth]{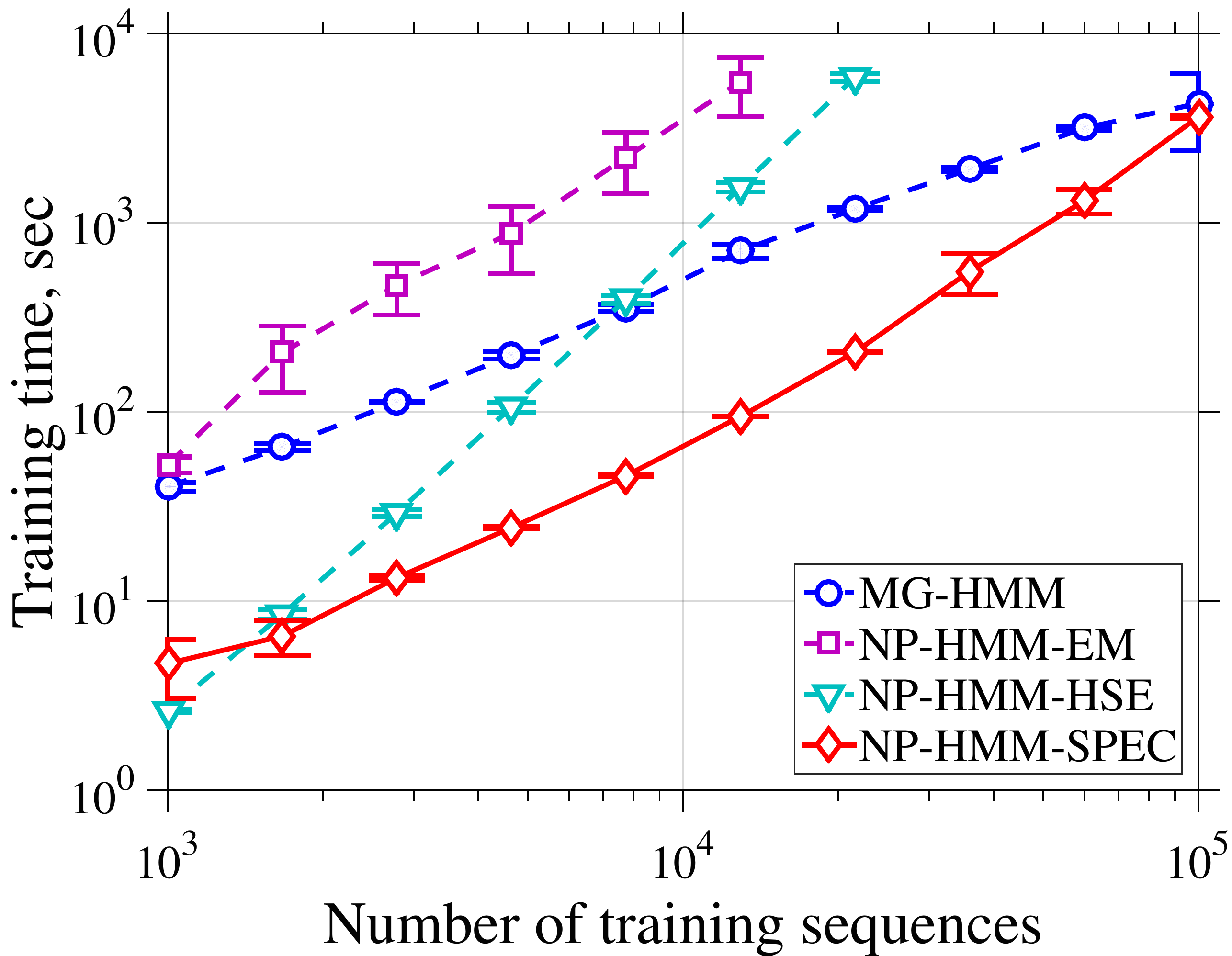}
      \includegraphics[width=0.325\columnwidth]{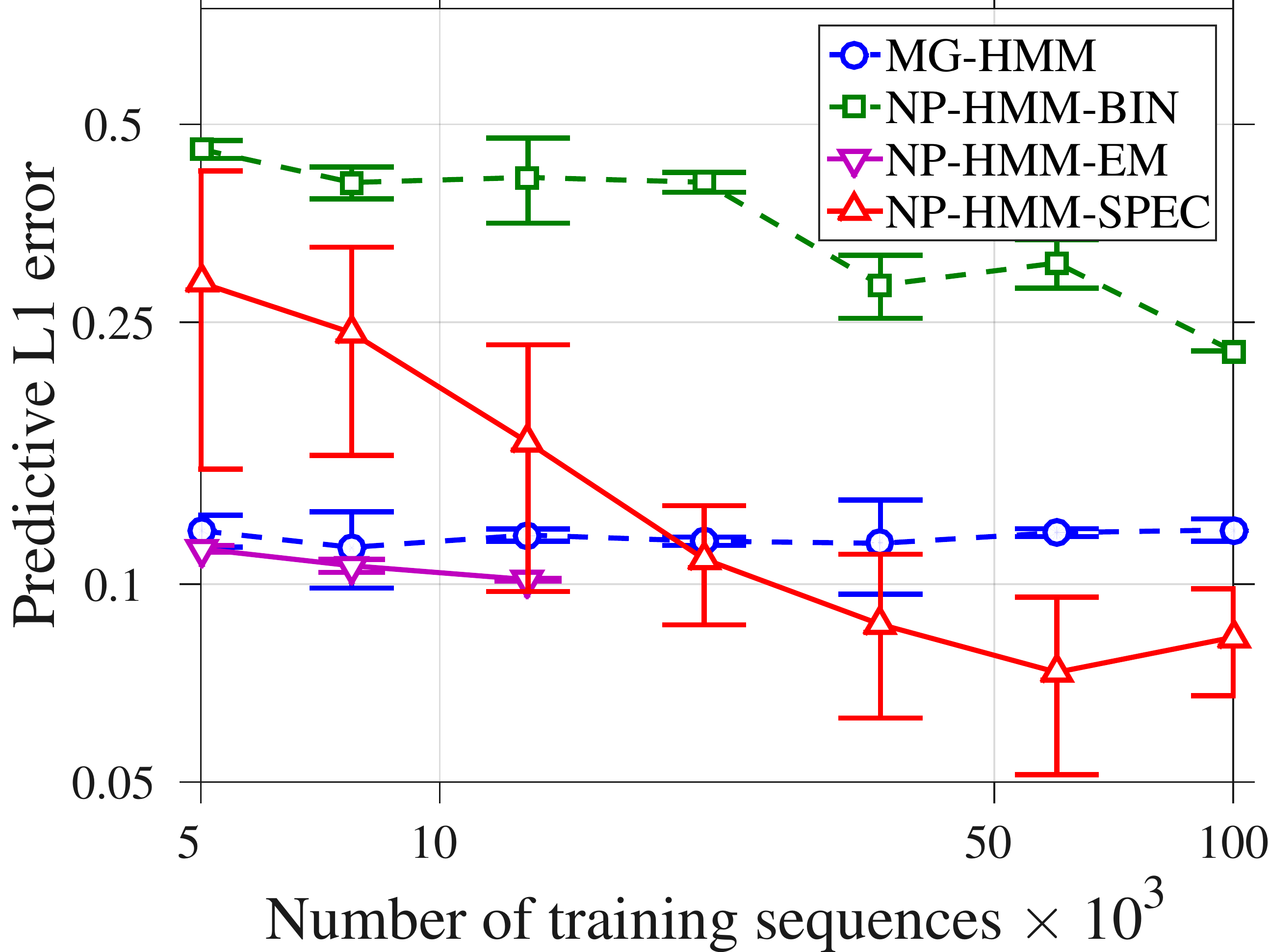}
      \includegraphics[width=0.325\columnwidth]{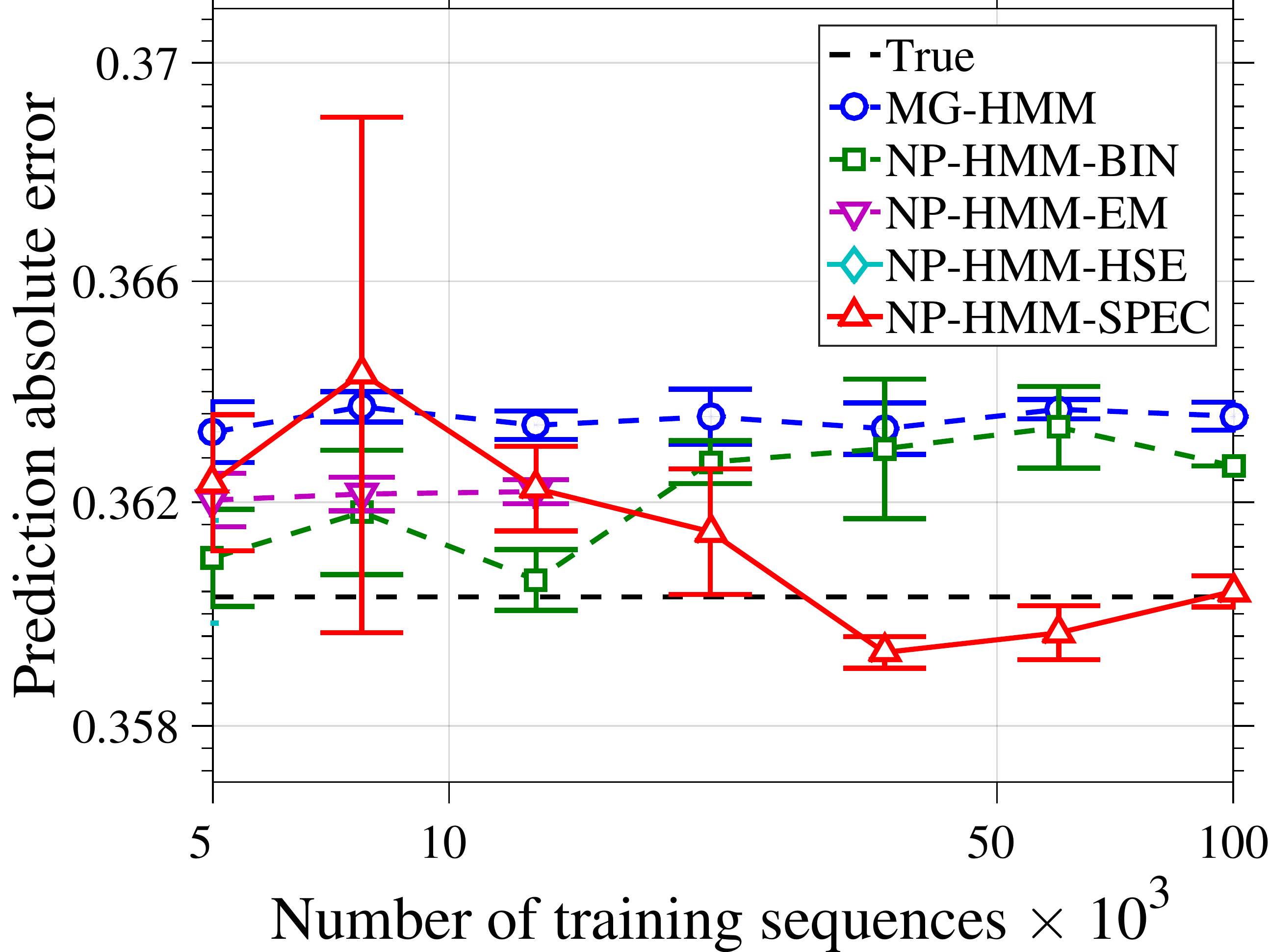}
      \includegraphics[width=0.325\columnwidth]{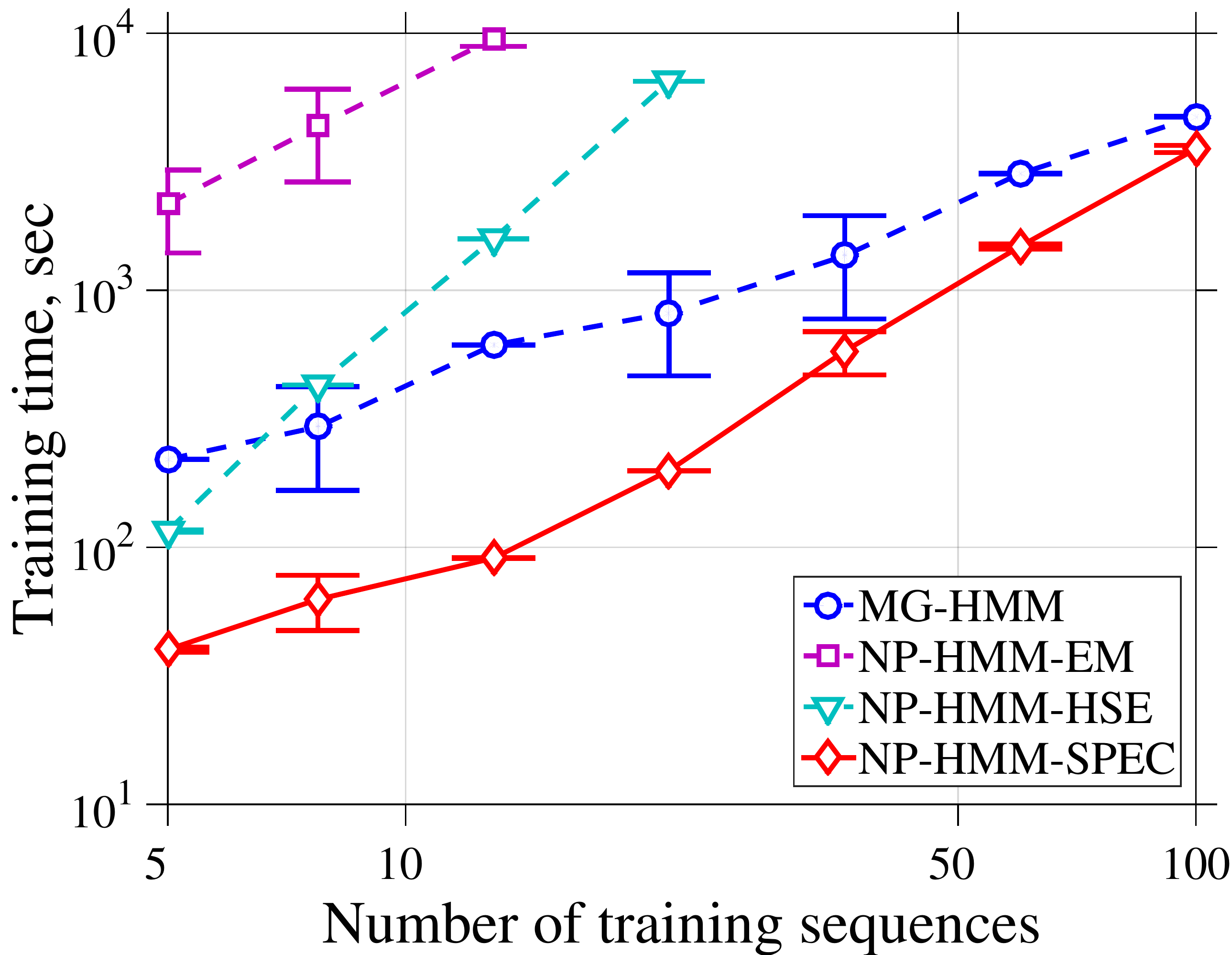}
      \vspace{-0.1in}
      \caption{\small 
      \label{fig:toyL1}
      The upper and lower panels correspond to $m=4$ $m=8$ respectively.
  All figures are in log-log scale and 
  the x-axis is the number of triples used for training. 
      \textbf{Left:} $L_1$ error between true conditional density
  $p(x_{6}| x_{1:5})$, and the estimate for each method.
      \textbf{Middle:} The absolute error between the true observation and a
  one-step-ahead prediction. The error of the true model is denoted by a black dashed
  line.
      \textbf{Right:} Training time.
}
      \vspace{-0.20in}
  \end{figure}
}

\newcommand{\insertToyPredictions}{
  \begin{figure}[!t]
      \centering
      \includegraphics[width=0.245\columnwidth]{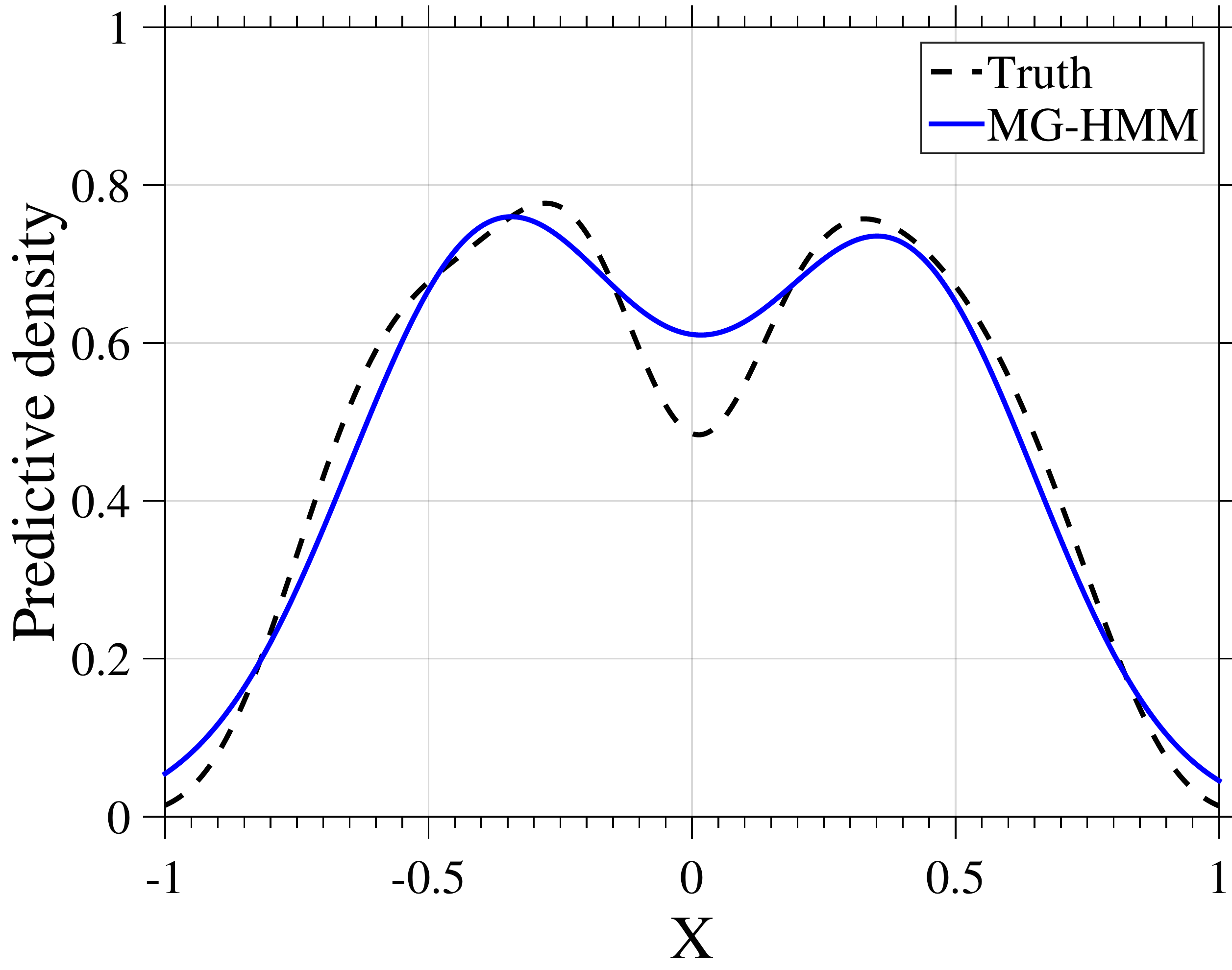}
      \includegraphics[width=0.245\columnwidth]{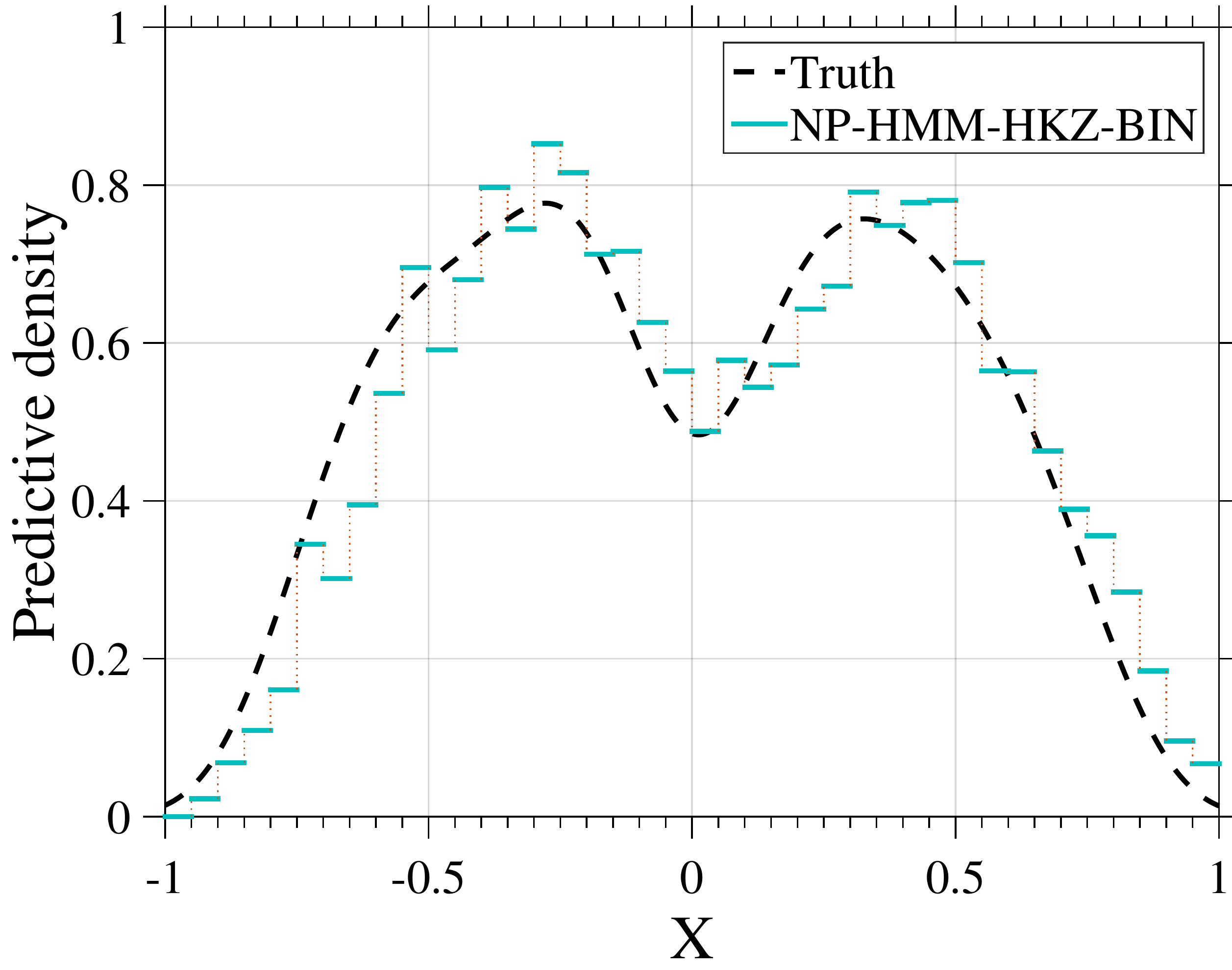}
      \includegraphics[width=0.245\columnwidth]{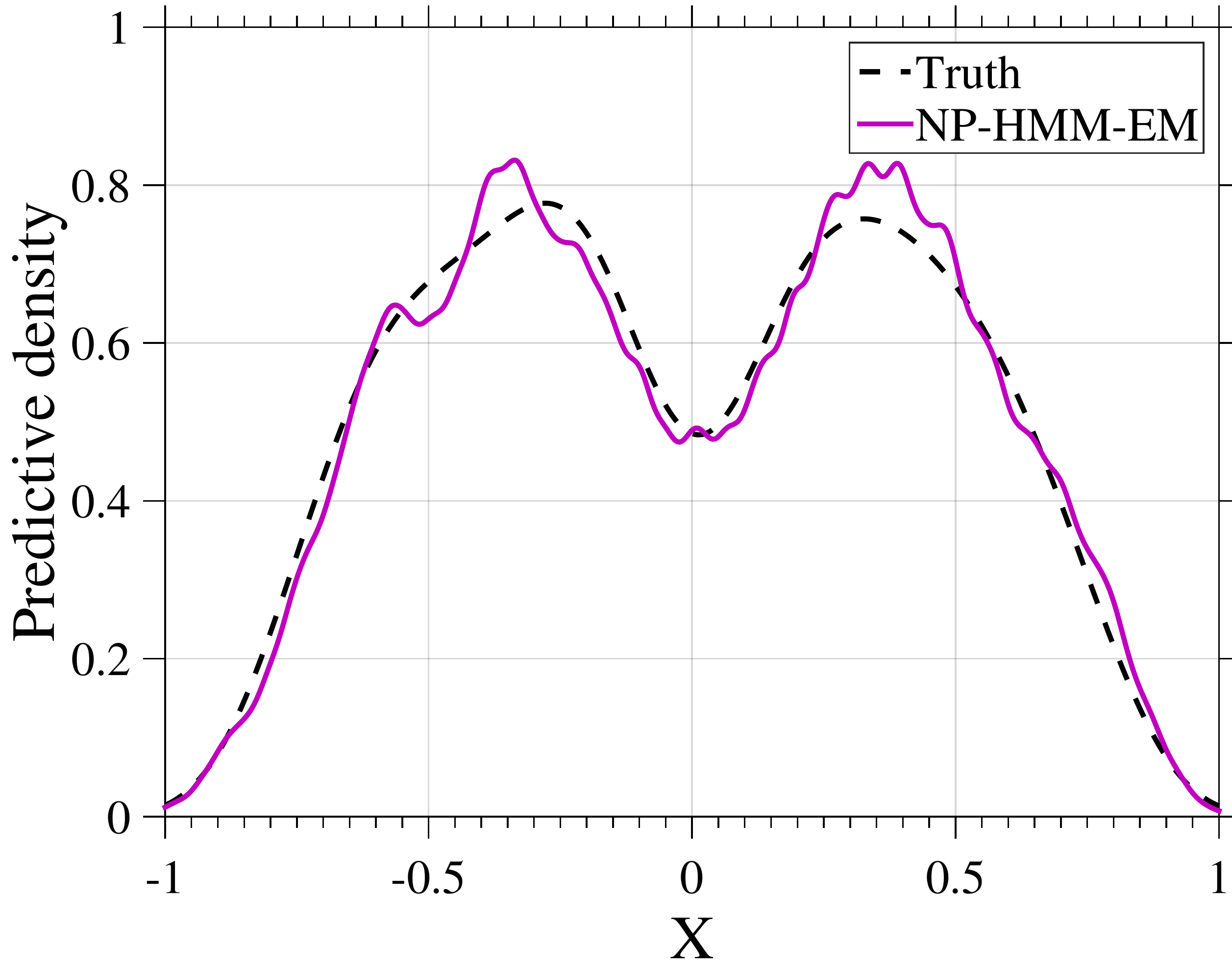}
      \includegraphics[width=0.245\columnwidth]{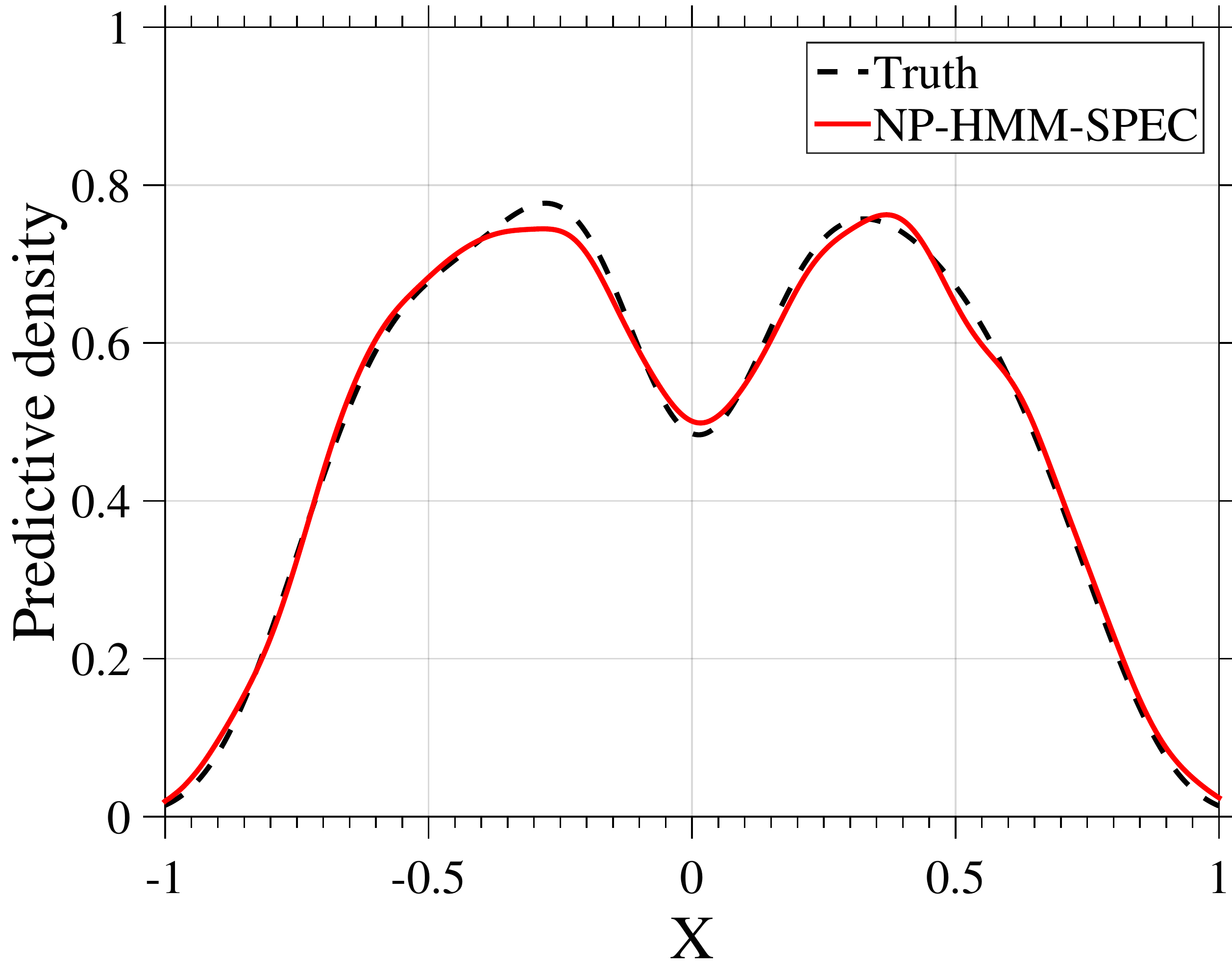}
      \vspace{-0.25in}
      \caption{\small
      \label{fig:toyPredProb}
      True and estimated one step ahead densities $p(x_4|x_{1:3})$ 
   for each model. Here $m=4$ and $N=10^4$.
    }
      \vspace{-0.05in}
  \end{figure}
}

\newcommand{\insertToyPDFs}{
  \begin{figure}
      \centering
      \includegraphics[width=0.45\columnwidth]{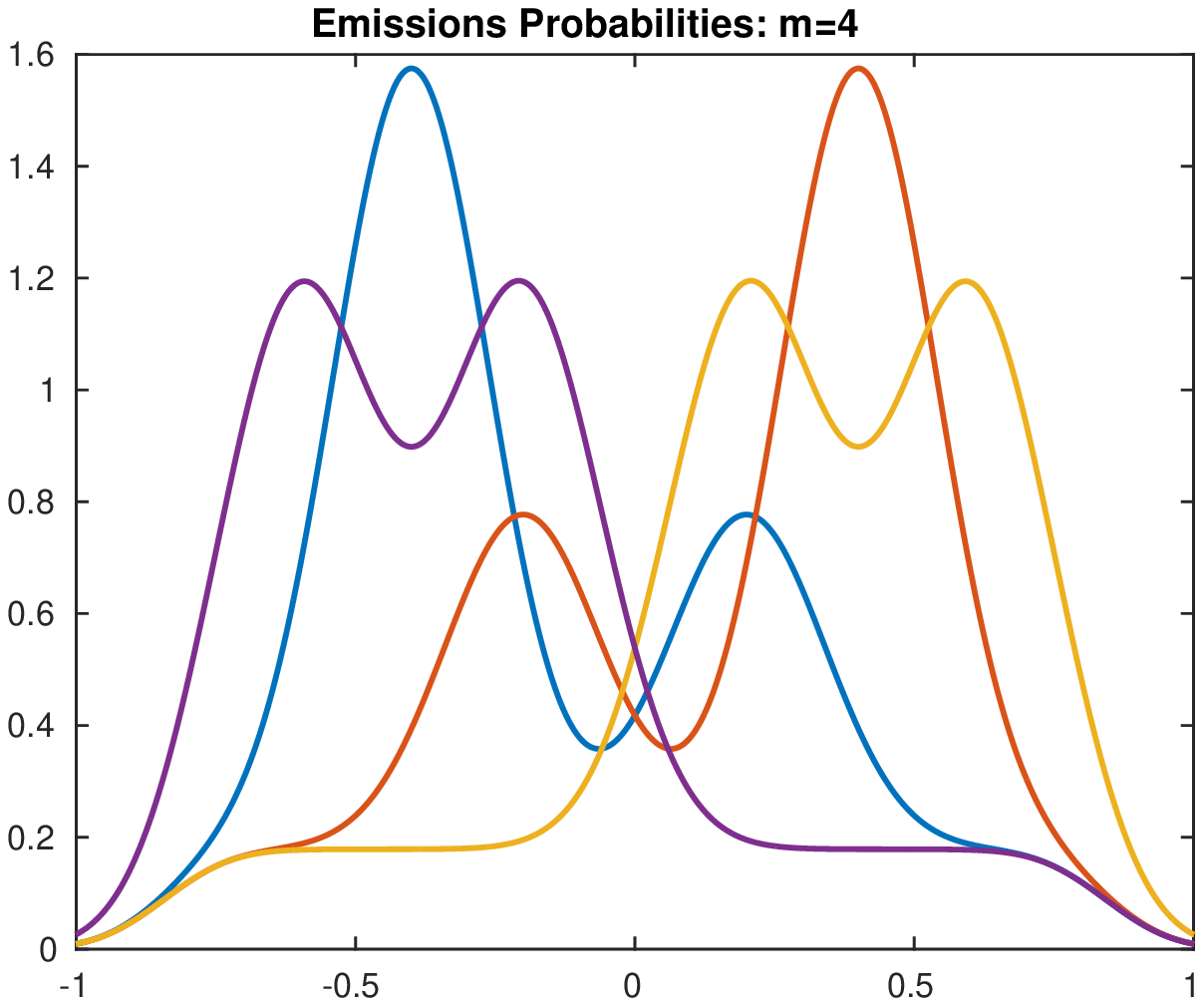} \hspace{0.2in} 
      \includegraphics[width=0.45\columnwidth]{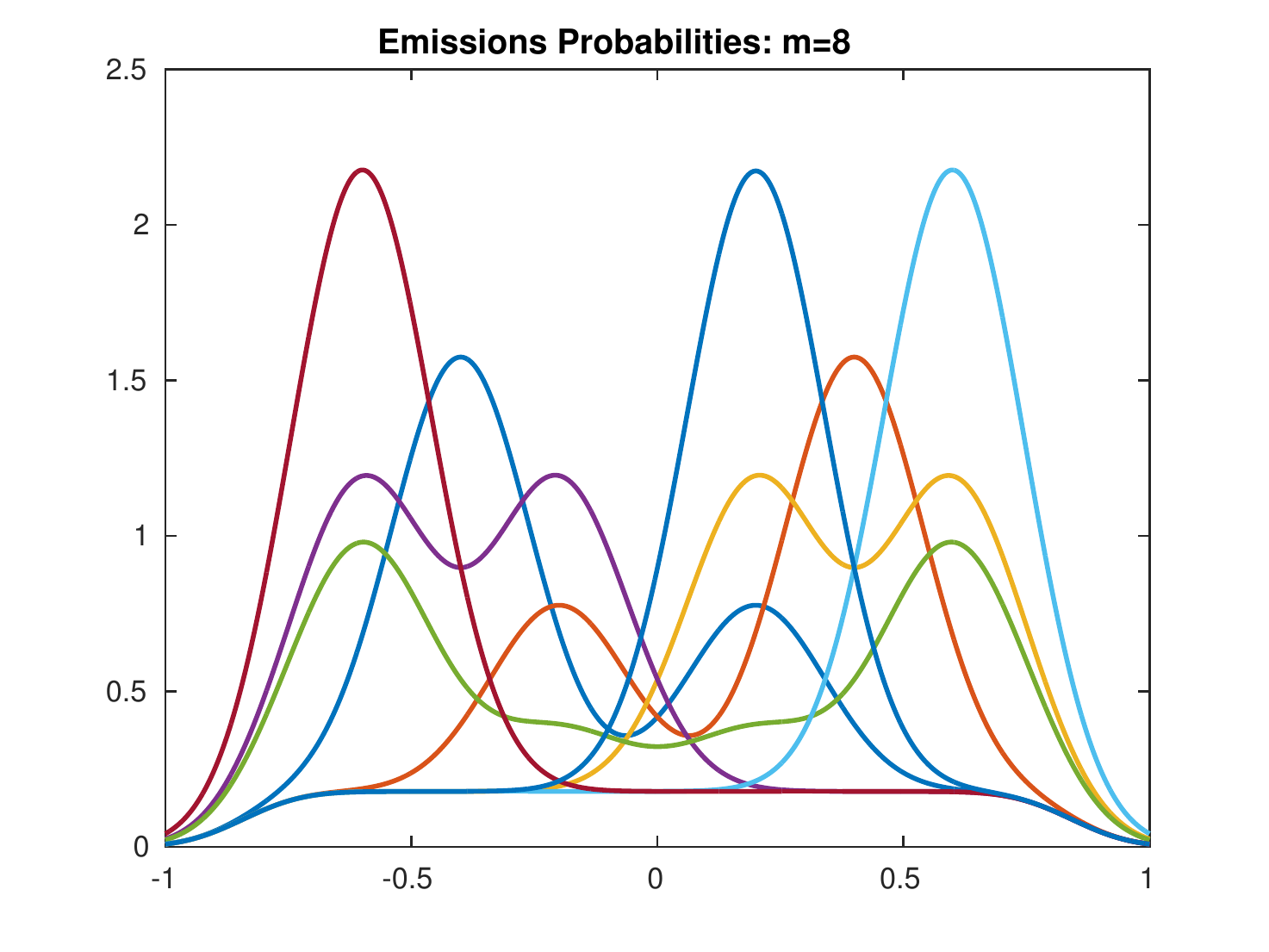}
  \caption{
  \label{fig:emissions}
  An illustration of the nonparametric emission probabilities used in our
  experiments.
  }
  \end{figure}
}


\newcommand{\insertAlgo}{
\begin{algorithm}
\textbf{Input: } Data $\{\Xj = (\Xjone, \Xjtwo, \Xjthr)\}_{j=1}^N$, number of states
$m$.
\begin{itemize}[leftmargin=0.25in,itemsep=0ex]
\item Obtain estimates $\Pohat, \Ptohat, \Pttohat$ for $\Po, \Pto, \Ptto$ via kernel
density estimation~\eqref{eqn:kde}.
\item Compute the cmatrix SVD of $\Ptohat$. Let $\Uhat\in\RRm{[0,1]}{m}$ be the
first $m$ left singular vectors of $\Ptohat$.
\item Compute the parameters observable representation. Note that $\Bhat$ is a
$\RRm{m}{m}$ valued function.
\[
\bonehat = \Uhat^\top \Pohat, \hspace{0.5in}
\binfhat = (\Pto^\top \Uhat)^\dagger \Pohat , \hspace{0.5in}
\Bxhat = (\Uhat^\top \Ptxohat)(\Uhat^\top \Ptohat)^\dagger
\]
\end{itemize}
\caption{\hspace{0.05in}\npspechmm \label{alg:npspechmm}}
\vspace{-1ex}
\end{algorithm}
}

\newcommand{\insertRealResults}{
\begin{table}
\begin{center}
\small
\begin{tabular}{l|c|c|c|c}
\toprule
Dataset & \mghmm & \npbin & \nphse & \npspechmm \\
\midrule
Internet Traffic  & $0.143 \pm 0.001$ & $0.188 \pm 0.004$ & $0.0282 \pm 0.0003$
  & $\bf 0.016 \pm 0.0002$ \\
Laser Gen & $0.33 \pm 0.018$ & $0.31 \pm 0.017$ & $0.19\pm 0.012$ &
  $\bf 0.15 \pm 0.018$ \\
Patient Sleep & $0.330\pm 0.002$ & $0.38\pm 0.011$ & $\bf 0.197\pm 0.001$ &
  $0.225 \pm 0.001$\\
\bottomrule
\end{tabular}
\end{center}
\caption{\small
\label{tb:real}
The mean prediction error and the standard error on the $3$ real datasets.}
\vspace{-0.2in}
\end{table}
}


\vspace{-2.5ex}
\begin{abstract}
Recently, there has been a surge of interest in using spectral methods for
estimating latent variable models. However, it is usually assumed that the
distribution of the observations conditioned on the latent variables is either
discrete or belongs to a parametric family. In this paper, we study the
estimation of an $m$-state hidden Markov model (HMM) with only smoothness
assumptions, such as H\"olderian conditions, on the emission densities. By
leveraging some recent advances in continuous linear algebra and numerical
analysis, we develop a computationally efficient spectral algorithm for
learning nonparametric HMMs. Our technique is based on computing
an SVD on nonparametric estimates of density functions by viewing them as
\emph{continuous matrices}.
We derive sample complexity bounds via concentration results for nonparametric
density estimation and novel perturbation theory results for continuous
matrices. We implement our method using Chebyshev polynomial
approximations. Our method is competitive with other baselines on
synthetic and real problems and is also very computationally efficient.
\end{abstract}
\vspace{-2.5ex}


\section{Introduction}
\label{sec:intro}

Hidden Markov models (HMMs)~\citep{rabiner89hmm} are one of the most popular
statistical models for analyzing time series data in various application
domains such as speech recognition, medicine, and meteorology. In an HMM, a
discrete hidden state undergoes Markovian transitions from one of $m$ possible
states to another at each time step. If the hidden state at time $t$ is $\hht$,
we observe a random variable $\xt\in\Xcal$ drawn from an
\emph{emission} distribution, $\Obsj = \PP(\xt|\hht = j)$. In its most
basic form $\Xcal$ is a discrete set and $\Obsj$ are discrete distributions.
When dealing with
continuous observations, it is conventional to assume that the emissions $\Obsj$
belong to a parametric class of distributions, such as Gaussian.


Recently, spectral methods for estimating parametric latent variable models
have gained immense popularity as a viable alternative to the Expectation
Maximisation (EM) procedure~\citep{hsu09hmm,anandkumar12mom,siddiqi10rrhmm}.
At a high level, these methods estimate higher order moments from the data and
recover the parameters via a series of matrix operations such as singular value
decompositions, matrix multiplications and pseudo-inverses of the moments. In
the case of discrete HMMs~\cite{hsu09hmm}, these moments correspond exactly to
the joint probabilities of the observations in the sequence.

Assuming parametric forms for the emission densities is often too restrictive
since real world distributions can be arbitrary.
Parametric models may introduce incongruous biases that cannot be
reduced even with large datasets. To address this problem, we study \emph{
nonparametric} HMMs only assuming some mild smoothness conditions on the emission
densities. We design a spectral algorithm for this setting. Our methods
leverage some recent advances in continuous linear algebra~
\cite{townsend15continuous,townsend14computing} which views two-dimensional
functions as continuous analogues of matrices.
Chebyshev polynomial approximations enable efficient computation of algebraic
operations on these continuous objects~\citep{driscoll2014chebfun,
townsend13chebfun}. Using these ideas, we extend existing spectral methods
for discrete HMMs to the continuous nonparametric setting.
Our main contributions are:

\begin{enumerate}[leftmargin=1.5em, itemsep=0ex]
\item We derive a spectral learning algorithm for HMMs with nonparametric
emission densities.
While the algorithm is similar to previous spectral methods for estimating models
with a finite number of parameters,  many of the ideas used to
generalise it to the nonparametric setting are novel, and, to the best of our
knowledge, have not been used before in the machine learning literature.
\item We establish sample complexity bounds for our method.
For this, we derive concentration results for nonparametric density
estimation and novel perturbation theory results for the aforementioned
continuous matrices. The perturbation results are new
and might be of independent interest.
\item
We implement our algorithm by approximating the density estimates via Chebyshev
polynomials which enables efficient computation of many of the continuous
matrix operations. Our method outperforms natural competitors in this setting
on synthetic and real data and is computationally more efficient
than most of them. Our Matlab code is available at
{\small\texttt{\href{https://github.com/alshedivat/nphmm}{github.com/alshedivat/nphmm}}}.
\end{enumerate}

While we focus on HMMs in this exposition, we believe that the ideas presented
in this paper can be easily generalised to estimating other latent variable
models and predictive state representations~\cite{littman2001predictive} with
nonparametric observations using approaches developed
by~\citet{anandkumar12mom}.

\textbf{Related Work:}
Parametric HMMs are usually estimated using maximum likelihood principle via
EM techniques~\cite{dempster77em} such as the Baum-Welch
procedure~\citep{welch2003baumwelch}.
However, EM is a local search technique, and optimization of the likelihood may
be difficult.
Hence, recent work on spectral methods has gained appeal. Our work builds
on~\citet{hsu09hmm} who showed that discrete HMMs can be learned efficiently,
under certain conditions. The key idea is that any
HMM can be completely characterised in terms of quantities that depend entirely
on the observations, called the \emph{observable representation}, which can be
estimated from data. \citet{siddiqi10rrhmm} show that the same algorithm works
under slightly more general assumptions. \citet{anandkumar12mom} proposed a
spectral algorithm for estimating more general latent variable models with
parametric observations via a moment matching technique.

That said, there has been little work on estimating latent variable models,
including HMMs, when the observations are \emph{nonparametric}. A commonly used
heuristic is the nonparametric EM~\citep{benaglia2009npem} which lacks
theoretical underpinnings.
This should not be surprising because EM is a maximum likelihood procedure and,
for most nonparametric problems, the maximum likelihood estimate is
degenerate~\citep{wasserman06nonparametric}. In their work,
\citet{siddiqi10rrhmm} proposed a heuristic based on kernel smoothing,
with no theoretical justification, to modify the discrete algorithm for
continuous observations. Further, their procedure cannot be used to recover the
joint or conditional probabilities of a sequence which would be needed to
compute probabilities of events and other inference tasks.

\citet{song2010hilbert,song14multiview} developed an RKHS-based procedure for
estimating the Hilbert space embedding of an HMM. While they provide
theoretical guarantees, their bounds are in terms of the RKHS distance of the
true and estimated embeddings. This metric depends on the choice of the kernel
and it is not clear how it translates to a suitable distance measure on the
observation space such as an $\Lone$ or $\Ltwo$ distance.
While their method can be used for prediction and
pairwise testing, it cannot recover the joint and conditional densities.
On the contrary, our model provides guarantees in terms of the more
interpretable total variation distance and is able to recover the joint and conditional probabilities.



\section{A Pint-sized Review of Continuous Linear Algebra}
\label{sec:cla}
\vspace{-0.05in}

We begin with a pint-sized review on continuous linear algebra which
treats functions as continuous analogues of matrices.
Appendix~\ref{app:cla} contains a quart-sized
review. Both sections are based on~\citep{townsend14computing,townsend15continuous}.
While these objects can be viewed as operators on Hilbert spaces
which have been studied extensively in the years, the above line of work
simplified and specialised the ideas to functions.

A \emph{matrix} $F\in\RR^{m\times n}$ is an $m\times n$ array of numbers where $F(i,j)$
denotes the entry in row $i$, column $j$.
 $m$ or $n$ could be (countably) infinite.
A \emph{column qmatrix} (quasi-matrix) $Q\in\RR^{[a,b]\times m}$ is a collection of
$m$ functions defined on $[a,b]$ where the row index is continuous and column index is
discrete.
Writing $Q = [q_1,\dots, q_m]$ where $q_j:[a,b]\rightarrow \RR$ is the $j$\ssth
function, $Q(y,j) = q_j(y)$ denotes the  value of the $j$\ssth function at $y\in[a,b]$.
$Q^\top\in\RR^{m\times [a,b]}$ denotes a row qmatrix with $Q^\top(j,y) = Q(y,j)$.
A \emph{cmatrix} (continuous-matrix) $C\in\RRm{[a,b]}{[c,d]}$ is a two
dimensional function where both row and column indices are continuous and
$C(y,x)$ is the value of the function at $(y,x) \in [a,b]\times[c,d]$.
$C^\top \in \RRm{[c,d]}{[a,b]}$ denotes its transpose with
$C^\top(x,y) = C(y,x)$.
Qmatrices and cmatrices permit all matrix multiplications with suitably defined
inner products. For example, if
$R\in \RRm{[c,d]}{m}$ and $C\in\RRm{[a,b]}{[c,d]}$, then
$CR = T\in\RRm{[a,b]}{m}$ where
$T(y,j) = \int_c^d C(y,s)R(s,j)\ud s$.

A cmatrix has a singular value decomposition (SVD).
If $C\in\RRm{[a,b]}{[c,d]}$, it decomposes as an infinite sum,
$C(y,x) = \sum_{j=1}^\infty \sigma_j u_j(y)v_j(x)$,
that converges in $\Ltwo$.
Here $\sigma_1\geq \sigma_2\geq \dots \geq 0$ are the singular values of $C$.
$\{u_j\}_{j\geq 1}$ and $\{v_j\}_{j\geq 1}$ are functions that form orthonormal
bases for $\Ltwo([a,b])$ and $\Ltwo([c,d])$, respectively.
We can write the SVD as $C = U\Sigma V^\top$ by writing the singular vectors as
infinite qmatrices $U = [u_1, u_2\dots ], V=[v_1,v_2\dots]$, and
$\Sigma = \diag(\sigma_1,\sigma_2\dots )$.
If only $m < \infty$ first singular values are nonzero, we say that $C$ is of
rank $m$. The SVD of a qmatrix $Q\in\RRm{[a,b]}{m}$ is, $Q = U \Sigma V^\top$
where $U\in \RRm{[a,b]}{m}$ and $V\in\RRm{m}{m}$ have orthonormal columns
and $\Sigma = \diag(\sigma_1,\dots,\sigma_m)$ with
$\sigma_1\geq \dots \geq \sigma_m \geq 0$.
The rank of a column qmatrix is the number of linearly independent columns
(i.e. functions) and is equal to the number of nonzero singular values.
Finally, as for the finite matrices, the pseudo inverse of the cmatrix $C$ is
$C^\dagger = V \Sigma^{-1}U^\top$ with
$\Sigma^{-1} = \diag( 1/\sigma_1, 1/\sigma_2, \dots )$.
The pseudo inverse of a qmatrix is defined similarly.

\section{Nonparametric HMMs and the Observable Representation}
\label{sec:hmmintro}

\textbf{Notation: }
Throughout this manuscript, we will use $\PP$ to denote probabilities of
events while $p$ will denote probability density functions (pdf).
An HMM characterises a probability distribution over a sequence of hidden
states $\{\hht\}_{t\geq 0}$ and observations $\{\xt\}_{t\geq 0}$.
At a given time step, the HMM can be in one of $m$ hidden states, i.e. $\hht
\in [m] = \{1,\dots,m\}$, and the observation is in some bounded continuous domain $\Xcal$. Without loss of generality, we take\footnote{
We discuss the case of higher dimensions in Section~\ref{sec:conclusion}.} $\Xcal=[0,1]$.
The nonparametric HMM will be completely characterised by the initial state
distribution $\pi\in\RRv{m}$, the state transition matrix $T\in\RRm{m}{m}$ and
the emission densities $\Obsj:\Xcal\rightarrow\RR, j\in[m]$. $\pi_i = \PP(
\hhone=i)$ is the probability that the HMM would be in state $i$ at the
first time step. The element $T(i,j) = \PP(\hhtpo=i|\hht=j)$ of $T$ gives the probability that a hidden state transitions from state $j$ to state $i$.
The emission function, $\Obsj:\Xcal\rightarrow \RR_+$, describes the
pdf of the observation conditioned on the hidden state $j$,
i.e. $\Obsj(s) = p(\xt = s|\hht = j)$.
Note that we have $\Obsj(x)>0,\,\forall x$ and $\int \Obsj(\cdot) = 1$ for all
$j\in[m]$. In this exposition, we denote the emission densities by the
qmatrix, $\Obs = [\Obsjj{1},\dots, \Obsjj{m}] \in \RR_+^{[0,1]\times m}$.

In addition, let $\Obsdiag(x) = \diag( \Obsjj{1}(x), \dots, \Obsjj{m}(x))$, and
$A(x) = T\Obsdiag(x)$. Let $\xotot = \{\xone, \dots, \xt\}$ be an ordered
sequence and $\xttoo = \{\xt, \dots, \xtt{1}\}$ denote its reverse. For
brevity, we will overload notation for $A$ for sequences and write $A(\xttoo) =
A(\xt)A(\xtt{t-1}) \dots A(\xtt{1})$.
It is well known~\citep{jaeger00observable,hsu09hmm} that the joint
probability density of the sequence $\xotot$ can be computed via
$p(\xotot) = \onem^\top A(\xttoo) \pi$.

\textbf{Key structural assumption:}
Previous work on estimating HMMs with continuous observations typically assumed
that the emissions, $\Obsj$, take a parametric form, e.g. Gaussian.
Unlike them, we only make mild nonparametric smoothness assumptions on
$\Obsj$. As we will see, to estimate the HMM well in this problem we will need to
estimate entire pdfs well.
For this reason, the nonparametric setting is significantly more
difficult than its parametric counterpart as the latter requires estimating
only a finite
number of parameters. When compared to the previous literature,
this is the crucial distinction and the main challenge in this work.

\textbf{Observable Representation:}
The observable representation is a description of an HMM in terms of quantities
that depend on the observations~\citep{jaeger00observable}. This representation
is useful for two reasons: (i) it depends only on the observations and can
be directly estimated from the data; (ii) it can be used to compute joint and
conditional probabilities of sequences even without the knowledge of $T$ and
$\Obs$ and therefore can be used for inference and prediction.
First, we define the joint densities, $\Po, \Pto, \Ptto$:
\begin{align*}
\Po(t) = p(\xone=t), \quad
\Pto(s,t) = p(\xtwo=s, \xone=t), \quad
\Ptto(r,s,t) = p(\xthree=r,\xtwo=s,\xone=t),
\end{align*}
where $x_i$, $i=1,2,3$ denotes the observation at time $i$.
Denote $\Ptxo(r,t) = \Ptto(r, x, t)$ for all
$x$.
We will find it useful to view both $\Pto, \Ptxo \in
\RRm{[0,1]}{[0,1]}$ as cmatrices. We will also need
an additional qmatrix $U\in \RRm{[0,1]}{m}$ such that $U^\top \Obs \in
\RRm{m}{m}$ is invertible. Given one such  $U$, the observable representation of
an HMM is described by the parameters $\bone, \binf\in \RR^m$ and
$B:[0,1]\rightarrow \RRm{m}{m}$,
\begin{align*}
\bone = U^\top \Po, \qquad
\binf = (\Pto^\top U)^\dagger \Po, \qquad
B(x) = (U^\top \Ptxo)(U^\top \Pto)^\dagger
\numberthis \label{eqn:obsRepr}
\end{align*} As before, for a sequence,  $\xttoo = \{\xt, \dots, \xtt{1}\}$, we
define $B(\xttoo) = B(\xt)B(\xtt{t-1}) \dots B(\xtt{1})$. The following lemma
shows that the first $m$ left singular vectors of $\Pto$ are a natural choice
for $U$.
\begin{lemma} Let $\pi > 0$, $T$ and $\Obs$ be of rank $m$ and $U$ be the
qmatrix composed of the first $m$ left singular vectors of $\Pto$. Then $U^\top
\Obs$ is invertible.
\label{lem:UPtO}
\end{lemma}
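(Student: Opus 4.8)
The plan is to exploit the factorization of the pairwise joint density $\Pto$ in terms of the model parameters, and then to argue that $U$ and $\Obs$ share the same $m$-dimensional column space. First I would expand the joint density by marginalizing over the two hidden states,
\[
\Pto(s,t) = \sum_{i,j} \Obsjj{i}(s)\, T(i,j)\, \pi_j\, \Obsjj{j}(t),
\]
which in cmatrix notation reads $\Pto = \Obs\,(T\diag(\pi))\,\Obs^\top$. Since $\pi > 0$, the matrix $\diag(\pi)\in\RRm{m}{m}$ is invertible, and since $T$ has rank $m$ it too is invertible; hence $M := T\diag(\pi)$ is an invertible $m\times m$ matrix.

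Next I would identify the range (column space) of $\Pto$ viewed as an operator on $\Ltwo([0,1])$. Because the $m$ columns of $\Obs$ are linearly independent, $\Obs^\top$ maps $\Ltwo([0,1])$ onto $\RRv{m}$; composing with the bijection $M$ and then with $\Obs$ shows that $\mathrm{range}(\Pto) = \mathrm{range}(\Obs) = \mathrm{span}(\Obsjj{1},\dots,\Obsjj{m})$, an $m$-dimensional subspace $\mathcal{S}$ of $\Ltwo([0,1])$. In particular $\Pto$ has rank exactly $m$, so its first $m$ singular values are strictly positive, and by the cmatrix SVD reviewed in Section~\ref{sec:cla} the columns $u_1,\dots,u_m$ of $U$ form an orthonormal basis for $\mathrm{range}(\Pto) = \mathcal{S}$.

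Finally I would prove invertibility of $U^\top\Obs\in\RRm{m}{m}$ by showing its kernel is trivial. Suppose $U^\top\Obs v = 0$ for some $v\in\RRv{m}$, and set $g = \Obs v = \sum_j v_j\Obsjj{j}\in\mathcal{S}$. The condition says $\langle u_i, g\rangle = 0$ for every $i$, i.e. $g$ is orthogonal to $\mathrm{span}(u_1,\dots,u_m) = \mathcal{S}$; but $g\in\mathcal{S}$, forcing $g=0$. Linear independence of the columns of $\Obs$ (rank $m$) then gives $v=0$, so $U^\top\Obs$ is invertible.

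The main obstacle is less the algebra than the care required in the continuous setting: I must invoke the fact that a rank-$m$ cmatrix behaves exactly like a finite-rank matrix — it has precisely $m$ nonzero singular values and its first $m$ left singular vectors span its range — and that range, orthogonality, and linear independence of the functions $\Obsjj{j}$ transfer cleanly from $\RR^n$ to $\Ltwo([0,1])$. These are supplied by the continuous linear algebra framework, but the factorization $\Pto = \Obs M \Obs^\top$ with $M$ invertible is what makes the column-space identification go through, so that is the step I would set up most carefully.
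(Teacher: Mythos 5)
Your proposal is correct and takes essentially the same route as the paper, whose proof of Lemma~\ref{lem:UPtO} simply mimics Lemma~2 of \citet{hsu09hmm}: that argument likewise rests on the factorization $\Pto = \Obs\,T\diag(\pi)\,\Obs^\top$ and the resulting identification of $\mathrm{span}(u_1,\dots,u_m)$ with the range of $\Obs$. Your trivial-kernel step for the invertibility of $U^\top\Obs$ is only a cosmetic variation on the projection/rank argument used there, so the two proofs are the same in substance.
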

To compute the joint and conditional probabilities using the observable
representation, we maintain an \emph{internal state}, $\bt$, which is
updated as we see more observations. The internal state at time $t$ is
\begin{align}
\bt = \frac{B(\xtt{t-1:1}) \bone}{ \binf^\top B(\xtt{t-1:1})\bone}.
\label{eqn:internalState}
\end{align}
This definition of $\bt$ is consistent with $\bone$.
The following lemma establishes the relationship between the observable
representation and the internal states to the HMM parameters
and probabilities.
\insertprespacing
\begin{lemma}[Properties of the Observable Representation]
Let $\rank(T) = \rank(\Obs) = m$ and $U^\top \Obs$ be invertible.
Let $p(\xotot)$ denote the joint density of a sequence $\xotot$ and
$p(\xtt{t+1:t+t'}|\xotot)$ denote the conditional density of
$\xtt{t+1:t+t'}$ given $\xotot$ in a sequence $\xtt{1:t+t'}$.
Then the following are true.
\vspace{-0.15in}
\begin{center}
\parbox[t]{2.7in}{\raggedright%
\begin{enumerate}[topsep=0pt,itemsep=-2pt,leftmargin=13pt]
\item $\bone = \UtO \pi$
\item $\binf = \onem^\top \UtOinv$
\item $B(x) = (\UtO)A(x)\UtOinv\;\;\forall\,x\in[0,1]$.
\end{enumerate}
}%
\hspace{0.2in}
\parbox[t]{2.4in}{\raggedright%
\begin{enumerate}[topsep=0pt,itemsep=-2pt,leftmargin=13pt]
\setcounter{enumi}{3}
\item $\btpo = B(\xt)\bt / (\binf^\top B(\xt) \bt)$.
\item $p(\xotot)  = \binf^\top B(x_{t:1}) \bone$.
\item $p(\xtt{t+t':t+1}|\xotot) =  \binf^\top B(\xtt{t+t':t+1}) \bt$.
\end{enumerate}
}
\end{center}
\vspace{-0.07in}
\label{lem:obsProperties}
\end{lemma}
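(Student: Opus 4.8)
The plan is to reduce all six claims to finite-dimensional linear algebra. First I would write the three observable moments in factored form by marginalising over the hidden states: one checks that $\Po = \Obs\pi$ (viewed as a column qmatrix), $\Pto = \Obs\,T\diag(\pi)\,\Obs^\top$, and $\Ptxo = \Obs\,A(x)\,T\diag(\pi)\,\Obs^\top$, using $A(x) = T\Obsdiag(x)$ and the identity $\sum_i T(k,i)\Obsjj{i}(x)T(i,j) = (A(x)T)(k,j)$. The key structural observation is that, although $\Pto$ and $\Ptxo$ are cmatrices, every factor other than $\Obs$ and $\Obs^\top$ is an ordinary $m\times m$ matrix, so all operators have rank at most $m$ and the pseudo-inverses act on finite-dimensional range spaces; the computation is then essentially the one for discrete HMMs.

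Next I would establish the three algebraic identities. Claim (1) is immediate: $\bone = U^\top\Po = (U^\top\Obs)\pi = \UtO\,\pi$. For (3), substituting the factored forms gives $U^\top\Ptxo = (\UtO)A(x)T\diag(\pi)\Obs^\top$ and $U^\top\Pto = M\Obs^\top$ with $M = (\UtO)T\diag(\pi)$; here $M$ is invertible because $\UtO$ is invertible by hypothesis, $\rank(T)=m$, and $\pi>0$. Since $\Obs^\top$ has full row rank $m$, the reverse-order law yields $(U^\top\Pto)^\dagger = (\Obs^\top)^\dagger M^{-1}$, and multiplying while using $\Obs^\top(\Obs^\top)^\dagger = I_m$ makes the $T\diag(\pi)$ factors cancel, leaving $B(x) = (\UtO)A(x)\UtOinv$. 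Claim (2) is analogous: $\Pto^\top U = \Obs\,N$ with $N = \diag(\pi)T^\top(\UtO)^\top$ invertible and $\Obs$ of full column rank, so $(\Pto^\top U)^\dagger = N^{-1}\Obs^\dagger$ and $\Obs^\dagger\Obs = I_m$ give $\binf = N^{-1}\pi$; expanding $N^{-1}$, using $\diag(\pi)^{-1}\pi = \onem$ and the column-stochasticity $\onem^\top T = \onem^\top$ (whence $(T^\top)^{-1}\onem = \onem$), gives $\binf^\top = \onem^\top\UtOinv$.

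The probabilistic claims then follow. For (5), iterating (3) along a sequence lets the interior $\UtOinv(\UtO)$ pairs collapse, so $B(x_{t:1}) = (\UtO)A(x_{t:1})\UtOinv$; combining this with the known $p(\xotot) = \onem^\top A(x_{t:1})\pi$ and the relations $\UtOinv\bone = \pi$ and $\binf^\top(\UtO) = \onem^\top$ extracted from (1) and (2) gives $p(\xotot) = \binf^\top B(x_{t:1})\bone$. Claim (4) is pure algebra: substituting the definition of $\bt$ into $B(\xt)\bt/(\binf^\top B(\xt)\bt)$, the scalar $\binf^\top B(x_{t-1:1})\bone$ cancels between numerator and denominator and $B(\xt)B(x_{t-1:1}) = B(x_{t:1})$ reproduces $\btpo$. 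For (6), I would write the conditional as the ratio of the two joint densities supplied by (5), factor $B(x_{t+t':1}) = B(x_{t+t':t+1})B(x_{t:1})$, and cancel the common denominator; the remaining factor that summarises the conditioned prefix is, by \eqref{eqn:internalState}, exactly the normalised internal state $\bt$, yielding $\binf^\top B(x_{t+t':t+1})\bt$.

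The main obstacle is not any single computation but justifying the matrix-algebra steps — the reverse-order law, $\Obs^\dagger\Obs = I_m$, $\Obs^\top(\Obs^\top)^\dagger = I_m$, and the very existence of the pseudo-inverses — in the qmatrix/cmatrix setting rather than for ordinary matrices. I would discharge these using the continuous SVD from Section~\ref{sec:cla}: writing a thin SVD $\Obs = W\Sigma Z^\top$ with $W\in\RRm{[0,1]}{m}$ and $Z\in\RRm{m}{m}$, each identity reduces to the finite relations $W^\top W = Z^\top Z = I_m$, so no genuinely infinite-dimensional difficulty arises. A secondary burden is careful bookkeeping of transposes, of the row-versus-column qmatrix conventions, and of the reversed index ordering in the products $A(\xttoo)$ and $B(\xttoo)$, which must be tracked consistently for the telescoping in (5) to go through.
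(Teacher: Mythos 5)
Your proposal is correct and takes essentially the same route as the paper, whose proof is deferred to Lemmas 2--4 of \citet{hsu09hmm}: factor the moments as $\Po = \Obs\pi$, $\Pto = \Obs\,T\diag(\pi)\,\Obs^\top$, $\Ptxo = \Obs\,A(x)\,T\diag(\pi)\,\Obs^\top$, reduce all pseudo-inverse manipulations to the discrete-case matrix algebra (justified in the continuous setting through the qmatrix SVD, exactly as you do), and obtain the probabilistic claims by telescoping. The only cosmetic difference is that you invoke the reverse-order law for pseudo-inverses where \citet{hsu09hmm} instead verify the candidate formulas against the defining equations and use the full-rank projection identity $(U^\top \Pto)(U^\top \Pto)^\dagger = I_m$; both arguments are sound and equivalent in substance.
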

The last two claims of the Lemma~\ref{lem:obsProperties} show that we can use
the observable representation for computing the joint and conditional densities.
The proofs of Lemmas~\ref{lem:UPtO} and~\ref{lem:obsProperties} are similar to
the discrete case and mimic Lemmas 2, 3 \& 4 of~\citet{hsu09hmm}.



\section{Spectral Learning of HMMs with Nonparametric Emissions}
\label{sec:method}

The high level idea of our algorithm, \npspechmm, is as follows.
First we will obtain density
estimates for $\Po,\Pto,\Ptto$ which will then be used to recover the observable
representation $\bone,\binf,B$ by plugging in the expressions in~\eqref{eqn:obsRepr}.
Lemma~\ref{lem:obsProperties} then gives us a way to estimate the joint and
conditional probability densities.
For now, we will assume that we have $N$ \iid sequences
of triples $\{\Xj\}_{j=1}^N$ where $\Xj = (\Xjone, \Xjtwo, \Xjthr)$ are the
observations at the first three time steps.
We describe learning from longer sequences in Section~\ref{sec:implementation}.

\subsection{Kernel Density Estimation}
\vspace{-0.1in}

The first step is the estimation of the joint probabilities which requires a
nonparametric density estimate.
While there are several techniques~\cite{tsybakov08nonparametric}, we use
kernel density estimation (KDE) since it is easy to analyse and works  well in practice.
The KDE for $\Po$, $\Pto$, and $\Ptto$ take the form:
\begin{align*}
    &\Pohat(t) = \frac{1}{N}\sum_{j=1}^N \frac{1}{\ho}
        \kernel\left(\frac{t - \Xjone}{\ho}\right), \hspace{0.3in}
    \Ptohat(s, t) =
      \frac{1}{N}\sum_{j=1}^N \frac{1}{\hto^2}
        \kernel\left(\frac{s - \Xjtwo}{\hto}\right)
        \kernel\left(\frac{t - \Xjone}{\hto}\right), \\
    &\Pttohat(r, s, t) =
    \frac{1}{N}\sum_{j=1}^N \frac{1}{\htto^3}
      \kernel\left(\frac{r - \Xjthr}{\htto}\right)
      \kernel\left(\frac{s - \Xjtwo}{\htto}\right)
      \kernel\left(\frac{t - \Xjone}{\htto}\right).
    \numberthis \label{eqn:kde}
\end{align*}
Here $\kernel:[0,1]\rightarrow\RR$ is a symmetric function called a smoothing kernel
and satisfies (at the very least)  $\int_0^1 \kernel(s)\ud s = 1$,  $\int_0^1 s
\kernel(s) \ud s = 0$. The parameters $\ho, \hto, \htto$ are the bandwidths, and are
typically decreasing with $N$.
In practice they are usually chosen via cross-validation.

\subsection{The Spectral Algorithm}
\vspace{-0.15in}

\insertAlgo

The algorithm, given above in Algorithm~\ref{alg:npspechmm}, follows the roadmap set
out at the beginning of this section.
While the last two steps are similar
to the discrete HMM algorithm of~\citet{hsu09hmm}, the SVD, pseudoinverses and
multiplications are with q/c-matrices.
Once we  have the estimates $\bonehat$, $\binfhat$, and $\Bxhat$
the joint and predictive (conditional) densities can be estimated via (see
Lemma~\ref{lem:obsProperties}):
\begin{align*}
\phat(\xotot)  = \binfhat^\top \Bhat(x_{t:1}) \bonehat, \hspace{0.4in}
\phat(\xtt{t+t':t+1}|\xotot) =  \binfhat^\top \Bhat(\xtt{t+t':t+1}) \bthat.
\numberthis \label{eqn:predProb}
\end{align*}
Here $\bthat$ is the estimated internal state obtained by plugging in
$\bonehat,\binfhat,\Bhat$ in~\eqref{eqn:internalState}.
Theoretically, these estimates can be negative in which case they can be
truncated to $0$ without affecting the theoretical results in
Section~\ref{sec:analysis}.
However, in our experiments these estimates were never negative.

\subsection{Implementation Details}
\label{sec:implementation}

\textbf{C/Q-Matrix operations using Chebyshev polynomials: }
While our algorithm and analysis are conceptually well founded, the important
practical challenge lies in the efficient computation of the many aforementioned
operations on c/q-matrices.
Fortunately, some very recent advances in the numerical analysis literature,
specifically on computing with Chebyshev polynomials, have rendered the above
algorithm practical~\citep[Ch.3-4]{townsend14computing}. Due to the space
constraints, we provide only a summary.
Chebyshev polynomials is a family of orthogonal polynomials on
compact intervals, known to be an excellent approximator of one-dimensional
functions~\citep{fox68chebyshev,trefethen12approximation}. A recent line of
work~\citep{townsend13chebfun,townsend15continuous} has
extended the Chebyshev technology to two dimensional functions enabling the
mentioned operations and factorisations such as QR, LU and
SVD~\citep[Sections 4.6-4.8]{townsend14computing}
 of continuous matrices to be carried efficiently.
The density estimates $\Pohat,\Ptohat,\Pttohat$
are approximated by Chebyshev polynomials to within machine precision.
Our implementation makes use of the Chebfun library~\cite{driscoll2014chebfun}
which provides an efficient implementation for the operations on continuous and quasi matrices.

\textbf{Computation time: }
Representing the KDE estimates $\Pohat, \Ptohat, \Pttohat$ using Chebfun
was roughly linear in $N$ and is the brunt of the computational effort.
The bandwidths for the three KDE estimates are chosen via cross validation which
takes $\bigO(N^2)$ effort. However, in practice the cost was dominated by the
Chebyshev polynomial approximation.
In our experiments we found that \npspechmms runs in linear time in practice
and was more efficient
than most alternatives.

\textbf{Training with longer sequences:}
When training with longer sequences we can use a sliding window of
length $3$ across the sequence to create the triples of observations needed for the
algorithm.
That is,  given $N$ samples each of length $\ell^{(j)}, j=1,\dots,N$, we create an
augmented dataset of triples
$ \{\,\{ (\Xj_{t}, \Xj_{t+1}, \Xj_{t+2}) \}_{t=1}^{\ell^{(j)}-2} \,\}_{j=1}^N$ and run
\npspechmms with the augmented data.
As is with conventional EM procedures,
this requires the additional assumption that the initial state is the stationary
distribution of the transition matrix $T$.


\section{Analysis}
\label{sec:analysis}
\vspace{-0.1in}

We now state our assumptions and main theoretical results.
Following~\citep{hsu09hmm,siddiqi10rrhmm,song2010hilbert} we
assume \iid sequences of triples are used for training. With longer sequences,
the analysis should only be modified to account for the mixing of the latent
state Markov chain, which is inessential for the main intuitions.
We begin with the following regularity condition on the HMM.

\begin{assumption}
$\pi > 0$ element-wise.  $T\in\RRm{m}{m}$ and $\Obs\in\RRm{[0,1]}{m}$ are of
rank $m$.
\label{asm:rankCondn}
\end{assumption}

The rank condition on $O$ means that emission pdfs are linearly independent.
If either $T$ or $O$ are rank deficient, then the learner may confuse state
outputs, which makes learning difficult\footnote{
\citet{siddiqi10rrhmm} show that the discrete spectral algorithm works under a
slightly more general setting. Similar
results hold for the nonparametric case too but will restrict ourselves to the
full rank setting for simplicity.}.
Next, while we make no parametric assumptions on the emissions,
some smoothness conditions are used to make density estimation tractable. We use the H\"older class, $\Hcal_1(\beta, L)$, which is standard in the
nonparametrics literature. For $\beta = 1$, this assumption reduces to $L$-Lipschitz continuity.

\begin{assumption}
All emission densities belong to the H\"older class, $\Hcal_1(\beta, L)$.
That is, they satisfy,
\begin{align*}
\textrm{for all}\; \alpha \leq \floor{\beta},\; j \in [m], \; s, t \in [0, 1]
\quad
\left|\frac{\ud^\alpha \Obsj(s)}{\ud s^\alpha} -
\frac{\ud^\alpha \Obsj(t)}{\ud t^\alpha}\right|
\,\leq \, L|s-t|^{\beta - |\alpha|} .
\end{align*}
Here $\floor{\beta}$ is the largest integer \textbf{strictly} less than $\beta$.
\label{asm:holder}
\end{assumption}

Under the above assumptions we bound the total variation distance between the true 
and the estimated
densities of a sequence, $\xotot$.
Let $\kappa(\Obs) = \sigma_1(\Obs)/\sigma_m(\Obs)$ denote the condition
number of the observation qmatrix.
The following theorem states our main result.

\begin{theorem}
Pick any sufficiently small $\epsilon > 0$ and a failure probability
$\delta \in (0,1)$.
Let $t\geq 1$. Assume that the HMM satisfies Assumptions~\ref{asm:rankCondn}
and~\ref{asm:holder} and the number of samples $N$ satisfies,
\begin{align*}
\frac{N}{\log(N)} \;\geq\;\;
C\, m^{1+\frac{3}{2\beta}}\frac{\kappa(\Obs)^{2+\frac{3}{\beta}}}{
\sigma_m(\Pto)^{4+\frac{4}{\beta}}}
  \left(\frac{t}{\epsilon}\right)^{2 + \frac{3}{\beta}}
\log\left(\frac{1}{\delta}\right)^{1+\frac{3}{2\beta}}.
\end{align*}
Then, with probability at least $1-\delta$, the estimated joint density for
a $t$-length sequence satisfies
$\int |p(\xotot) - \phat(\xotot)| \ud \xotot \leq \epsilon$.
Here, $C$ is a constant depending on $\beta$ and $L$ and
$\phat$ is from~\eqref{eqn:predProb}.
\label{thm:jointProbThm}
\end{theorem}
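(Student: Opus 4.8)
The plan is to reduce the total-variation guarantee to three ingredients: (i) concentration bounds on the kernel density estimates, (ii) perturbation bounds that carry those errors into the observable representation, and (iii) a telescoping argument that accumulates the per-step error across the length-$t$ sequence. By claim 5 of Lemma~\ref{lem:obsProperties} we have $p(\xotot)=\binf^\top B(x_{t:1})\bone$ and, identically, $\phat(\xotot)=\binfhat^\top \Bhat(x_{t:1})\bonehat$, so the whole theorem comes down to controlling $\|\bonehat-\bone\|$, $\|\binfhat-\binf\|$, and an integrated operator error $\int\|\Bhat(x)-B(x)\|\,\ud x$ in norms compatible with the $\Lone$/$\ell_1$ structure of total variation.

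First I would establish the concentration step. Under Assumption~\ref{asm:holder} the emissions, and hence the joint densities $\Po,\Pto,\Ptto$, lie in a H\"older class, so a standard bias--variance analysis of the KDEs in~\eqref{eqn:kde} (bias $\bigO(h^\beta)$, variance $\bigO(1/(Nh^d))$ with $d\in\{1,2,3\}$) gives, after optimizing the bandwidths, rates of order $N^{-\beta/(2\beta+d)}$ for $\Pohat,\Ptohat,\Pttohat$ in the relevant ($\Ltwo$/operator) norms. The three-dimensional estimate $\Pttohat$ (equivalently $\Ptxohat$) is the bottleneck: to drive its error below $\eta$ one needs $N/\log N \gtrsim \eta^{-(2+3/\beta)}$, which is precisely the source of the $(t/\epsilon)^{2+3/\beta}$ scaling, while an exponential tail bound for the KDE supplies the $\log(1/\delta)$ factor. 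This step is essentially classical nonparametrics; the only care needed is to phrase the guarantees as c/q-matrix norm bounds rather than pointwise ones.

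Next comes the perturbation step, which is where the continuous linear algebra is genuinely new. Because $\Uhat$ consists of the top-$m$ left singular vectors of $\Ptohat$, I would first invoke a Wedin-type theorem for cmatrices to bound the subspace error $\|\Uhat-U\|$ by the ratio of the estimation error of $\Ptohat$ to the spectral gap $\sigma_m(\Pto)$, using Lemma~\ref{lem:UPtO} to guarantee the target subspace is well defined and that $U^\top\Obs$ is invertible. Perturbation bounds for the cmatrix pseudoinverse (again controlled by $\sigma_m(\Pto)$) and for the products in~\eqref{eqn:obsRepr} then translate the density errors into bounds on $\bonehat,\binfhat,\Bxhat$. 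The conjugation $B(x)=(U^\top\Obs)A(x)(U^\top\Obs)^{-1}$ from claim 3 of Lemma~\ref{lem:obsProperties} shows that the natural basis in which to measure error is $U^\top\Obs$, whose conditioning is governed by $\kappa(\Obs)$; tracking how this conditioning enters the products is what produces the $\kappa(\Obs)$ and $\sigma_m(\Pto)$ powers in the final bound.

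Finally I would accumulate the error over the sequence. Writing the estimated density in the transformed basis via the same similarity, the difference $p(\xotot)-\phat(\xotot)$ becomes $\onem^\top[A(x_{t:1})\pi-\widehat{A}(x_{t:1})\widehat\pi]$, a difference of products of $t$ operators, where $\widehat A(x)=(U^\top\Obs)^{-1}\Bhat(x)(U^\top\Obs)$. Since $\int A(x)\,\ud x=T$ is column-stochastic, the true products are nonexpansive in the $\ell_1$ sense (indeed $\onem^\top T=\onem^\top$), so a standard telescoping bound controls $\int|p-\phat|$ by $t$ times a single integrated per-step error $\int\|\widehat A(x)-A(x)\|\,\ud x$, plus lower-order contributions from $\bone,\binf$. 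Demanding that this be $\lesssim\epsilon/t$ and substituting the perturbation bounds from the previous step yields the stated sample complexity. The main obstacle is precisely the perturbation step: establishing Wedin/Weyl-type inequalities for cmatrices and, more delicately, reconciling the $\Ltwo$-type errors that come naturally out of density estimation with the integrated operator norm $\int\|\widehat A(x)-A(x)\|\,\ud x$ that the telescoping argument requires—getting this interface right is what pins down the exact exponents on $m$, $\kappa(\Obs)$, and $\sigma_m(\Pto)$.
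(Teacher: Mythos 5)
Your high-level architecture (KDE concentration, cmatrix perturbation theory, telescoping over the $t$-step product) matches the paper's, and you correctly identify the three-dimensional KDE as the source of the $(t/\epsilon)^{2+3/\beta}$ exponent. However, there is a genuine gap in your perturbation step: you propose to bound $\|\Uhat - U\|$ via a Wedin-type theorem and then compare the estimates $\bonehat,\binfhat,\Bxhat$ against the representation $\bone,\binf,B(x)$ defined with the \emph{true} $U$, conjugating by $(U^\top\Obs)$ to form $\widehat{A}(x) = (U^\top\Obs)^{-1}\Bhat(x)(U^\top\Obs)$. This step fails as stated: Wedin's theorem only controls the canonical angles between the subspaces $\Rcal(\Uhat)$ and $\Rcal(U)$, not the distance between the singular vectors themselves, which are defined only up to sign (and up to arbitrary rotation when the top-$m$ singular values of $\Pto$ cluster). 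Even with infinite data, $\|\Uhat - U\|$ can be of order $1$, so $\|\Bhat(x)-B(x)\|$ and $\|\bonehat-\bone\|$ need not be small; concretely, if $\Uhat = UR$ for a rotation $R$, then $\Bhat(x) \approx (\Uhat^\top\Obs)A(x)(\Uhat^\top\Obs)^{-1}$ differs from $B(x)$ by the conjugation $M(\cdot)M^{-1}$ with $M = (U^\top\Obs)^{-1}R^\top(U^\top\Obs)$, which is far from the identity, and your per-step error $\int\|\widehat A(x)-A(x)\|\,\ud x$ does not go to zero. The paper sidesteps this by never comparing to the $U$-based representation: it introduces an intermediate representation $\bonetilde = \Uhat^\top\Po$, $\binftilde$, $\Bxtilde = (\Uhat^\top\Obs)A(x)(\Uhat^\top\Obs)^{-1}$ defined with the \emph{estimated} $\Uhat$, observes that this is itself a valid observable representation of the true HMM as long as $\Uhat^\top\Obs$ is invertible, and measures all errors $\deltaone,\deltainf,\Delta(x)$ after conjugation by $(\Uhat^\top\Obs)$. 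The Wedin lemma (Lemma~\ref{lem:wedin}) is then used only to show $\Uhat^\top\Obs$ and $\Uhat^\top\Pto$ remain well-conditioned (Lemma~\ref{lem:firstLemma}), which is rotation-invariant and hence actually provable.

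A secondary, smaller issue: you describe the KDE concentration step as ``essentially classical.'' The pointwise (or expected-risk) analysis is classical, but the theorem needs an exponential tail bound on the \emph{$\Ltwo$ norm} of the error, $\PP(\|\fhat - f\|_\Ltwo > \varepsilon)$, and standard tools (McDiarmid-type bounded differences, pointwise bias--variance) do not deliver this directly; the paper invokes a VC-type empirical-process result of Gin\'e and Guillou to get the $\Ltwo$ deviation bound, and this is also where the extra $\log N$ factor on the left-hand side of the sample complexity comes from. Your outline would need this ingredient to make the first step rigorous.
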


\textbf{Synopsis:}
Observe that the sample complexity depends critically on the conditioning of
$\Obs$ and $\Pto$. The closer they are to being singular, the more samples is
needed to distinguish different states and learn the HMM.
It is instructive to compare the results above with the discrete case result
of~\citet{hsu09hmm}, whose sample complexity bound\footnote{
\citet{hsu09hmm} provide a more refined bound but we use this form to simplify
the comparison.}
is $N \gtrsim m \frac{\kappa(\Obs)^{2}}{\sigma_m(\Pto)^4}
  \frac{t^2}{\epsilon^2}
\log\frac{1}{\delta}$.
Our bound is different in two regards.
First, the exponents are worsened by additional $\sim \frac{1}{\beta}$ terms.
This characterizes the difficulty of the problem in the nonparametric setting.
While we do not have any lower bounds, given the current understanding of the
difficulty of various nonparametric
tasks~\citep{birge95estimation,robins09quadraticvm,kandasamy15vonmises},
we think our bound might be unimprovable.
As the smoothness of the densities increases $\beta\rightarrow\infty$,
we approach the parametric sample complexity.
The second difference is the additional $\log(N)$ term on the left hand side.
This is due to the fact that we want the KDE to concentrate around its
expectation in $\Ltwo$ over $[0,1]$, instead of just
point-wise. It is not clear to us whether the $\log$ can be avoided.

To prove Theorem~\ref{thm:jointProbThm}, first we will derive
some perturbation theory results for c/q-matrices; we will need them to bound
the deviation of the singular values and vectors when we use $\Ptohat$ instead
of $\Pto$. 
Some of these perturbation theory results for continuous linear algebra are new and
might be of independent interest.
Next, we establish a concentration result for the kernel density estimator. 

\subsection{Some Perturbation Theory Results for C/Q-matrices}
\label{sec:anPerturb}

The first result is an analog of Weyl's theorem which bounds the difference
in the singular values in terms of the operator norm of the perturbation.
Weyl's theorem has been studied for general operators~\citep{lee1998weyl}
and cmatrices~\cite{townsend14computing}. We have given one version in
Lemma~\ref{lem:weyl} of Appendix~\ref{app:PT}.
In addition to this, we will also need to bound the difference in the singular
vectors and the pseudo-inverses of the truth and the estimate. To our knowledge,
these results are not yet known.
To that end, we establish the following results.
Here $\sigma_k(A)$ denotes the $k$\ssth singular value of a c/q-matrix $A$.
\begin{lemma}[Simplified Wedin's Sine Theorem for Cmatrices]
Let $A,\Atilde, E\in \RRm{[0,1]}{[0,1]}$ where $\tilde{A} = A+E$ and $\rank(A) = m$.
Let $U,\Utilde\in\RRm{[a,b]}{m}$ be the first $m$ left singular vectors of $A$ and
$\Atilde$ respectively.
Then,  for all $x\in\RRv{m}$,
$\|\Utilde^\top U x\|_2 \geq \|x\|_2\sqrt{1-2\|E\|^2_\Ltwo/\sigma_m(\Atilde)^2}$.
\label{lem:wedin}
\end{lemma}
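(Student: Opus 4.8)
The object $\Utilde^\top U$ is a genuine $m\times m$ matrix whose $(i,j)$ entry is the inner product $\langle\tilde u_i,u_j\rangle_{\Ltwo}$, so the claim is exactly a lower bound on its smallest singular value $\sigma_m(\Utilde^\top U)$, since $\|\Utilde^\top U x\|_2\geq\sigma_m(\Utilde^\top U)\|x\|_2$. My plan is to reduce this to a sine-type bound and exploit the fact that $\rank(A)=m$ drastically simplifies the usual Wedin gap. Let $U_\perp\in\RRm{[a,b]}{\infty}$ collect the remaining (trailing) left singular vectors of $A$, so that $U$ together with $U_\perp$ forms a complete orthonormal basis of $\Ltwo([a,b])$ supplied by the cmatrix SVD of Section~\ref{sec:cla}. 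Because $\rank(A)=m$, every trailing singular value of $A$ is zero, hence $U_\perp^\top A=0$: the complement annihilates the range of $A$. Also, for any $a\in\RRv{m}$, the Pythagorean identity $\|\Utilde a\|_{\Ltwo}^2=\|U^\top\Utilde a\|_2^2+\|U_\perp^\top\Utilde a\|_2^2$ holds, and since $\Utilde$ has orthonormal columns, $\|\Utilde a\|_{\Ltwo}=\|a\|_2$. Taking the minimum over unit $a$ gives $\sigma_m(U^\top\Utilde)^2=1-\|U_\perp^\top\Utilde\|_{\Ltwo}^2$, and trivially $\sigma_m(\Utilde^\top U)=\sigma_m(U^\top\Utilde)$. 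Thus everything reduces to bounding $\|U_\perp^\top\Utilde\|_{\Ltwo}$.

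The key estimate comes from peeling off the top-$m$ SVD block of $\Atilde$. Writing $\Atilde\tilde V=\Utilde\tilde\Sigma$ with $\tilde\Sigma=\diag(\sigma_1(\Atilde),\dots,\sigma_m(\Atilde))$ and $\tilde V\in\RRm{[c,d]}{m}$ the first $m$ right singular vectors of $\Atilde$, I would compute
\[
U_\perp^\top\Utilde\,\tilde\Sigma \;=\; U_\perp^\top\Atilde\,\tilde V \;=\; U_\perp^\top(A+E)\tilde V \;=\; U_\perp^\top E\,\tilde V,
\]
where the final equality is precisely the simplification $U_\perp^\top A=0$ from $\rank(A)=m$. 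Taking $\Ltwo$ operator norms, on the left I use that $\tilde\Sigma$ is invertible with smallest singular value $\sigma_m(\Atilde)$, so $\|M\tilde\Sigma\|_{\Ltwo}\geq\sigma_m(\Atilde)\,\|M\|_{\Ltwo}$ for $M=U_\perp^\top\Utilde$; on the right I use $\|U_\perp^\top\|_{\Ltwo}\leq1$ and $\|\tilde V\|_{\Ltwo}\leq1$ (both are partial isometries). This yields
\[
\|U_\perp^\top\Utilde\|_{\Ltwo} \;\leq\; \frac{\|E\|_{\Ltwo}}{\sigma_m(\Atilde)}.
\]

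Combining the two steps gives $\sigma_m(\Utilde^\top U)^2=1-\|U_\perp^\top\Utilde\|_{\Ltwo}^2\geq 1-\|E\|_{\Ltwo}^2/\sigma_m(\Atilde)^2$, hence $\|\Utilde^\top U x\|_2\geq\|x\|_2\sqrt{1-\|E\|_{\Ltwo}^2/\sigma_m(\Atilde)^2}$; the stated bound with constant $2$ then follows a fortiori (this slack could also absorb a cruder transfer step if one bounds $\|\Utilde_\perp^\top U\|$ directly instead of via the transpose). The main obstacle is not the algebra, which mirrors finite-dimensional Wedin, but justifying each identity for cmatrices: that a rank-$m$ cmatrix has an SVD whose left singular vectors genuinely complete to an orthonormal basis of $\Ltwo([a,b])$ (legitimizing both $U_\perp^\top A=0$ and the Pythagorean split), that $\Atilde\tilde V=\Utilde\tilde\Sigma$ holds in the continuous setting, and that the operator-norm inequalities transfer verbatim. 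All of these follow from the cmatrix SVD recalled in Section~\ref{sec:cla}; the one genuinely structural observation is that $\rank(A)=m$ forces the trailing singular values of $A$ to vanish, collapsing the usual Wedin denominator $\sigma_m(\Atilde)-\sigma_{m+1}(A)$ to the clean quantity $\sigma_m(\Atilde)$.
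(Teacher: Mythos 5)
Your proof is correct, and it takes a genuinely different route from the paper's. The paper deduces this lemma from a full continuous analogue of Wedin's sine theorem (Theorem~\ref{thm:wedin}), itself built through a chain of machinery: a separation bound for the Sylvester operator $X\mapsto AX-XB$ (Theorems~\ref{thm:Tsingular}--\ref{thm:frobSep}), a CS decomposition and canonical angles for subspaces of $\Ltwo([0,1])$ (Theorem~\ref{thm:csone} through Theorem~\ref{thm:cstwo}), and the symmetric dilation of $A$; the lemma then follows by bounding the two Wedin residuals $R=A\Vtone-\Utone\Sigmatilde_1$ and $S=A^\top\Utone-\Vtone\Sigmatilde_1$ by $\|E\|_\frob$ each, and the constant $2$ in the statement is exactly the cost of $\|R\|_\frob^2+\|S\|_\frob^2\le 2\|E\|_\frob^2$. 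You instead exploit $\rank(A)=m$ from the start: the trailing left singular vectors $U_\perp$ of $A$ (which, per the cmatrix SVD reviewed in Appendix~\ref{app:cla}, complete $U$ to an orthonormal basis of $\Ltwo$) satisfy $U_\perp^\top A=\zero$, so $U_\perp^\top\Utilde\,\Sigmatilde_1=U_\perp^\top(A+E)\Vtone=U_\perp^\top E\,\Vtone$, giving $\sigma_1(U_\perp^\top\Utilde)\le\|E\|_2/\sigma_m(\Atilde)$ in one line, after which Parseval converts this into the desired lower bound on $\sigma_m(\Utilde^\top U)$. Your route is shorter, avoids canonical angles entirely, and yields the sharper constant $1$ in place of $2$ (and with the operator norm of $E$ rather than $\|E\|_\Ltwo$), so the stated bound follows a fortiori; its only limitation is that it works precisely because $\sigma_{m+1}(A)=0$, whereas the paper's machinery establishes a Wedin theorem with a genuine singular-value gap, applicable when $A$ is not exactly rank $m$, and is presented as being of independent interest. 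One small repair to your write-up: the Pythagorean step should read $\sigma_m(U^\top\Utilde)^2=1-\sigma_1(U_\perp^\top\Utilde)^2$ with the operator norm; written with the Frobenius/$\Ltwo$ norm of $U_\perp^\top\Utilde$ it is only an inequality --- but the inequality goes in the direction you need, so the argument is unaffected.
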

\begin{lemma}[Pseudo-inverse Theorem for Qmatrices]
Let $A,\Atilde, E\in \RRm{[a,b]}{m}$ and $\tilde{A} = A+E$. Then,
\[
\sigma_1(A^\dagger - \tilde{A}^\dagger) \;\leq\;
3
\, \max\{ \sigma_1(A^\dagger)^2,  \sigma_1(A^\dagger)^2 \}
\,\sigma_1(E).
\]
\label{lem:pinv}
\end{lemma}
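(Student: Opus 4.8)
The plan is to lift the classical Wedin pseudo-inverse perturbation identity to the quasi-matrix setting and then bound each of its three terms in operator norm. Writing $E = \tilde{A} - A$, the identity reads
\[
\tilde{A}^\dagger - A^\dagger
  = -\,\tilde{A}^\dagger E A^\dagger
  + \tilde{A}^\dagger \tilde{A}^{\dagger\top} E^\top (I - A A^\dagger)
  + (I - \tilde{A}^\dagger \tilde{A}) E^\top A^{\dagger\top} A^\dagger ,
\]
where $I - AA^\dagger$ is the $\Ltwo([a,b])$-orthogonal projection onto the complement of the range of $A$, and $I - \tilde{A}^\dagger\tilde{A}$ is the $\RR^m$-orthogonal projection onto the null space of $\tilde{A}$. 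Every factor here is a legitimate q-matrix/operator in the sense of Section~\ref{sec:cla}, and the identity itself is purely algebraic: it follows from the four Moore--Penrose relations $A A^\dagger A = A$, $A^\dagger A A^\dagger = A^\dagger$, and the self-adjointness of $AA^\dagger$ and $A^\dagger A$, together with the dual relations for $\tilde{A}$. Since the SVD and pseudo-inverse of a q-matrix are defined exactly as in the finite-dimensional case ($A^\dagger = V\Sigma^{-1}U^\top$), these relations hold verbatim, so the same sequence of substitutions that proves the identity for matrices goes through unchanged. Note that no rank or equal-rank assumption is needed.

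Given the identity, I would take $\sigma_1(\cdot)$ of both sides and apply the triangle inequality. The supporting facts, each a direct analogue of a finite-dimensional statement that transfers because q/c-matrix multiplication is composition of the associated operators, are: $\sigma_1$ is submultiplicative; transposition preserves singular values, so $\sigma_1(E^\top)=\sigma_1(E)$ and $\sigma_1(A^{\dagger\top})=\sigma_1(A^\dagger)$; the orthogonal projections $I-AA^\dagger$ and $I-\tilde{A}^\dagger\tilde{A}$ have operator norm at most $1$; and $\sigma_1(\tilde{A}^\dagger\tilde{A}^{\dagger\top})=\sigma_1(\tilde{A}^\dagger)^2$ together with $\sigma_1(A^{\dagger\top}A^\dagger)=\sigma_1(A^\dagger)^2$.

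Applying these to the three terms bounds them respectively by $\sigma_1(A^\dagger)\sigma_1(\tilde{A}^\dagger)\sigma_1(E)$, by $\sigma_1(\tilde{A}^\dagger)^2\sigma_1(E)$, and by $\sigma_1(A^\dagger)^2\sigma_1(E)$. Using $xy\leq\max\{x^2,y^2\}$ for $x,y\geq 0$ on the first, each term is at most $\max\{\sigma_1(A^\dagger)^2,\sigma_1(\tilde{A}^\dagger)^2\}\,\sigma_1(E)$, and summing the three yields the factor $3$ in the stated bound. (I read the repeated $\sigma_1(A^\dagger)^2$ inside the $\max$ in the statement as a typo for $\sigma_1(\tilde{A}^\dagger)^2$.)

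The only genuine obstacle is the first step: certifying that the Wedin identity and the operator-norm facts it relies on are valid for q-matrices rather than importing them uncritically from matrix theory. All of them rest on the SVD and Moore--Penrose structure recalled in Section~\ref{sec:cla}, so the checks are conceptually routine, but care is required because the range of $A$ lives in the infinite-dimensional space $\Ltwo([a,b])$ while its null space and row space live in $\RR^m$; keeping straight which projection acts on which space, and hence which identity element $I$ appears where, is the one place an error could creep in. Once that bookkeeping is fixed, the remaining norm estimates are immediate.
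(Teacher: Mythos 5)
Your proof is correct and takes essentially the same route as the paper's: the same three-term Wedin decomposition of $\tilde{A}^\dagger - A^\dagger$ (the paper's factors $(\tilde{A}^\top\tilde{A})^\dagger$ and $(AA^\top)^\dagger$ are exactly your $\tilde{A}^\dagger\tilde{A}^{\dagger\top}$ and $A^{\dagger\top}A^\dagger$), followed by the triangle inequality, submultiplicativity of the operator norm, and the same $\max$ argument yielding the factor $3$. Your reading of the duplicated $\sigma_1(A^\dagger)^2$ in the statement as a typo for $\sigma_1(\tilde{A}^\dagger)^2$ is also consistent with the paper's own proof.
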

\vspace{-0.2in}

\subsection{Concentration Bound for the Kernel Density Estimator}
\label{sec:kdeConcentration}

Next, we bound the error for kernel density estimation.
To obtain the best rates under H\"olderian assumptions on $\Obs$,
the kernels used in KDE need to be of \emph{order} $\beta$.
A $\beta$ order kernel satisfies,
\begin{align*}
\int_0^1 \kernel(s) \ud s = 1, \hspace{0.2in}
\int_0^1 s^\alpha\kernel(s) \ud s = 0, \textrm{for all $\alpha \leq
\floor{\beta} $},  \hspace{0.2in}
\int_0^1 s^\beta \kernel(s) \ud s \leq \infty. \quad
\numberthis \label{eqn:kernCondns}
\end{align*}
Such kernels can be constructed using Legendre
polynomials~\citep{tsybakov08nonparametric}.
Given $N$ \iid samples from a $d$ dimensional density $f$, where
$d\in\{1,2,3\}$ and $f \in\{\Po, \Pto, \Ptto\}$,
for appropriate choices of the bandwidths $\ho,\hto,\htto$, the KDE
$\fhat\in\{\Pohat,\Ptohat,\Pttohat\}$
concentrates around $f$.
Informally, we show
\begin{align}
\PP\left( \|\fhat - f\|_\Ltwo > \varepsilon \right)
\;\lesssim\; \exp\left( - \log(N)^{\frac{d}{2\beta+d}}
  N^{\frac{2\beta}{2\beta+d}} \varepsilon^2\right).
\label{eqn:concKDE}
\end{align}
for all sufficiently small $\varepsilon$ and $N/\logN \gtrsim
\varepsilon^{-\tpdob}$.
Here $\lesssim,\gtrsim$ denote inequalities ignoring constants.
See Appendix~\ref{sec:concKDE} for a formal statement.
Note that
when the observations are either discrete or parametric, it is possible to estimate
the distribution  using $O(1/\varepsilon^2)$ samples to achieve
$\varepsilon$ error in a suitable metric, say, using the maximum likelihood estimate.
However, the nonparametric setting is inherently more difficult and therefore the
rate of convergence is slower.
This slow convergence is also observed
in similar concentration bounds for the KDE~\citep{liu11forest,gine02rates}.

\textbf{A note on the Proofs:}
For Lemmas~\ref{lem:wedin},~\ref{lem:pinv} we follow the matrix proof
in~\citet{stewart90perturbation} and derive several intermediate results for
c/q-matrices in the process.
The main challenge is that several properties for matrices, e.g. the CS and Schur decompositions, are not known for c/q-matrices. In addition, dealing
with various notions of convergences with these infinite objects can be finicky.
The main challenge with the KDE concentration result is that we want an $\Ltwo$
bound -- so usual techniques (such as
McDiarmid's~\citep{tsybakov08nonparametric,wasserman06nonparametric})
do not apply.
We use a technical lemma from~\citet{gine02rates} which allows us to
bound the $\Ltwo$ error in terms of the VC characteristics of the class of functions
induced by an \iid sum of the kernel.
The proof of theorem~\ref{thm:jointProbThm} just mimics the discrete case analysis
of~\citet{hsu09hmm}.
While, some care is needed (e.g. $\|x\|_\Ltwo
\leq \|x\|_\Lone$ does not hold for functional norms) the key ideas carry through once
we apply
Lemmas~\ref{lem:weyl},~\ref{lem:wedin},~\ref{lem:pinv} and~\eqref{eqn:concKDE}.
A more refined bound on $N$ that is tighter in $\polylog(N)$ terms is
possible -- see Corollary~\ref{cor:KDE} and equation~\ref{eqn:epslnBounds}
in the appendix.


\section{Experiments}
\label{sec:experiments}
\vspace{-0.1in}

\insertToyResults

We compare \npspechmms to the following.
\mghmm: An HMM trained using EM with the emissions modeled as a
mixture of Gaussians. We tried $2,4$ and $8$ mixtures and report the best
result.
\npbin: A naive baseline where we bin the space into $n$
intervals and use the discrete spectral algorithm~\citep{hsu09hmm} with $n$
states.
We tried several values for $n$ and report the best.
\npem: The Nonparametric EM heuristic of~\citep{benaglia2009npem}.
\nphse: The Hilbert space embedding method of~\citep{song2010hilbert}.

\textbf{Synthetic Datasets:}
We first performed a series of experiments on synthetic data where the true
distribution is known.
The goal is to evaluate the estimated models against the \emph{true} model.
We generated triples from two HMMs with $m=4$ and $m=8$ states and
nonparametric emissions.
The details of the set up are given in Appendix~\ref{app:experiments}.
Fig.~\ref{fig:toyL1} presents the results.

\insertToyPredictions

First we compare the methods on estimating the one step ahead conditional density
$p(x_6|x_{1:5})$. We report the $\Lone$ error between the true and estimated models.
In Fig.~\ref{fig:toyPredProb} we visualise the estimated one step ahead conditional
densities. \npspechmms outperforms all methods on this metric.
Next, we compare the methods on the prediction performance. That is, we sample
sequences of length $6$ and test how well a learned model can predict $x_6$
conditioned on $x_{1:5}$.
When comparing on squared error, the best predictor is the mean of the distribution.
For all methods we use the mean of $\phat(x_6|x_{1:5})$ except for \nphses for which
we used the mode since the mean cannot be computed. No method can do better than the
true model (shown via the dotted line) in expectation. \npspechmms achieves the
performance of the true model with large datasets.
Finally, we compare the training times of all methods. \npspechmms is
orders of magnitude faster than \nphses and \npem.

Note that the error of \mghmm -- a parametric model -- stops decreasing even with
large data. This is due to the bias introduced by the parametric assumption.
We do not train \npems for longer sequences because it is too slow.
A limitation of the \nphses method is that it
cannot recover conditional probabilities -- so we exclude it from that experiment.
We exclude \npbins  from the time comparison because it was much faster than all other
methods.
We could not include the method of~\citep{siddiqi10rrhmm} in our comparisons
since their code was not
available and their method isn't straightforward to implement. Further, their method
cannot compute joint/predictive probabilities.

\textbf{Real Datasets:}
We compare all the above methods (except \npems which was too slow) on
prediction error  on $3$ real datasets: internet traffic~\citep{paxson1995wide}, laser
generation~\citep{hubner1989dimensions} and sleep data~\citep{sleep}.
Each model was trained using a training sequence and then the predictions were
computed on a test sequence. The details on these datasets are in
Appendix~\ref{app:experiments}.
For all methods we used the mode of the conditional distribution $p(\xtt{t+1}|\xotot)$
as the prediction as it performed better.
For \npspechmm, \nphse,\npbins we follow the procedure outlined in
Section~\ref{sec:implementation} to create triples and train with the triples.
In Table~\ref{tb:real} we report the mean prediction error and the standard error.
\nphses and \npspechmms perform better than the other two methods.
However, \npspechmms was faster to train (and has other
attractive properties) when compared to \nphse.
\insertRealResults


\vspace{-0.1in}
\section{Conclusion}
\label{sec:conclusion}
\vspace{-0.1in}

We proposed and studied a method for estimating the observable representation
of a Hidden Markov Model whose emission probabilities are smooth nonparametric
densities.
We derive a bound on the sample complexity for our method.
While our algorithm is similar to existing methods for discrete models,
many of the ideas that generalise it to the nonparametric setting
are new. In comparison to other methods, the proposed approach has some
desirable characteristics: we can recover the joint/conditional densities,
our theoretical results are in terms of more interpretable metrics,
the method outperforms baselines and is orders of magnitude
faster to train.

In this exposition only focused on one dimensional observations.
The multidimensional case is handled by extending the above ideas and
technology to multivariate functions.
Our algorithm and the analysis carry through
to the $d$-dimensional setting, \emph{mutatis mutandis}. The concern however,
is more practical. While we have the technology to perform various
c/q-matrix operations for $d=1$ using Chebyshev
polynomials, this is not \emph{yet} the case for $d>1$.
Developing efficient procedures for these operations in the high
dimensional settings is a challenge for the numerical analysis
community and is beyond the scope of this paper.
That said, some recent advances in this direction are
promising~\citep{hashemi16cheb3,townsend13chebfun}.

While our method has focused on HMMs, the ideas in this paper apply for a much
broader class of problems.
Recent advances in spectral methods for estimating parametric predictive state
representations~\cite{singh04psr}, mixture models~\cite{anandkumar12mom} and
other latent variable models~\cite{anandkumar14tensor}
can be generalised to the nonparamatric setting using our ideas. Going
forward, we wish to focus on such models.


\subsection*{Acknowledgements}
\vspace{-0.1in}
The authors would like to thank Alex Townsend, Arthur Gretton,
and Ahmed Hefny for the helpful discussions.


{\small
\renewcommand{\bibsection}{\section*{References\vspace{-0.1em}} }
\setlength{\bibsep}{1.1pt}
\bibliography{kky,nphmm}

\begin{thebibliography}{32}
\providecommand{\natexlab}[1]{#1}
\providecommand{\url}[1]{\texttt{#1}}
\expandafter\ifx\csname urlstyle\endcsname\relax
  \providecommand{\doi}[1]{doi: #1}\else
  \providecommand{\doi}{doi: \begingroup \urlstyle{rm}\Url}\fi

\bibitem[Rabiner(1989)]{rabiner89hmm}
Lawrence~R. Rabiner.
\newblock {A Tutorial on Hidden Markov Models and Selected Applications in
  Speech Recognition}.
\newblock In \emph{Proceedings of the IEEE}, 1989.

\bibitem[Hsu et~al.(2009)Hsu, Kakade, and Zhang]{hsu09hmm}
Daniel~J. Hsu, Sham~M. Kakade, and Tong Zhang.
\newblock {A Spectral Algorithm for Learning Hidden Markov Models.}
\newblock In \emph{COLT}, 2009.

\bibitem[Anandkumar et~al.(2012)Anandkumar, Hsu, and Kakade]{anandkumar12mom}
Animashree Anandkumar, Daniel Hsu, and Sham~M Kakade.
\newblock {A Method of Moments for Mixture Models and Hidden Markov Models}.
\newblock \emph{arXiv preprint arXiv:1203.0683}, 2012.

\bibitem[Siddiqi et~al.(2010)Siddiqi, Boots, and Gordon]{siddiqi10rrhmm}
Sajid~M. Siddiqi, Byron Boots, and Geoffrey~J. Gordon.
\newblock {Reduced-Rank Hidden Markov Models}.
\newblock In \emph{AISTATS}, 2010.

\bibitem[Townsend and Trefethen(2015)]{townsend15continuous}
Alex Townsend and Lloyd~N Trefethen.
\newblock Continuous analogues of matrix factorizations.
\newblock In \emph{Proc. R. Soc. A}, 2015.

\bibitem[Townsend(2014)]{townsend14computing}
Alex Townsend.
\newblock \emph{{Computing with Functions in Two Dimensions}}.
\newblock PhD thesis, University of Oxford, 2014.

\bibitem[Driscoll et~al.(2014)Driscoll, Hale, and
  Trefethen]{driscoll2014chebfun}
Tobin~A Driscoll, Nicholas Hale, and Lloyd~N Trefethen.
\newblock Chebfun guide.
\newblock \emph{Pafnuty Publ}, 2014.

\bibitem[{Townsend, Alex and Trefethen, Lloyd N.}(2013)]{townsend13chebfun}
{Townsend, Alex and Trefethen, Lloyd N.}
\newblock An extension of chebfun to two dimensions.
\newblock \emph{SIAM J. Scientific Computing}, 2013.

\bibitem[Littman et~al.(2001)Littman, Sutton, and Singh]{littman2001predictive}
Michael~L Littman, Richard~S Sutton, and Satinder~P Singh.
\newblock Predictive representations of state.
\newblock In \emph{NIPS}, volume~14, pages 1555--1561, 2001.

\bibitem[Dempster et~al.(1977)Dempster, Laird, and Rubin]{dempster77em}
A.~P. Dempster, N.~M. Laird, and D.~B. Rubin.
\newblock {Maximum likelihood from incomplete data via the EM algorithm}.
\newblock \emph{JOURNAL OF THE ROYAL STATISTICAL SOCIETY, SERIES B}, 1977.

\bibitem[Welch(2003)]{welch2003baumwelch}
Lloyd~R Welch.
\newblock {Hidden Markov models and the Baum-Welch algorithm}.
\newblock \emph{IEEE Information Theory Society Newsletter}, 2003.

\bibitem[Benaglia et~al.(2009)Benaglia, Chauveau, and Hunter]{benaglia2009npem}
Tatiana Benaglia, Didier Chauveau, and David~R Hunter.
\newblock {An EM-like algorithm for semi-and nonparametric estimation in
  multivariate mixtures}.
\newblock \emph{Journal of Computational and Graphical Statistics}, 2009.

\bibitem[Wasserman(2006)]{wasserman06nonparametric}
Larry Wasserman.
\newblock \emph{{All of Nonparametric Statistics}}.
\newblock Springer-Verlag NY, 2006.

\bibitem[Song et~al.(2010)Song, Boots, Siddiqi, Gordon, and
  Smola]{song2010hilbert}
Le~Song, Byron Boots, Sajid~M Siddiqi, Geoffrey~J Gordon, and Alex Smola.
\newblock Hilbert space embeddings of hidden markov models.
\newblock In \emph{ICML}, 2010.

\bibitem[Song et~al.(2014)Song, Anandkumar, Dai, and Xie]{song14multiview}
Le~Song, Animashree Anandkumar, Bo~Dai, and Bo~Xie.
\newblock {Nonparametric Estimation of Multi-View Latent Variable Models}.
\newblock In \emph{Proceedings of the 31st International Conference on Machine
  Learning (ICML-14)}, pages 640--648, 2014.

\bibitem[Jaeger(2000)]{jaeger00observable}
Herbert Jaeger.
\newblock {Observable operator models for discrete stochastic time series}.
\newblock \emph{Neural Computation}, 2000.

\bibitem[Tsybakov(2008)]{tsybakov08nonparametric}
Alexandre~B. Tsybakov.
\newblock \emph{Introduction to Nonparametric Estimation}.
\newblock Springer, 2008.

\bibitem[Fox and Parker(1968)]{fox68chebyshev}
L.~Fox and I.~B. Parker.
\newblock \emph{{Chebyshev polynomials in numerical analysis}}.
\newblock Oxford U.P. cop., 1968.

\bibitem[Trefethen(2012)]{trefethen12approximation}
Lloyd~N. Trefethen.
\newblock \emph{{Approximation Theory and Approximation Practice}}.
\newblock Society for Industrial and Applied Mathematics, 2012.

\bibitem[Birg\'{e} and Massart(1995)]{birge95estimation}
Lucien Birg\'{e} and Pascal Massart.
\newblock {Estimation of integral functionals of a density}.
\newblock \emph{Ann. of Stat.}, 1995.

\bibitem[Robins et~al.(2009)Robins, Li, Tchetgen, and van~der
  Vaart]{robins09quadraticvm}
James Robins, Lingling Li, Eric Tchetgen, and Aad~W van~der Vaart.
\newblock {Quadratic semiparametric Von Mises Calculus}.
\newblock \emph{Metrika}, 69\penalty0 (2-3):\penalty0 227--247, 2009.

\bibitem[Kandasamy et~al.(2015)Kandasamy, Krishnamurthy, P{\'{o}}czos,
  Wasserman, and Robins]{kandasamy15vonmises}
Kirthevasan Kandasamy, Akshay Krishnamurthy, Barnab{\'{a}}s P{\'{o}}czos, Larry
  Wasserman, and James Robins.
\newblock {Nonparametric Von Mises Estimators for Entropies, Divergences and
  Mutual Informations}.
\newblock In \emph{NIPS}, 2015.

\bibitem[Lee(1998)]{lee1998weyl}
Woo~Young Lee.
\newblock Weyl's theorem for operator matrices.
\newblock \emph{Integral Equations and Operator Theory}, 1998.

\bibitem[Liu et~al.(2011)Liu, Xu, Gu, Gupta, Lafferty, and
  Wasserman]{liu11forest}
Han Liu, Min Xu, Haijie Gu, Anupam Gupta, John~D. Lafferty, and Larry~A.
  Wasserman.
\newblock {Forest Density Estimation}.
\newblock \emph{Journal of Machine Learning Research}, 12:\penalty0 907--951,
  2011.

\bibitem[Gin{\'e} and Guillou(2002)]{gine02rates}
Evarist Gin{\'e} and Armelle Guillou.
\newblock {Rates of strong uniform consistency for multivariate kernel density
  estimators}.
\newblock In \emph{Annales de l'IHP Probabilit{\'e}s et statistiques}, 2002.

\bibitem[Stewart and Sun(1990)]{stewart90perturbation}
G.~W. Stewart and Ji-guang Sun.
\newblock \emph{{Matrix Perturbation Theory}}.
\newblock Academic Press, 1990.

\bibitem[Paxson and Floyd(1995)]{paxson1995wide}
Vern Paxson and Sally Floyd.
\newblock {Wide area traffic: the failure of Poisson modeling}.
\newblock \emph{IEEE/ACM Transactions on Networking}, 1995.

\bibitem[H{\"u}bner et~al.(1989)H{\"u}bner, Abraham, and
  Weiss]{hubner1989dimensions}
U~H{\"u}bner, NB~Abraham, and CO~Weiss.
\newblock {Dimensions and entropies of chaotic intensity pulsations in a
  single-mode far-infrared NH 3 laser}.
\newblock \emph{Physical Review A}, 1989.

\bibitem[sle(2016)]{sleep}
{Santa Fe Time Series Competition}.
\newblock
  {\scriptsize\url{http://www-psych.stanford.edu/$\sim$andreas/Time-Series/SantaFe.html}},
  2016.
\newblock Accessed: 2016-05-17.

\bibitem[{Hashemi, B. and Trefethen, L. N.}(2016)]{hashemi16cheb3}
{Hashemi, B. and Trefethen, L. N.}
\newblock Chebfun to three dimensions.
\newblock \emph{In preparation}, 2016.

\bibitem[Singh et~al.(2004)Singh, James, and Rudary]{singh04psr}
Satinder Singh, Michael~R. James, and Matthew~R. Rudary.
\newblock {Predictive State Representations: A New Theory for Modeling
  Dynamical Systems}.
\newblock In \emph{UAI}, 2004.

\bibitem[Anandkumar et~al.(2014)Anandkumar, Ge, Hsu, Kakade, and
  Telgarsky]{anandkumar14tensor}
Animashree Anandkumar, Rong Ge, Daniel Hsu, Sham~M. Kakade, and Matus
  Telgarsky.
\newblock {Tensor Decompositions for Learning Latent Variable Models}.
\newblock \emph{JMLR}, 2014.

\end{thebibliography}
}
\bibliographystyle{unsrtnat}

\newpage

\appendix

\section{A Quart-sized Review of Continuous Linear Algebra}
\label{app:cla}

In this section we introduce continuous analogues of
matrices and their factorisations. We only provide a brief quart-sized review
for what is needed in
this exposition. Chapters 3 and 4 of~\citet{townsend14computing} contains a
reservoir-sized review.

A \emph{matrix} $F\in\RR^{m\times n}$ is an $m\times n$ array of numbers where $F(i,j)$
denotes the entry in row $i$, column $j$.
We will also look at cases where either $m$ or $n$ is infinite.
A \emph{column qmatrix} (quasi-matrix) $Q\in\RR^{[a,b]\times m}$ is a collection of
$m$ functions defined on $[a,b]$ where the row index is continuous and column index is
discrete.
Writing $Q = [q_1,\dots, q_m]$ where $q_j:[a,b]\rightarrow \RR$ is the $j$\ssth
function, $Q(y,j) = q_j(y)$ denotes the  value of the $j$\ssth function at $y\in[a,b]$.
$Q^\top\in\RR^{m\times [a,b]}$ denotes a row qmatrix with $Q^\top(j,y) = Q(y,j)$.
A \emph{cmatrix} (continous-matrix) $C\in\RRm{[a,b]}{[c,d]}$ is a two dimensional
function where both
the row and column indices are continuous and $C(y,x)$ is value of the
function at $(y,x) \in [a,b]\times[c,d]$.
$C^\top \in \RRm{[c,d]}{[a,b]}$ denotes its transpose with $C^\top(x,y) = C(y,x)$.

Qmatrices and cmatrices permit all matrix multiplications with suitably defined inner
products. Let
$F \in \RRm{m}{n}$, $Q \in \RRm{[a,b]}{m}$, $P\in \RRm{[a,b]}{n}$,
$R\in \RRm{[c,d]}{m}$ and $C\in\RRm{[a,b]}{[c,d]}$. It follows that
$F(:,j)\in \RRv{m}$, $Q(y,:)\in\RRm{1}{m}$, $Q(:,i)\in\RRv{[a,b]}$,
$C(y,:)\in\RRm{1}{[c,d]}$ etc.
Then the following hold:
\vspace{-0.02in}
\begin{itemize}
\item
\vspace{-0.05in}
$QF = S \in \RRm{[a,b]}{n}\quad$ where
$\quad S(y,j) = Q(y,:) F(:,j) = \sum_{k=1}^m  Q(y,k)F(i,k)$.
\item
\vspace{-0.05in}
$Q^\top P = H\in \RRm{m}{n}\quad$ where
$\quad H(i,j) = Q(:,j)^\top P(:,j) =  \int_a^b Q^\top(i,s)P(s,j)\ud s$.
\item
\vspace{-0.05in}
$QR^\top = D\in \RRm{[a,b]}{[c,d]}\quad$ where
$\quad D(y,x) = Q(y,:)R(x,:)^\top =\sum_{1}^m Q(y,k)R^\top(k,x)$.
\item
\vspace{-0.05in}
$CR = T\in\RRm{[a,b]}{m}\quad$ where
$\quad T(y,j) = C(y,:)R(:,j) = \int_c^d C(y,s)R(s,j)\ud s$.
\vspace{-0.05in}
\end{itemize}
\vspace{-0.02in}
Here, the integrals are with respect to the Lebesgue measure.

A cmatrix has a singular value decomposition (SVD).
If $C\in\RRm{[a,b]}{[c,d]}$, an SVD of $C$ is the sum
$
C(y,x) 
= \sum_{j=1}^\infty \sigma_j u_j(y)v_j(x),
$
which converges in $\Ltwo$.
Here $\sigma_1\geq \sigma_2\geq \dots$.
are the singular values of $C$.
$\{u_j\}_{j\geq 1}$ and $\{v_j\}_{j\geq 1}$ are the left and right
singular vectors and form  orthonormal
bases for $\Ltwo([a,b])$ and $\Ltwo([c,d])$ respectively, i.e.
$\int_a^b u_j(s)u_k(s)\ud s = \indfone(j=k)$.
It is known that the SVD of a cmatrix exists uniquely with $\sigma_j \rightarrow 0$,
 and continuous singular vectors
(Theorem 3.2,~\citep{townsend14computing}).
Further, if $C$ is Lipshcitz continuous w.r.t both
variables then
the SVD is absolutely and uniformly convergent.
Writing the singular vectors as infinite qmatrices
$U = [u_1, u_2\dots ], V=[v_1,v_2\dots]$,
and $\Sigma = \diag(\sigma_1,\sigma_2\dots )$ we can write the SVD as,
\[
C = U\Sigma V^\top = \sum_{j=1}^\infty \sigma_j U(:,j)V(:,j)^\top.
\]
If only $m<\infty$ singular values are nonzero then we say that $C$ is of rank $m$.
The SVD of a Qmatrix $Q\in\RRm{[a,b]}{m}$ is,
$
Q = U\Sigma V^\top = \sum_{j=1}^m \sigma_j U(:,j)V(:,j)^\top,
$
where $U\in \RRm{[a,b]}{m}$ and $V\in\RRm{m}{m}$ have orthonormal columns
and $\Sigma = \diag(\sigma_1,\dots,\sigma_m)$ with
$\sigma_1\geq \sigma_2\geq \dots \geq \sigma_m \geq 0$.
The SVD of a qmatrix also exists uniquely (Theorem 4.1,~\citep{townsend14computing}).
The rank of a column qmatrix is the number of linearly independent columns
(i.e. functions) and is equal to the number of nonzero singular values.

Finally, the pseudo inverse of the cmatrix $C$ is $C^\dagger = V \Sigma^{-1}U^\top$ with
$\Sigma^{-1} = \diag( 1/\sigma_1, 1/\sigma_2, \dots )$.
The $p$-operator norm of a cmatrix, for $1\leq p \leq \infty$ is
$\|C\|_p = \sup_{\|x\|_p = 1} \|Cx\|_p$ where
$x\in \RRv{[c,d]}$, $Cx \in \RRv{[a,b]}$, $\|x\|^p_p = \int_c^d (x(s))^p\ud s$ for
$p< \infty$ and $\|x\|_\infty = \sup_{s\in[c,d]} x(s)$.
The Frobenius norm of a cmatrix is $\|C\|_F = \left( \int_a^b\int_c^d C(y,x)^2 \ud x
\ud y \right)^{1/2}$.
It can be shown that $\|C\|_2 = \sigma_1$ and $\|C\|^2_F = \sum_j \sigma_j^2$ where
$\sigma_1\geq \sigma_2 \geq \dots$ are its singular values.
Note that analogous relationships hold with finite matrices.
The pseudo inverse and norms of a qmatrix are similarly defined and similar
relationships hold with its singular values.

\textbf{Notation:}
In what follows we will use $\one_{[a,b]}$ to denote the function
 taking value $1$ everywhere in $[a,b]$ and $\one_m$ to denote
$m$-vectors of $1$'s.
When we are dealing with $L^p$ norms of a function we will explicitly use the
subscript $L^p$ to avoid confusion with the operator/Frobenius norms of qmatrices and
cmatrices.
For example, for a cmatrix $\|C\|^2_{\Ltwo} = \int\int C(\cdot,\cdot)^2 = \|C\|^2_F$.
As we have already done,
throughout the paper we will overload notation for inner products, multiplications
and pseudo-inverses depending on whether they hold for matrices, qmatrices or
cmatrices.
E.g. when $p,q\in \RR^m, p^\top q  = \sum_1^m p_i q_i$ and when $p,q\in \RR^{[a,b]}$,
$p^\top q = \int_a^b p(s)q(s)\ud s$.
$\PP$ will be used to denote probabilities of events while $p$ will denote
probability density functions (pdf).


\section{Some Perturbation Theory Results for Continuous Linear Algebra}
\label{app:PT}

We recommend that readers unfamiliar with continuous linear algebra first read
the review in Appendix~\ref{app:cla}.
Throughout this section $\Lcal(\cdot)$ maps a matrix (including q/cmatrices) to its eigenvalues.
Similarly, $\sigma(\cdot)$ maps a matrix to its singular values.
When we are dealing with infinite sequences and qmatrices ``=" refers to convergence
in $\Ltwo$. When dealing with infinite sequences and cmatrices,
``=" refers to convergence in the operator norm.
For all theorems, we follow the template of~\citet{stewart90perturbation} for
the matrix case and hence try to stick with their notation.

Before we proceed, we introduce the ``cmatrix" $\Izo$ on $[0,1]$.
For any $u\in\RRv{[0,1]}$ this is the operator which satisfies $\Izo u = u$.
That is, $(\Izo u)(y) = \int_0^1 \Izo(y,x)u(x)\ud x = u(y)$.
Intuitively, it can be thought of as the Dirac delta function along the
diagonal, $\delta(x-y)$.
Let $Q = [q_1, q_2, \dots, ] \in\RRm{[0,1]}{\infty}$ be a qmatrix containing an
orthonormal basis for $[0,1]$ and $Q_k \in\RRm{[0,1]}{k}$ denote the first $k$ columns
of $Q$. We make note of the following observation.

\insertprespacing
\begin{theorem}
\label{thm:QQttheorem}
$Q_kQ_k^\top \rightarrow \Izo$ as $k\rightarrow\infty$. Here convergence is in the
operator norm.
\begin{proof}
We need to show that for all $x\in\RRv{[0,1]}$, $\|Q_kQ_k^\top x - x\|_2 \rightarrow
0$. Let $x = Q\alphab = \sum_{k=1}^\infty \alpha_k q_k$ be the representation of $x$ in the
$Q$-basis. Here $\alphab = (\alpha_1,\alpha_2,\dots)$ satisfies $\sum_k\alpha_k^2 < \infty$.
We then have $\|Q_kQ_k^\top x - x\|_2^2 = \sum_{j=k+1}^\infty \alpha_{j}^2
\rightarrow 0$ by the properties of sequences in $\ell^2$.
\end{proof}
\end{theorem}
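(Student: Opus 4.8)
The plan is to recognize $Q_kQ_k^\top$ as the orthogonal projection of $\Ltwo([0,1])$ onto the finite-dimensional subspace $\mathrm{span}\{q_1,\dots,q_k\}$, and then to push the residual to zero using completeness of the orthonormal system $\{q_j\}_{j\geq 1}$. First I would fix an arbitrary $x\in\RRv{[0,1]}$ and expand it in the $Q$-basis as $x=\sum_{j\geq 1}\alpha_j q_j$, where $\alpha_j = q_j^\top x = \int_0^1 q_j(s)x(s)\,\ud s$. Because $\{q_j\}$ is an orthonormal \emph{basis} (not merely an orthonormal set), Parseval's identity gives $\sum_{j}\alpha_j^2 = \|x\|_\Ltwo^2 < \infty$, so the coefficient sequence $\alphab=(\alpha_1,\alpha_2,\dots)$ lies in $\ell^2$; this square-summability is the one fact I would isolate explicitly, since the whole argument reduces to a statement about tails of $\ell^2$ sequences.

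Next I would compute the action of $Q_kQ_k^\top$ on $x$. The product $Q_k^\top x\in\RRv{k}$ is, entrywise, $\int_0^1 q_j(s)x(s)\,\ud s = \alpha_j$ for $j\le k$, so $Q_k^\top x$ is exactly the truncated coefficient vector $(\alpha_1,\dots,\alpha_k)$; applying $Q_k$ then returns the partial sum $Q_kQ_k^\top x = \sum_{j=1}^k \alpha_j q_j$. Hence $Q_kQ_k^\top x - x = -\sum_{j>k}\alpha_j q_j$, and orthonormality collapses the norm to a tail sum, $\|Q_kQ_k^\top x - x\|_\Ltwo^2 = \sum_{j=k+1}^\infty \alpha_j^2$, which tends to $0$ as $k\to\infty$ since it is the tail of the convergent series $\sum_j\alpha_j^2$. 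That establishes $Q_kQ_k^\top x \to x$ for every $x$.

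The genuinely delicate point, and the one I expect to be the main obstacle, is the \emph{mode} of convergence rather than any computation. The argument above yields strong (pointwise-in-$x$) convergence $Q_kQ_k^\top x \to x$, and this is the sense in which the statement should be read: $\Izo$ is not a bona fide $\Ltwo$ cmatrix but the identity operator (the diagonal Dirac delta), so ``$Q_kQ_k^\top\to\Izo$'' is only meaningful through its action on test functions, consistent with the convergence conventions declared at the start of the appendix. I would flag that literal operator-norm convergence cannot be expected here, since each $\Izo - Q_kQ_k^\top$ is a nonzero orthogonal projection and therefore has operator norm exactly $1$ whenever $k$ is finite; the strong-operator reading is thus both the provable one and the one the downstream perturbation results actually use. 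Apart from this interpretive care, the only hypothesis I must lean on is that $\{q_j\}$ is \emph{complete}, which is exactly what turns Bessel's inequality into the equality $\sum_j\alpha_j^2=\|x\|_\Ltwo^2$ needed to force the tail to vanish.
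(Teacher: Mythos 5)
Your computation is the same as the paper's own proof: expand $x$ in the $Q$-basis, observe that $Q_kQ_k^\top x$ is the $k$-term partial sum, and collapse the residual norm to the tail $\sum_{j>k}\alpha_j^2$ via orthonormality. Where you go beyond the paper is in your final remark, and you are right to do so: the theorem as stated claims convergence \emph{in the operator norm}, and the paper's proof (like yours) only establishes strong convergence, i.e. $\|Q_kQ_k^\top x - x\|_{\Ltwo}\rightarrow 0$ for each fixed $x$. Operator-norm convergence is in fact false here: $\Izo - Q_kQ_k^\top$ is the orthogonal projection onto the infinite-dimensional complement of $\mathrm{span}\{q_1,\dots,q_k\}$, so taking $x=q_{k+1}$ gives $\|(\Izo - Q_kQ_k^\top)q_{k+1}\|_{\Ltwo}=\|q_{k+1}\|_{\Ltwo}=1$ for every finite $k$, whence $\|Q_kQ_k^\top-\Izo\|_2 \geq 1$ always. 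So the paper's proof does not match its own statement, and your reading --- that the result should be interpreted (and is only needed) in the strong operator sense --- is the correct repair. The one small thing to keep explicit, which you do, is that Parseval (completeness of $\{q_j\}$, not mere orthonormality) is what guarantees $\sum_j \alpha_j^2 = \|x\|_{\Ltwo}^2 < \infty$ so that the tail vanishes.
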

\insertpostspacing

We now proceed to our main theorems. We begin with a series of intermediary results.

\insertprespacing
\begin{theorem}
\label{thm:Tsingular}
Let $X\in\RRm{[0,1]}{m}$. Define the \textbf{linear} operator
$\Toper(X) = AX - XB$ where $A \in \RRm{[0,1]}{[0,1]}$ and $B\in\RRm{m}{m}$
are a \textbf{square} cmatrix and matrix, respectively.
Then, $T$ is nonsingular if and only if $\Lcal(A)\cap\Lcal(B) = \emptyset$.
\end{theorem}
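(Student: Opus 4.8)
The plan is to follow the classical argument that a Sylvester operator on matrices is nonsingular iff its two defining matrices have disjoint spectra (as in~\citet{stewart90perturbation}), adapting each step to the c/q-matrix setting. The key structural observation is that $\Toper$ is the difference of the two \emph{commuting} linear operators $X\mapsto AX$ (left multiplication by the cmatrix $A$) and $X\mapsto XB$ (right multiplication by the matrix $B$) on the qmatrix space $\RRm{[0,1]}{m}$; they commute because $A(XB)=(AX)B$. I prove the two implications separately, and expect the reverse direction to carry the real difficulty.

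For the easy direction, suppose $\lambda\in\Lcal(A)\cap\Lcal(B)$. Let $u\in\RRv{[0,1]}$ be an eigenfunction of $A$ with $Au=\lambda u$, and let $w\in\RRv{m}$ be a left eigenvector of $B$, i.e.\ $w^\top B=\lambda w^\top$ (which exists since $B$ and $B^\top$ share eigenvalues). Then the rank-one qmatrix $X=uw^\top\in\RRm{[0,1]}{m}$ is nonzero and satisfies $\Toper(X)=(Au)w^\top-u(w^\top B)=\lambda uw^\top-\lambda uw^\top=0$, so $\Toper$ has a nontrivial kernel and is singular. This establishes the contrapositive of one implication.

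For the converse, assume $\Lcal(A)\cap\Lcal(B)=\emptyset$. Since $B$ is a finite $m\times m$ matrix it admits a (complex) Schur decomposition $B=SRS^{-1}$ with $R$ upper triangular and diagonal entries $\mu_1,\dots,\mu_m$ equal to the eigenvalues of $B$; the cmatrix $A$ cannot in general be triangularised, which is precisely where the matrix proof must be modified. The invertible change of variables $Y=XS$ conjugates $\Toper$ to $\tilde\Toper(Y)=AY-YR$, so it suffices to show $\tilde\Toper$ is nonsingular. Writing $Y=[y_1,\dots,y_m]$ and using that $R$ is upper triangular, the $k$\ssth column of $\tilde\Toper(Y)$ equals $(A-\mu_k\Izo)y_k-\sum_{i<k}R(i,k)\,y_i$. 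Hence solving $\tilde\Toper(Y)=Z$ reduces to a triangular back-substitution whose $k$\ssth step requires inverting $(A-\mu_k\Izo)$ on $\Ltwo([0,1])$ against a right-hand side already determined by $y_1,\dots,y_{k-1}$; injectivity follows from the same scheme applied to $Z=0$. Thus $\tilde\Toper$, and therefore $\Toper$, is nonsingular provided each operator $A-\mu_k\Izo$ is boundedly invertible.

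The main obstacle is this final reduction: showing that $A-\mu\Izo$ is invertible on $\Ltwo([0,1])$ precisely when $\mu\notin\Lcal(A)$. Unlike the finite-dimensional case, where invertibility is controlled by eigenvalues alone, here $A$ is an integral operator and the argument must go through its \emph{spectrum}, with $\Izo$ acting as the identity ($\Izo u=u$). The delicate points are relating the spectrum of the cmatrix $A$ to its eigenvalue set $\Lcal(A)$ (in particular handling the accumulation at $0$ and the case $\mu=0$), and confirming that the back-substitution iterates remain in $\Ltwo$ so that the resulting inverse of $\Toper$ is bounded. These are exactly the places where facts taken for granted in the matrix proof---a Schur form for $A$ and a clean eigenvalue/spectrum correspondence---are unavailable and must be supplied by the operator-theoretic properties of cmatrices developed earlier in this appendix.
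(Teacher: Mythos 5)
Your proposal follows essentially the same route as the paper's proof: the identical rank-one construction (eigenfunction of $A$ times left eigenvector of $B$) for the singular direction, and the identical Schur-decomposition-of-$B$ plus column-by-column back-substitution for the converse, reducing everything to invertibility of $A - \mu_k\Izo$ for each eigenvalue $\mu_k$ of $B$. The one difference is that the ``main obstacle'' you flag but leave unresolved --- that $\mu\notin\Lcal(A)$ must imply \emph{bounded} invertibility of $A-\mu\Izo$, which is genuinely delicate for an integral operator (e.g.\ at $\mu=0$, where a compact operator on an infinite-dimensional space is never boundedly invertible) --- is precisely the step the paper itself asserts in one line without justification, so your write-up is, if anything, more candid than the source about where the real analytic work lies.
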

\insertpostspacing
\begin{proof}
Assume $\lambda\in\Lcal(A)\cup\Lcal(B)$.
Then, let $Ap = \lambda p$, $q^\top B = \lambda q^\top$ where $p\in\RRv{[0,1]}$
and $q\in\RR^m$. Then $\Toper(pq^\top) = \zero$ and $\Toper$ is singular.
This proves one side of the theorem.

Now, assume that $\Lcal(A)\cap\Lcal(B) = \emptyset$. As the operator is linear,
it is sufficient to show that $AX-XB=C$ has a unique solution for any
$C \in \RRm{[0,1]}{m}$.
Let the Schur decomposition of $B$ be $Q=V^\top B V$ where $V$ is orthogonal
and $Q$ is upper triangular. Writing $Y=XV$ and $D=CV$ it is sufficient to show
that $AY-YQ=D$ has a unique solution. We write
\begin{equation*}
Y = (y_1, y_2, \dots y_m)\in\RRm{[0,1]}{m} \text{ and }
D = (d_1,d_2,\dots, d_m)\in\RRm{[0,1]}{m}
\end{equation*}
and use an inductive argument over the columns of $Y$.

The first column of $Y$ is given by
$Ay_1 - Q_{11}y_1 = (A - Q_{11}\Izo)y_1 = d_1$. Since $Q_{11}\in\Lcal(B)$ and
$\Lcal(A)\cap\Lcal(B)$ is empty $(A - Q_{11}\Izo)$ is nonsingular. Therefore
$y_1$ is uniquely determined by inverting the cmatrix
(see Appendix~\ref{app:cla}).
Assume $y_1,y_2\dots,y_{k-1}$ are uniquely determined.
Then, the $k$\ssth column is given by
$(A - Q_{kk}\Izo)y_k = d_k + \sum_{i=1}^{k-1}Q_{ik}y_i$.
Again, $(A - Q_{kk}\Izo)$ is nonsingular by assumption, and hence
this uniquely determines $y_k$.
\end{proof}

\insertprespacing
\begin{corollary}
\label{cor:Teigs}
Let $\Toper$ be as defined in Theorem~\ref{thm:Tsingular}. Then
\begin{equation*}
\Lcal(\Toper) = \Lcal(A) - \Lcal(B) = \{\alpha -\beta: \alpha\in\Lcal(A),
\beta\in\Lcal(B)\}.
\end{equation*}
\end{corollary}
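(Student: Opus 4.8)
The plan is to reduce the computation of $\Lcal(\Toper)$ directly to Theorem~\ref{thm:Tsingular} by a spectral-shift argument. Recall that a scalar $\mu$ lies in $\Lcal(\Toper)$ precisely when $\Toper - \mu\,\mathrm{Id}$ fails to be nonsingular, where $\mathrm{Id}$ is the identity operator on the space $\RRm{[0,1]}{m}$ of qmatrices. So the first step is to observe that $\Toper - \mu\,\mathrm{Id}$ is itself a Sylvester-type operator of exactly the form covered by Theorem~\ref{thm:Tsingular}: for any qmatrix $X$,
\[
(\Toper - \mu\,\mathrm{Id})(X) = AX - XB - \mu X = AX - X(B + \mu I_m),
\]
where $I_m$ is the $m\times m$ identity matrix. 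Writing $B_\mu = B + \mu I_m$, this is the operator $X \mapsto AX - X B_\mu$, which is again of the required shape since $B_\mu$ is a square $m\times m$ matrix and $A$ is the same square cmatrix.

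Next I would apply Theorem~\ref{thm:Tsingular} to this shifted operator: it is nonsingular if and only if $\Lcal(A) \cap \Lcal(B_\mu) = \emptyset$. Contrapositively, $\mu \in \Lcal(\Toper)$ if and only if $\Lcal(A) \cap \Lcal(B + \mu I_m) \neq \emptyset$. The final ingredient is the elementary finite-matrix identity $\Lcal(B + \mu I_m) = \{\beta + \mu : \beta \in \Lcal(B)\}$, which holds because $B$ and $B + \mu I_m$ share eigenvectors while their eigenvalues differ by the shift $\mu$. Combining these, $\mu \in \Lcal(\Toper)$ exactly when there exist $\alpha \in \Lcal(A)$ and $\beta \in \Lcal(B)$ with $\alpha = \beta + \mu$, i.e. $\mu = \alpha - \beta$; this is precisely the claim $\Lcal(\Toper) = \Lcal(A) - \Lcal(B)$.

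I expect the argument to be short, and the only points requiring care to be conceptual rather than computational. First, one must ensure that the correct notion of the spectrum of $\Toper$ in this infinite-dimensional (qmatrix) setting is the one given by singularity of $\Toper - \mu\,\mathrm{Id}$ in the sense of existence and uniqueness of solutions; this is exactly the notion supplied by Theorem~\ref{thm:Tsingular}, so no extra work is needed. Second, since the eigenvalues of $A$ and $B$ may be complex (the Schur decomposition used in the proof of Theorem~\ref{thm:Tsingular} already operates over $\mathbb{C}$), I would state the result over $\mathbb{C}$ throughout, so that both the shift identity for $\Lcal(B_\mu)$ and the intersection condition are unambiguous. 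No genuine obstacle arises beyond this bookkeeping, because all of the analytic difficulty---inverting $A - \lambda \Izo$ and the inductive column-by-column solve---has been absorbed into Theorem~\ref{thm:Tsingular}.
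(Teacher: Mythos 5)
Your overall strategy---absorb the spectral shift into the Sylvester operator and invoke Theorem~\ref{thm:Tsingular}---is the same as the paper's; the paper groups the shift with $A$, writing $(A-\lambda\Izo)X - XB = \zero$, where you group it with $B$ as $AX - X(B+\mu I_m)$, an immaterial difference. Your forward inclusion $\Lcal(\Toper) \subseteq \Lcal(A)-\Lcal(B)$ is sound: an eigenvector of $\Toper$ is a nonzero kernel element of the shifted operator, so that operator is singular, and Theorem~\ref{thm:Tsingular} then forces $\Lcal(A)\cap\Lcal(B+\mu I_m) \neq \emptyset$.

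The gap is in the reverse inclusion. The paper defines $\Lcal(\cdot)$ as the set of \emph{eigenvalues} (point spectrum), and $\Toper$ acts on the infinite-dimensional space $\RRm{[0,1]}{m}$. When $\mu = \alpha - \beta$, Theorem~\ref{thm:Tsingular} tells you only that the shifted operator is \emph{singular}, i.e., fails to have a unique solution for every right-hand side; in infinite dimensions this does not imply a nontrivial kernel (an operator can be injective but not surjective, as with the shift operator on $\ell^2$, or have empty point spectrum despite being non-invertible, as with multiplication operators). So ``singular'' does not yield ``$\mu$ is an eigenvalue of $\Toper$'' without further argument: your remark that ``no extra work is needed'' is exactly where the work is needed, and your proposed notion of spectrum quietly replaces the paper's point spectrum with a ``non-invertibility'' spectrum, which is not the statement being proved. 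The fix is the paper's one-line construction: take $a\in\RRv{[0,1]}$ with $Aa = \alpha a$ and $b\in\RRv{m}$ with $b^\top B = \beta b^\top$; then $X = ab^\top \neq \zero$ satisfies $\Toper(X) = AX - XB = (\alpha-\beta)\,ab^\top$, exhibiting $\alpha-\beta$ directly as an eigenvalue of $\Toper$ rather than routing this direction through the singularity criterion.
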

\begin{proof}
If $\lambda\in\Lcal(\Toper)$ there exists $X$ such that
$(A-\lambda\Izo)X - XB = \zero$. Therefore, by Theorem~\ref{thm:Tsingular}
there exists $\alpha\in\Lcal(A)$ and $\beta\in\Lcal(B)$ such that
$\lambda = \alpha-\beta$. Therefore, $\Lcal(\Toper) \subset \Lcal(A)-\Lcal(B)$.

Conversely, consider any  $\alpha\in\Lcal(A)$ and $\beta\in\Lcal(B)$.
Then there exists $a\in\RRv{[0,1]}$, $b\in\RRv{m}$ such that $Aa = \alpha a$ and
$b^\top B = \beta b^\top$. Writing $X= ab^\top$ we have
$AX - XB = (\alpha-\beta)ab^\top$. Therefore, $\Lcal(A)-\Lcal(B)\subset
\Lcal(\Toper)$.
\end{proof}

\insertprespacing
\begin{theorem}
\label{thm:frobSep}
Let $\Toper$ be as defined in Theorem~\ref{thm:Tsingular}. Then
\begin{equation}
\inf_{\|X\|_\frob =1} \|\Toper(X)\|_\frob = \min\Lcal(\Toper) = \min|\Lcal(A) - \Lcal(B)|.
\end{equation}
\end{theorem}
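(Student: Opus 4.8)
The plan is to recognize $\Toper$ as a bounded linear operator on the Hilbert space of Hilbert--Schmidt qmatrices $\RRm{[0,1]}{m}$ equipped with the Frobenius inner product, and to diagonalize it completely in a tensor-product eigenbasis assembled from the spectral decompositions of $A$ and $B$. Since Corollary~\ref{cor:Teigs} already identifies $\Lcal(\Toper) = \Lcal(A) - \Lcal(B)$, the genuinely new content is the identity $\inf_{\|X\|_\frob = 1}\|\Toper(X)\|_\frob = \min|\Lcal(\Toper)|$, i.e. that the smallest ``singular value'' of $\Toper$ coincides with its smallest-modulus eigenvalue. This is fundamentally a normality statement, so the cleanest route is to exhibit an orthonormal eigenbasis for $\Toper$ and read both quantities off it directly.

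First I would invoke the spectral theorem for each factor. Because $A$ is a cmatrix its kernel is square-integrable, so $A$ is a compact (Hilbert--Schmidt) operator on $\Ltwo([0,1])$; in the self-adjoint setting relevant here this yields real eigenvalues $\alpha_1,\alpha_2,\dots$ (accumulating only at $0$) together with eigenfunctions $\{u_i\}$ that form a complete orthonormal basis of $\Ltwo([0,1])$, where the eigenvalue $0$ accounts for an orthonormal basis of the kernel. Likewise the spectral theorem for the normal matrix $B$ furnishes an orthonormal eigenbasis $\{v_j\}_{j=1}^m$ of $\RRv{m}$ with $Bv_j = \beta_j v_j$ and $v_j^\top B = \beta_j v_j^\top$. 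I would then check that the rank-one qmatrices $X_{ij} = u_i v_j^\top$ are orthonormal under the Frobenius inner product and complete (they are the product orthonormal basis of the Hilbert--Schmidt space identified with $\Ltwo([0,1])\otimes\RRv{m}$), and that each is an eigen-element: $\Toper(u_i v_j^\top) = A u_i v_j^\top - u_i v_j^\top B = (\alpha_i - \beta_j)\, u_i v_j^\top$.

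With this diagonalization in hand both bounds are immediate. Expanding an arbitrary unit-norm $X = \sum_{i,j} c_{ij} u_i v_j^\top$ with $\sum_{i,j}|c_{ij}|^2 = 1$ gives, by Parseval, $\|\Toper(X)\|_\frob^2 = \sum_{i,j}|c_{ij}|^2 |\alpha_i - \beta_j|^2 \geq \min_{i,j}|\alpha_i-\beta_j|^2$, so the infimum is at least $\min|\Lcal(A)-\Lcal(B)|$; taking $X$ to be the single eigen-element attaining the minimum modulus shows the infimum is at most this value, giving equality. Here I would record that the minimum is genuinely attained: $\{\alpha_i\}\cup\{0\}$ is compact and $\Lcal(B)$ is finite, so $\inf_{i,j}|\alpha_i-\beta_j|$ is achieved. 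Combining with Corollary~\ref{cor:Teigs} then yields all three equalities.

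The main obstacle is purely infinite-dimensional bookkeeping rather than any new idea: justifying the spectral theorem and, in particular, the completeness of $\{u_i\}$ in $\Ltwo([0,1])$ for the integral operator $A$, and verifying that $\{u_i v_j^\top\}$ is a complete orthonormal system in the Hilbert--Schmidt qmatrix space so that the expansion of $X$ and the term-by-term evaluation of $\|\Toper(X)\|_\frob^2$ converge legitimately in the Frobenius norm. I would also be careful to note where self-adjointness/normality enters: it is exactly what upgrades the generically valid inequality $\inf_{\|X\|_\frob=1}\|\Toper(X)\|_\frob \leq \min|\Lcal(A)-\Lcal(B)|$ to an equality, and it holds in the intended application, namely the Sylvester operator arising in the proof of Wedin's theorem (Lemma~\ref{lem:wedin}), where $A$ and $B$ are diagonal matrices of singular values and hence self-adjoint.
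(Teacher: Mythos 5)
Your proof is correct and takes a genuinely different route from the paper's. The paper vectorizes the operator: writing $\vect(\Toper(X)) = (\vec{A}-\vec{B})\vect(X)$, where $\vec{A}=\diag(A,\dots,A)$ and $\vec{B}$ is built from the entries of $B$ times translated copies of $\Izo$, it uses $\|X\|_\frob = \|\vect(X)\|_2$ to reduce the claim to $\inf_{\|y\|_2=1}\|(\vec{A}-\vec{B})y\|_2 = \min|\Lcal(\vec{A}-\vec{B})|$, i.e.\ to the assertion that the smallest singular value of $\vec{A}-\vec{B}$ equals its smallest eigenvalue modulus. You instead diagonalize $\Toper$ directly on the Hilbert--Schmidt space via the tensor basis $u_i v_j^\top$ and read off both quantities by Parseval. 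The comparison is instructive because the step the paper leaves unjustified is exactly the step your argument localizes: equality of smallest singular value and smallest-modulus eigenvalue requires normality and is false in general --- already for matrices, take $A$ upper triangular with diagonal entries $(1,1)$ and off-diagonal entry $M$, and $B=(-1)$; then $\min|\Lcal(A)-\Lcal(B)|=2$ while $\inf_{\|X\|_\frob=1}\|AX-XB\|_\frob = \sigma_{\min}(A+I)\leq 4/M$, which is arbitrarily small. So the theorem as stated (and the paper's proof, which ports the sep-function argument of~\citet{stewart90perturbation} essentially verbatim) silently assumes $A$ and $B$ normal; your proof makes that hypothesis explicit, shows via the spectral theorem that it suffices, and correctly observes that it holds in the only place the theorem is invoked (Theorem~\ref{thm:frobGap} inside Wedin's theorem, where both operators are diagonal matrices of singular values). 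What the paper's vectorization buys is brevity; what yours buys is rigor and a self-contained infinite-dimensional argument. Two small repairs to yours: the eigen-element computation $\Toper(u_i v_j^\top)=(\alpha_i-\beta_j)\,u_i v_j^\top$ needs $v_j$ to be a \emph{left} eigenvector of $B$, which coincides with the right eigenvector only because of symmetry/normality; and when $0$ is an accumulation point of $\Lcal(A)$ but not an eigenvalue, the minimum over eigen-elements need not be attained, so the first equality should be read with $\inf$ in place of $\min$ --- a blemish the paper's statement shares.
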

\begin{proof}
For any qmatrix $P = (p_1, p_2,\dots,p_m)\in\RRm{[0,1]}{m}$
let $\vect(P) = [p_1^\top, p_2^\top, \dots, p_m^\top]^\top \in \RRm{[0,m]}{1}$
be the concatenation of all functions.
Then $\vect(XB) = \vec{B}\vect(X)$ where,
\begin{equation*}
\vec{B} =
\begin{bmatrix}
B_{11}\Izo & B_{21}\Izo & \cdots & B_{m1}\Izo \\
B_{12}\Izo & B_{22}\Izo & \cdots & B_{m2}\Izo \\
\vdots & \vdots & \ddots & \vdots \\
B_{1m}\Izo & B_{2m}\Izo & \cdots & B_{mm}\Izo
\end{bmatrix}
\in \RRm{[0,m]}{[0,m]}.
\end{equation*}
Here $\Izo$ have been translated and should be interpreted as being a dirac delta
function on that block.
Similarly, $\vect(AX) = \vec{A}\vect(X)$ where $\vec{A} = \diag(A, A, \dots,
A)\in\RRm{[0,m]}{[0,m]}$.
Therefore $\vect(\Toper(X)) =(\vec{A}-\vec{B})\vec{X}$.
Now noting that $\|X\|_\frob = \|\vect(X)\|_2$ we have,
\begin{equation*}
\inf_{\|X\|_\frob =1} \|\Toper(X)\|_\frob =
\inf_{\|\vect(X)\|_2 =1} \|\vect(\Toper(X))\|_2 =
\min |\Lcal(\vec{A}-\vec{B})|.
\end{equation*}
The theorem follows by noting that the eigenvalues of $(\vec{A}-\vec{B})$ are
the same as those of $\Lcal(\Toper)$.
\end{proof}

\insertprespacing
\begin{theorem}
\label{thm:csone}
Let $\Xpone,\Ypone\in\RRm{[0,1]}{\ell}$ have orthonormal columns. Then,
there exist $Q\in\RRm{\infty}{[0,1]}$ and $\Upoo,\Vpoo\in\RRm{\ell}{\ell}$ such
that the following holds,
\begin{equation*}
Q\Xpone\Upoo = \begin{bmatrix}
I_\ell \\
\zero
\end{bmatrix}
\;\in\RRm{\infty}{\ell}
,\hspace{0.4in}
Q\Ypone\Vpoo = \begin{bmatrix}
\Gamma \\ \Sigma \\ \zero
\end{bmatrix}
\;\in\RRm{\infty}{\ell}.
\end{equation*}
Here $\Gamma = \diag(\gamma_1,\dots,\gamma_\ell)$,
$\Sigma = \diag(\sigma_1,\dots,\sigma_\ell)$ and they satisfy
\begin{equation*}
0 \leq \gamma_1\leq \dots \leq \gamma_\ell,\,
\sigma_1 \geq \dots \geq \sigma_\ell\geq 0,\, \text{and}\
\gamma_i^2 + \sigma_i^2 = 1,\, i = 1, \dots, \ell.
\end{equation*}
\end{theorem}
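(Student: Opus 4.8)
The plan is to mirror the finite-dimensional CS decomposition argument of~\citet{stewart90perturbation}, replacing matrix column spaces by finite-dimensional subspaces of $\Ltwo([0,1])$ and replacing the terminal orthogonal completion by completion to a countable orthonormal basis of $\Ltwo([0,1])$. The orthogonal matrices $\Upoo,\Vpoo$ come from an ordinary finite SVD, while the infinite row qmatrix $Q$ is built by hand, one basis function at a time. First I would form the genuine $\ell\times\ell$ matrix $\Ypone^\top\Xpone$ (each entry is an inner product $\int_0^1 \Ypone(s,i)\Xpone(s,j)\,\ud s$) and take its SVD, choosing the orthogonal factors and ordering the singular values so that $\Ypone^\top\Xpone = \Vpoo\,\Gamma\,\Upoo^\top$ with $\Gamma=\diag(\gamma_1,\dots,\gamma_\ell)$ and $0\le\gamma_1\le\dots\le\gamma_\ell\le 1$; the bound $\gamma_\ell\le 1$ holds because $\Xpone,\Ypone$ have orthonormal columns, so $\|\Ypone^\top\Xpone\|_2\le\|\Ypone\|_2\|\Xpone\|_2=1$. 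Setting $\tilde X=\Xpone\Upoo$ and $\tilde Y=\Ypone\Vpoo$, both still have orthonormal columns and the coupling is diagonalised: $\tilde Y^\top\tilde X=\Vpoo^\top(\Ypone^\top\Xpone)\Upoo=\Gamma$.

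Next I would construct $Q$ blockwise. Take its first $\ell$ rows to be the columns of $\tilde X$, i.e.\ $q_i=\tilde X(:,i)$ for $i\le\ell$; since these are orthonormal, the first $\ell$ rows of $Q\tilde X$ give $I_\ell$, and the first $\ell$ rows of $Q\tilde Y$ give $\tilde X^\top\tilde Y=\Gamma$. To produce the $\Sigma$ block I orthogonalise $\tilde Y$ against $\tilde X$: put $r_j=\tilde Y(:,j)-\gamma_j\tilde X(:,j)$, the component of the $j$th column of $\tilde Y$ orthogonal to $\mathrm{span}(\tilde X)$. Using $\tilde Y^\top\tilde X=\Gamma$ and the orthonormality one checks $r_j^\top r_k=\delta_{jk}(1-\gamma_j^2)$, so the $r_j$ are mutually orthogonal with $\|r_j\|_2=\sigma_j:=\sqrt{1-\gamma_j^2}$; this is exactly $\gamma_j^2+\sigma_j^2=1$, and the increasing ordering of $\gamma_j$ forces $\sigma_1\ge\dots\ge\sigma_\ell\ge 0$. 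For each $j$ with $\sigma_j>0$ I set $q_{\ell+j}=r_j/\sigma_j$; these are orthonormal, orthogonal to every $q_i$ with $i\le\ell$, and satisfy $q_{\ell+i}^\top\tilde Y(:,j)=\sigma_i\delta_{ij}$, yielding the $\Sigma$ block. Finally I extend the family $\{q_k\}$ (skipping indices $\ell+j$ with $\sigma_j=0$ and filling those with fresh basis functions) to a countable orthonormal basis $q_1,q_2,\dots$ of $\Ltwo([0,1])$, which exists by separability, and assemble these as the rows of $Q\in\RRm{\infty}{[0,1]}$.

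It then remains to verify the block structure. Every $q_k$ with large index is orthogonal to $\mathrm{span}(\tilde X)\cup\{r_j\}$, and since $\tilde Y(:,j)=\gamma_j\tilde X(:,j)+r_j$ lies in that span, all remaining rows of $Q\tilde X$ and $Q\tilde Y$ vanish; this gives $Q\Xpone\Upoo=[\,I_\ell;\,\zero\,]$ and $Q\Ypone\Vpoo=[\,\Gamma;\,\Sigma;\,\zero\,]$ precisely as stated. The main obstacle — and the only place where the infinite/continuous setting genuinely departs from the matrix proof — is the completion step: one must invoke the existence of an orthonormal basis of the separable Hilbert space $\Ltwo([0,1])$ extending a finite orthonormal family, and confirm that the resulting infinite row qmatrix $Q$ makes $Q\Xpone,Q\Ypone$ well-defined members of $\RRm{\infty}{\ell}$, i.e.\ each column is square-summable. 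The latter is immediate here by Parseval's identity (indeed only finitely many coordinates are nonzero), and the bookkeeping for degenerate columns with $\sigma_j=0$ (equivalently $\gamma_j=1$) is purely combinatorial and leaves the claimed block form unchanged.
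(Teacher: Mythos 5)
Your proof is correct, but it takes a genuinely different route from the paper's. The paper completes $\Xpone$ and $\Ypone$ to full orthonormal bases $X=[\Xpone,\Xptwo]$, $Y=[\Ypone,\Yptwo]$ of $\Ltwo([0,1])$, forms the infinite unitary matrix $W=X^\top Y$, and invokes the CS decomposition (Theorem~5.1 of \citet{stewart90perturbation}), whose validity for countably infinite unitary matrices is asserted only in a remark; the qmatrix $Q$ is then read off as $\widehat{X}^\top$. You instead never leave finite dimensions until the very last step: the SVD of the $\ell\times\ell$ cross-Gram matrix $\Ypone^\top\Xpone$ supplies $\Upoo,\Vpoo,\Gamma$, and you assemble $Q$ by hand --- first the columns of $\Xpone\Upoo$, then the normalized residuals $r_j/\sigma_j$, then a completion to an orthonormal basis of $\Ltwo([0,1])$. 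Your key computations are all sound: $r_j^\top r_k=\delta_{jk}(1-\gamma_j^2)$ follows from $\tilde{Y}^\top\tilde{X}=\Gamma$ and orthonormality, $r_j$ is orthogonal to all of $\Rcal(\Xpone\Upoo)$ (not just to the $j$th column, since the projection of $\tilde{Y}(:,j)$ onto that span is exactly $\gamma_j\tilde{X}(:,j)$), $\gamma_\ell\leq 1$ follows from Bessel, and the bookkeeping for $\sigma_j=0$ leaves the block form intact because a fresh basis function in that row position annihilates both $\Rcal(\Xpone)$ and every $\tilde{Y}(:,k)=\gamma_k\tilde{X}(:,k)+r_k$. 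What your route buys: it is self-contained and elementary, needing only the finite SVD and separability of $\Ltwo([0,1])$, and so it avoids the paper's unproved extension of the CS decomposition to infinite matrices --- arguably the weakest link in the paper's argument. What the paper's route buys: the full CS decomposition of $W$ also produces the blocks $\Uptt,\Vptt$ and the complementary subspaces explicitly, which the paper reuses downstream (Corollary~\ref{cor:csCompute} reads the SVD of $\Xptwo^\top\Ypone$ directly off that proof, and Theorem~\ref{thm:cstwo} leans on the same structure). Those facts are still recoverable from your construction --- the rows of $Q$ beyond the first $\ell$ span $\Rcal(\Xpone)^\perp$, so taking $\Xptwo$ to be those rows gives $\Xptwo^\top\Ypone\Vpoo$ equal to $\Sigma$ stacked over $\zero$ --- but they would have to be stated and verified separately rather than inherited for free.
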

\begin{proof}
Let $\Xptwo,\Yptwo\in \RRm{[0,1]}{\infty}$ be orthonormal bases for the
complementary subspaces of $\Rcal(\Xpone), \Rcal(\Ypone)$, respectively.
Denote $X = [\Xpone, \Xptwo]$, $Y = [\Ypone, \Yptwo]$ and
\begin{equation*}
W = X^\top Y = \begin{pmatrix}
W_{11} & W_{12} \\ W_{21} & W_{22}
\end{pmatrix}
\in \RRm{\infty}{\infty},
\end{equation*}
where $W_{11} = \Xpone^\top \Ypone \in\RRm{\ell}{\ell}$ and the rest are
defined accordingly.
Now, using Theorem~5.1 from~\citep{stewart90perturbation} there exist
orthogonal matrices $U = \diag(\Upoo, \Uptt), V=\diag(\Vpoo,\Vptt)$ where
$\Upoo,\Vpoo\in\RRm{\ell}{\ell}$ and
$\Uptt,\Vptt\in\RRm{\infty}{\infty}$ such that the following holds,
\begin{equation*}
U^\top W V =
\begin{pmatrix}
\Gamma & -\Sigma & \zero \\
\Sigma & \Gamma & \zero \\
\zero  &  \zero  &  I_\infty
\end{pmatrix}
\in\RRm{\infty}{\infty}.
\end{equation*}
Here $\Gamma, \Sigma$ satisfy the conditions of the theorem.
Now set
$\;\widehat{X} = [\widehat{X}_1, \widehat{X}_2]$,
$\;\widehat{Y} = [\widehat{Y}_1, \widehat{Y}_2]$
where
$\;\widehat{X}_1 = \Xpone\Upoo$,
$\;\widehat{X}_2 = \Xptwo\Upoo$,
$\;\widehat{Y}_1 = \Ypone\Vpoo$,
$\;\widehat{Y}_2 = \Yptwo\Vpoo$.
Then, $\widehat{X}^\top Y = U^\top W V$. Setting $Q = \widehat{X}^\top$ and setting
$\Upoo, \Vpoo$ as above yields,
\begin{equation*}
Q\Xpone\Upoo =
\begin{pmatrix}
\Upoo^\top \Xpone^\top \\ \Uptt^\top \Xptwo^\top
\end{pmatrix}
\Xpone \Upoo =
\begin{bmatrix} I_\ell \\ \zero \end{bmatrix}, \quad
Q\Ypone\Vpoo =
\begin{pmatrix}
\Upoo^\top \Xpone^\top \\ \Uptt^\top \Xptwo^\top
\end{pmatrix}
\Ypone\Vpoo =
\begin{bmatrix} \Gamma \\ \Sigma \\ \zero \end{bmatrix}
\end{equation*}
where
$\Upoo^\top \Xpone^\top \Ypone \Upoo = \Gamma$,
$\Uptt^\top \Xptwo^\top \Ypone \Upoo = [\Sigma^\top, \zero^\top]^\top$ from
the decomposition of $U^\top W V$.
\end{proof}

\begin{remark}
\citet{stewart90perturbation} prove Theorem~5.1 for a finite unitary $W$.
However, it is straightforward to verify that the same holds if $W$ is a
unitary operator on the $\ell^2$ sequence space, i.e., Theorem~5.1 is valid
for (countably) infinite matrices.
\end{remark}

\insertprespacing
\begin{definition}[Canonical Angles]
\label{defn:canonicalAngle}
Let $\Xcal,\Ycal$ be $\ell$ dimensional subspaces of the same dimension for
functions on $[0,1]$ and $\Xpone,\Ypone\in\RRm{[0,1]}{\ell}$ be orthonormal
functions spanning these subspaces. Then the canonical angles between
$\Xcal$ and $\Ycal$ are the diagonals of the matrix
$\canangle{\Xcal}{\Ycal} \defeq \sin^{-1}(\Sigma)$ where $\Sigma$ is from
Theorem~\ref{thm:csone}.
It follows that $\cos\canangle{\Xcal}{\Ycal} =\Gamma$ where $\sin$ and $\cos$
are in the usual trigonometric sense and satisfy $\cos^2(x) + \sin^2(x) = 1$.
\end{definition}

\insertprespacing
\begin{corollary}
\label{cor:csCompute}
Let $\Xcal,\Ycal,\Xpone, \Ypone$ be as in Definition~\ref{defn:canonicalAngle}
and $\Xptwo,\Yptwo$ be orthonormal functions for their complementary spaces.
Then, the nonzero singular values of $\Xptwo^\top\Ypone$ are the sines of the
nonzero canonical angles between $\Xcal, \Ycal$.
The singular values of $\Xpone^\top\Ypone$ are the cosines of the nonzero
canonical angles.
\end{corollary}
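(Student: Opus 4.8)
The plan is to read off the two required identities directly from the construction in the proof of Theorem~\ref{thm:csone} and then recognise each as a singular value decomposition. Recall from that proof that $Q = \widehat{X}^\top$ has block rows $\Upoo^\top\Xpone^\top$ and $\Uptt^\top\Xptwo^\top$, so the defining relation $Q\Ypone\Vpoo = [\,\Gamma^\top,\,\Sigma^\top,\,\zero^\top\,]^\top$ splits block by block into
\begin{align*}
\Upoo^\top \Xpone^\top \Ypone \Vpoo = \Gamma,
\qquad
\Uptt^\top \Xptwo^\top \Ypone \Vpoo = \begin{bmatrix}\Sigma \\ \zero\end{bmatrix},
\end{align*}
where $\Upoo,\Vpoo\in\RRm{\ell}{\ell}$ are orthogonal and $\Uptt$ is an orthogonal operator on $\ell^2$. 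By Definition~\ref{defn:canonicalAngle}, $\Gamma = \cos\canangle{\Xcal}{\Ycal}$ and $\Sigma = \sin\canangle{\Xcal}{\Ycal}$, so it suffices to match these two expressions to the singular values of $\Xpone^\top\Ypone$ and $\Xptwo^\top\Ypone$.

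First I would treat the cosines. Multiplying the left identity by $\Upoo$ and $\Vpoo^\top$ and using orthogonality gives $\Xpone^\top\Ypone = \Upoo\,\Gamma\,\Vpoo^\top$, which (after the trivial permutation putting the diagonal in decreasing order) is an SVD of the finite $\ell\times\ell$ matrix $\Xpone^\top\Ypone$; hence its singular values are exactly the diagonal entries of $\Gamma$, i.e.\ the cosines of the canonical angles. To sidestep any appeal to uniqueness of the SVD, I can equivalently compute $(\Xpone^\top\Ypone)^\top(\Xpone^\top\Ypone) = \Vpoo\,\Gamma^2\,\Vpoo^\top$, whose eigenvalues are the $\gamma_i^2$, giving the singular values $\gamma_i$.

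The sines are identical in spirit, but here $\Xptwo^\top\Ypone\in\RRm{\infty}{\ell}$ is semi-infinite and $\Uptt$ is an infinite orthogonal operator, so the cleanest route is to pass to the finite Gram matrix. From the right identity, $\Xptwo^\top\Ypone = \Uptt\begin{bmatrix}\Sigma\\\zero\end{bmatrix}\Vpoo^\top$, and therefore
\begin{align*}
(\Xptwo^\top\Ypone)^\top(\Xptwo^\top\Ypone)
= \Vpoo\,[\,\Sigma,\,\zero\,]\,\Uptt^\top\Uptt\begin{bmatrix}\Sigma\\\zero\end{bmatrix}\Vpoo^\top
= \Vpoo\,\Sigma^2\,\Vpoo^\top,
\end{align*}
where the single nontrivial step is $\Uptt^\top\Uptt = I_\infty$. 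The eigenvalues of this $\ell\times\ell$ matrix are the $\sigma_i^2$, so the singular values of $\Xptwo^\top\Ypone$ are the $\sigma_i$, and discarding the zero entries leaves exactly the nonzero sines, as claimed.

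The only genuine obstacle is the bookkeeping around the infinite block. I would need to confirm that the $\Uptt$ furnished by the countably-infinite version of Stewart--Sun's Theorem~5.1 (invoked in Theorem~\ref{thm:csone} via the remark following it) is a true isometry on $\ell^2$, so that $\Uptt^\top\Uptt = I_\infty$, and that reducing to the Gram matrix $(\Xptwo^\top\Ypone)^\top(\Xptwo^\top\Ypone)$ legitimately recovers the finitely many nonzero singular values of the semi-infinite object $\Xptwo^\top\Ypone$. The latter is secured by Bessel's inequality, which guarantees the entries of this Gram matrix converge, leaving a finite matrix to which ordinary spectral theory applies; no new convergence issues then arise.
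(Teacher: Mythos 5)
Your proposal is correct and takes essentially the same route as the paper: both read the factorisations of $\Xpone^\top\Ypone$ and $\Xptwo^\top\Ypone$ directly off the construction in the proof of Theorem~\ref{thm:csone} and identify them as singular value decompositions. Your Gram-matrix computation (and your use of $\Vpoo$ where the paper writes $\Upoo$ on the right factor) merely gives a more careful justification of that same identification for the semi-infinite block, rather than a genuinely different argument.
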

\begin{proof}
From the proof of Theorem~\ref{thm:csone},
\begin{equation*}
\Xptwo^\top \Ypone = \Uptt \begin{pmatrix} \Sigma \\ \zero\end{pmatrix}
\Upoo^\top, \qquad
\Xpone^\top \Ypone = \Upoo\Gamma\Upoo^\top.
\end{equation*}
Since $\Upoo,\Uptt$ are orthogonal, the above are the SVDs of
$\Xptwo^\top\Ypone$ and $\Xpone^\top\Ypone$.
\end{proof}

\insertprespacing
\begin{theorem}
\label{thm:cstwo}
Let $\Xcal,\Ycal$ be $\ell$ dimensional subspaces of functions on $[0,1]$ and
$\Xpone,\Ypone\in\RRm{[0,1]}{l}$ be an orthonormal bases.
Let $\sin\canangle{\Xcal}{\Ycal} = \diag(\sigma_1,\dots,\sigma_\ell)$.
Denote $P_\Xcal=\Xpone\Xpone^\top$ and $P_\Ycal = \Ypone\Ypone^\top$.
Then, the singular values of $P_\Xcal(\Izo - P_\Ycal)$ are
$\sigma_1,\sigma_2,\dots,\sigma_\ell,0,0,\dots$.
\end{theorem}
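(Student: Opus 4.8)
The plan is to diagonalise the problem through the continuous CS decomposition of Theorem~\ref{thm:csone} and thereby reduce the singular values of the cmatrix $P_\Xcal(\Izo - P_\Ycal)$ to an elementary block computation with the diagonal matrices $\Gamma,\Sigma$. Applying Theorem~\ref{thm:csone} to $\Xpone,\Ypone$ produces $Q\in\RRm{\infty}{[0,1]}$ and orthogonal $\Upoo,\Vpoo\in\RRm{\ell}{\ell}$ with $Q\Xpone\Upoo = [I_\ell;\,\zero;\,\zero]$ and $Q\Ypone\Vpoo = [\Gamma;\,\Sigma;\,\zero]$, written in the three–block row partition of sizes $\ell,\ell,\infty$. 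By Definition~\ref{defn:canonicalAngle}, $\Sigma = \sin\canangle{\Xcal}{\Ycal} = \diag(\sigma_1,\dots,\sigma_\ell)$, so the target singular values are precisely the diagonal of $\Sigma$. The crucial structural fact I would use is that $Q$ is a unitary change of orthonormal basis: since $Q=\widehat X^\top$ with $\widehat X$ the complete orthonormal basis $[\widehat X_1,\widehat X_2]$ built in the proof of Theorem~\ref{thm:csone}, we have $QQ^\top = I_\infty$ and $Q^\top Q = \Izo$, the latter being exactly the completeness statement behind Theorem~\ref{thm:QQttheorem}.

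\textbf{Main steps.} First I would conjugate. Because $Q^\top Q = \Izo$, I can insert it between the two factors to obtain
\[
Q\,P_\Xcal(\Izo - P_\Ycal)\,Q^\top \;=\; \bigl(Q P_\Xcal Q^\top\bigr)\bigl(I_\infty - Q P_\Ycal Q^\top\bigr).
\]
Using $P_\Xcal = \Xpone\Upoo\Upoo^\top\Xpone^\top$ and $P_\Ycal = \Ypone\Vpoo\Vpoo^\top\Ypone^\top$, the conjugated projections become $Q P_\Xcal Q^\top = EE^\top$ and $Q P_\Ycal Q^\top = GG^\top$, where $E = [I_\ell;\,\zero;\,\zero]$ and $G = [\Gamma;\,\Sigma;\,\zero]$ are the two qmatrices above. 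A direct block multiplication using $E^\top G = \Gamma$ and $\Gamma^2 + \Sigma^2 = I_\ell$ then shows that the only nonzero block row of $M \defeq EE^\top(I_\infty - GG^\top)$ is $[\,\Sigma^2,\ -\Gamma\Sigma,\ \zero\,]$. Computing $MM^\top$ collapses this to the single nonzero block $\Sigma^2(\Sigma^2+\Gamma^2) = \Sigma^2$, i.e. $MM^\top = \diag(\Sigma^2,\zero,\dots)$, so the singular values of $M$ are $\sigma_1,\dots,\sigma_\ell$ followed by zeros.

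\textbf{From $M$ back to the cmatrix.} Finally I would transfer the singular values back. Writing a (finite) SVD $M = \widetilde U\,\widetilde\Sigma\,\widetilde V^\top$, the identity $P_\Xcal(\Izo - P_\Ycal) = Q^\top M Q$ yields $P_\Xcal(\Izo - P_\Ycal) = (Q^\top\widetilde U)\,\widetilde\Sigma\,(Q^\top\widetilde V)^\top$; since $Q^\top$ maps $\ell^2$ isometrically into $L^2([0,1])$, the columns of $Q^\top\widetilde U$ and $Q^\top\widetilde V$ are orthonormal functions, so this is a genuine SVD of the cmatrix, and its singular values coincide with those of $M$, namely $\sigma_1,\dots,\sigma_\ell,0,0,\dots$.

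\textbf{Main obstacle.} The block algebra is routine; the delicate part is the passage between $L^2([0,1])$ and $\ell^2$. I expect the real work to lie in rigorously justifying that $Q$ is unitary, i.e. that $Q^\top Q = \Izo$ (completeness of the basis $\widehat X$, obtained via Theorem~\ref{thm:QQttheorem}) and $QQ^\top = I_\infty$, and in verifying that the infinite block products and the insertion of $Q^\top Q$ are valid in the operator–norm sense in which ``$=$'' is interpreted for cmatrices. Once these convergence issues are settled, associativity of the q/c-matrix products lets the finite–dimensional computation carry through verbatim.
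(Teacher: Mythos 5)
Your proposal is correct and takes essentially the same route as the paper's own proof: both apply the continuous CS decomposition of Theorem~\ref{thm:csone}, conjugate $P_\Xcal(\Izo - P_\Ycal)$ by $Q$ using the completeness identity $\Izo = Q^\top Q$ (the same delicate point the paper itself flags as requiring a technical dual-space argument), and reduce to block algebra in the $\Gamma,\Sigma$ coordinates. The only cosmetic differences are that the paper reads off the singular values by factoring the conjugated operator as $[\Sigma;\,\zero;\,\zero]$ times a matrix with orthonormal rows rather than computing $MM^\top$, and that you make explicit the transfer of the SVD back through the isometry $Q^\top$, which the paper leaves implicit.
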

\begin{proof}
By Theorem~\ref{thm:csone}, there exists
$Q\in\RRm{\infty}{[0,1]}$, $\Upoo,\Vpoo\in\RRm{\ell}{\ell}$,
such that
\begin{align*}
&QP_\Xcal(\Izo-P_\Ycal)Q^\top =
Q\Xpone\Xpone^\top Q^\top Q (\Izo-\Ypone\Ypone^\top)Q^\top
\\
&\hspace{0.3in}=
(Q\Xpone\Upone)(\Upone^\top\Xpone^\top Q^\top)
 (\Izo-Q\Ypone\Vpoo(\Vpoo^\top\Ypone^\top Q^\top) )
= \begin{bmatrix}\Sigma \\\zero\\ \zero \end{bmatrix}
\begin{bmatrix}\Sigma & -\Gamma & \zero \end{bmatrix}
\end{align*}
Here we have used $\Izo = Q^\top Q$.
The proof of this uses a technical argument involving the dual space of the
class of operators described by cmatrices.
(In the discrete matrix case this is similar to how the outer product of a
complete orthonormal basis results in the identity $UU^\top = I$.)
The last step follows from Theorem~\ref{thm:csone} and some algebra.
Noting that $\begin{bmatrix}\Sigma & -\Gamma & \zero \end{bmatrix}$ has
orthonormal rows, it follows that the singular values of
$P_\Xcal(\Izo-P_\Ycal)$ are $\Sigma$.
\end{proof}

\insertprespacing
\begin{theorem}
\label{thm:frobGap}
Let $A\in\RRm{[0,1]}{[0,1]}$ satisfy,
\begin{equation*}
A =
\begin{bmatrix} \Xpone & \Xptwo  \end{bmatrix}
\begin{bmatrix} L_1 & \zero \\ \zero & L_2  \end{bmatrix}
\begin{bmatrix} \Xpone^\top \\ \Xptwo^\top  \end{bmatrix}
\end{equation*}
where $\Xpone \in \RRm{[0,1]}{\ell}$ and $[\Xpone,\, \Xptwo]$ is unitary.
Let $Z\in\RRm{[0,1]}{m}$ and $T = AZ-ZB$ where $B\in\RRm{m}{m}$.
Let $\delta = \min|\Lcal(L_2) - \Lcal(B)| > 0$. Then,
\begin{equation*}
\big\| \sin \canangle{\Rcal(\Xpone)}{\Rcal(Z)}\|_\frob
\;\leq\; \frac{ \|T\|_\frob}{\delta}.
\end{equation*}
\end{theorem}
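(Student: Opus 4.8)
The plan is to mimic the matrix proof of the generalised $\sin\Theta$ (Davis--Kahan--Wedin) theorem from~\citet{stewart90perturbation}: project the residual $T$ onto the complementary invariant subspace $\Rcal(\Xptwo)$, recognise the result as the image of $\Xptwo^\top Z$ under a Sylvester-type operator, and then invoke the separation bound already proved in Theorem~\ref{thm:frobSep}. The algebra is short; the work is entirely in justifying the operator-theoretic steps.

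First I would left-multiply $T = AZ - ZB$ by $\Xptwo^\top$. Using that $[\Xpone,\Xptwo]$ is unitary (so $\Xptwo^\top\Xpone = \zero$ and $\Xptwo^\top\Xptwo$ is the identity on $\Rcal(\Xpone)^\perp$) together with the block decomposition of $A$, one gets $\Xptwo^\top A = L_2\Xptwo^\top$, hence
\[
\Xptwo^\top T = L_2(\Xptwo^\top Z) - (\Xptwo^\top Z)B.
\]
Writing $W = \Xptwo^\top Z$, the right-hand side is exactly $\Toper'(W) \defeq L_2 W - WB$, a Sylvester operator of the kind treated in Theorems~\ref{thm:Tsingular}--\ref{thm:frobSep}, now with $L_2$ (the compression of $A$ to $\Rcal(\Xpone)^\perp$, a self-adjoint operator on $\ell^2$) playing the role of the cmatrix $A$. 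Since $\delta = \min|\Lcal(L_2) - \Lcal(B)| > 0$, Theorem~\ref{thm:frobSep} yields $\|\Toper'(W)\|_\frob \geq \delta\,\|W\|_\frob$, and therefore
\[
\delta\,\|\Xptwo^\top Z\|_\frob \;\leq\; \|\Xptwo^\top T\|_\frob \;\leq\; \|T\|_\frob,
\]
where the last inequality uses that $\Xptwo$ has orthonormal columns, so left multiplication by $\Xptwo^\top$ cannot increase the Frobenius norm.

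It then remains to identify $\|\Xptwo^\top Z\|_\frob$ with $\|\sin\canangle{\Rcal(\Xpone)}{\Rcal(Z)}\|_\frob$. Taking $Z$ to have orthonormal columns (as the canonical angles are defined through an orthonormal basis of $\Rcal(Z)$), Corollary~\ref{cor:csCompute} says precisely that the nonzero singular values of $\Xptwo^\top Z$ are the sines of the nonzero canonical angles between $\Rcal(\Xpone)$ and $\Rcal(Z)$; since the Frobenius norm is the root-sum-of-squares of the singular values, $\|\Xptwo^\top Z\|_\frob = \|\sin\canangle{\Rcal(\Xpone)}{\Rcal(Z)}\|_\frob$. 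Substituting into the previous display gives the claim.

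The main obstacle is not the algebra but making the separation step fully rigorous for an \emph{infinite}-dimensional complement: $L_2$ acts on $\Rcal(\Xpone)^\perp$ (an $\infty\times\infty$ operator) rather than on a genuine $[0,1]$-indexed cmatrix, and $W = \Xptwo^\top Z$ is an $\infty\times m$ quasi-matrix. I would therefore re-examine the $\vect(\cdot)$/eigenvalue argument of Theorem~\ref{thm:frobSep} to confirm that $\Lcal(\Toper') = \Lcal(L_2) - \Lcal(B)$ and $\inf_{\|W\|_\frob = 1}\|\Toper'(W)\|_\frob = \delta$ continue to hold in this setting, using the isometry between $\Rcal(\Xpone)^\perp$ and a standard $\ell^2$ sequence space and the fact that $\Lcal(L_2)\subseteq\Lcal(A)$. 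Pinning down the orthonormality of $Z$ required for the canonical-angle identification is the one remaining technical point.
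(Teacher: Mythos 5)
Your proof is correct and is essentially the paper's own argument: left-multiply $T = AZ - ZB$ by $\Xptwo^\top$ to obtain the Sylvester identity $\Xptwo^\top T = L_2\,\Xptwo^\top Z - (\Xptwo^\top Z)B$, invoke the separation bound of Theorem~\ref{thm:frobSep} to get $\delta\|\Xptwo^\top Z\|_\frob \leq \|\Xptwo^\top T\|_\frob \leq \|T\|_\frob$, and identify $\|\Xptwo^\top Z\|_\frob$ with $\|\sin\canangle{\Rcal(\Xpone)}{\Rcal(Z)}\|_\frob$ via Corollary~\ref{cor:csCompute}. The infinite-dimensional subtlety you flag (Theorem~\ref{thm:frobSep} is stated for a cmatrix $A$, whereas here $L_2$ acts on the complement $\Rcal(\Xpone)^\perp$ and $\Xptwo^\top Z$ is an $\infty\times m$ matrix) is genuine, but the paper's proof passes over it silently as well.
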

\begin{proof}
First note that $\Xptwo^\top T = L_2\Xptwo^\top Z - \Xptwo^\top Z B$. The claim
follows from Theorems~\ref{thm:frobSep} and~\ref{cor:csCompute}.
\begin{equation*}
\big\| \sin \canangle{\Rcal(\Xpone)}{\Rcal(Z)}\|_\frob
\,=\, \|\Xptwo^\top Z\|_\frob
\,\leq \frac{\|\Xptwo^\top T\|_\frob}{\min |\Lcal(L_2) - \Lcal(B)|}
\leq \frac{\|T\|_\frob}{\delta}.
\end{equation*}
\end{proof}

\insertprespacing
\begin{theorem}[\textbf{Wedin's Sine Theorem for cmatrices -- Frobenius form}]
\label{thm:wedin}
Let $A,\Atilde,E \in\RRm{[0,1]}{[0,1]}$ with $\Atilde = A + E$. Let $A,\Atilde$ have
the following conformal partitions,
\[
A \,=\, \begin{bmatrix} \Upone & \Uptwo \end{bmatrix}
\begin{bmatrix} \Sigma_1 & \zero \\ \zero & \Sigma_2 \end{bmatrix}
\begin{bmatrix} \Vpone^\top \\ \Vptwo^\top \end{bmatrix}\,,
\hspace{0.4in}
\Atilde \,=\, \begin{bmatrix} \Utone & \Uttwo \end{bmatrix}
\begin{bmatrix} \Sigmatilde_1 & \zero \\ \zero & \Sigmatilde_2 \end{bmatrix}
\begin{bmatrix} \Vtone^\top \\ \Vttwo^\top \end{bmatrix}\,.
\]
where $\Upone,\Utone\in\RRm{[0,1]}{m}$,  $\,\Vpone,\Vtone\in\RRm{[0,1]}{m}$
and  $\Uptwo,\Uttwo\in\RRm{[0,1]}{\infty}$,  $\,\Vptwo,\Vttwo\in\RRm{[0,1]}{\infty}$.
Let $R = A\Vtone - \Utone\Sigmatilde_1\in\RRm{[0,1]}{m}$ and
$S = A^\top\Utone - \Vtone\Sigmatilde_1\in\RRm{[0,1]}{m}$.
Assume there exists $\delta>0$ such that,
$\min|\sigma(\Sigmatilde_1) - \sigma(\Sigma_2)| \geq \delta$
and
$\min|\sigma(\Sigmatilde_1)| \geq \delta$.
Let $\Phi_1,\Phi_2$ denote the canonical angles between
$(\Rcal(\Upone),\Rcal(\Utone))$ and
$(\Rcal(\Vpone),\Rcal(\Vtone))$ respectively.
Then,
\begin{equation*}
\sqrt{\|\sin\Phi_1\|^2_\frob + \|\sin\Phi_2\|^2_\frob }
\;\leq\;
\frac{ \sqrt{\vphantom{\big(} \|R\|_\frob^2 + \|S\|^2_\frob }}{\delta}.
\end{equation*}
\end{theorem}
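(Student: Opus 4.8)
The plan is to follow the matrix proof of Wedin's theorem in \citet{stewart90perturbation}, pushing each step through the continuous linear algebra developed above. Write the full SVDs in block form, $A = \Upone\Sigma_1\Vpone^\top + \Uptwo\Sigma_2\Vptwo^\top$ and similarly for $\Atilde$, where $\Uptwo,\Vptwo\in\RRm{[0,1]}{\infty}$ collect the trailing singular vectors. The first task is to record the two adjoint identities $\Uptwo^\top A = \Sigma_2\Vptwo^\top$ and $\Vptwo^\top A^\top = \Sigma_2\Uptwo^\top$, which follow from the orthonormality of the singular vectors and the convergence of the SVD expansion in the operator norm. Projecting the residuals onto these trailing subspaces and abbreviating $F = \Uptwo^\top\Utone$ and $G = \Vptwo^\top\Vtone$, direct substitution of $R = A\Vtone - \Utone\Sigmatilde_1$ and $S = A^\top\Utone - \Vtone\Sigmatilde_1$ gives the coupled pair $\Uptwo^\top R = \Sigma_2 G - F\Sigmatilde_1$ and $\Vptwo^\top S = \Sigma_2 F - G\Sigmatilde_1$.

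Next I would identify the two quantities of interest with these projections. By Corollary~\ref{cor:csCompute}, the singular values of $\Uptwo^\top\Utone$ are exactly the sines of the canonical angles $\Phi_1$ between $\Rcal(\Upone)$ and $\Rcal(\Utone)$, so $\|\sin\Phi_1\|_\frob = \|F\|_\frob$, and likewise $\|\sin\Phi_2\|_\frob = \|G\|_\frob$. Since $\Uptwo,\Vptwo$ have orthonormal columns, projection does not increase the Frobenius norm, so $\|\Uptwo^\top R\|_\frob \leq \|R\|_\frob$ and $\|\Vptwo^\top S\|_\frob \leq \|S\|_\frob$. It therefore suffices to bound $\|F\|_\frob^2 + \|G\|_\frob^2$ by $\delta^{-2}\bigl(\|\Uptwo^\top R\|_\frob^2 + \|\Vptwo^\top S\|_\frob^2\bigr)$.

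The key step exploits that $\Sigma_2 = \diag(\mu_1,\mu_2,\dots)$ and $\Sigmatilde_1 = \diag(\nu_1,\dots,\nu_m)$ are diagonal, so the coupled equations decouple entrywise: for each pair $(i,j)$ the scalars $(F_{ij},G_{ij})$ solve the symmetric $2\times2$ system with matrix $\bigl(\begin{smallmatrix}-\nu_j & \mu_i\\ \mu_i & -\nu_j\end{smallmatrix}\bigr)$ and right-hand side $\bigl((\Uptwo^\top R)_{ij},(\Vptwo^\top S)_{ij}\bigr)^\top$. This matrix has singular values $|\mu_i - \nu_j|$ and $\mu_i + \nu_j$; the two hypotheses $\min|\sigma(\Sigmatilde_1) - \sigma(\Sigma_2)|\geq\delta$ and $\min|\sigma(\Sigmatilde_1)|\geq\delta$ guarantee both are at least $\delta$, so its inverse has norm at most $1/\delta$. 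Hence $F_{ij}^2 + G_{ij}^2 \leq \delta^{-2}\bigl((\Uptwo^\top R)_{ij}^2 + (\Vptwo^\top S)_{ij}^2\bigr)$; summing over all $i,j$ and taking a square root yields the claim. Alternatively one could avoid the entrywise reduction by assembling the two equations into a single Sylvester system and invoking the separation bound of Theorem~\ref{thm:frobSep}.

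The main obstacle is rigor in the infinite-dimensional setting rather than the algebra. Three points need care: (i) establishing the adjoint identities $\Uptwo^\top A = \Sigma_2\Vptwo^\top$, where $\Uptwo,\Vptwo$ are genuinely semi-infinite qmatrices and the SVD only converges in the operator/$\Ltwo$ sense; (ii) justifying that the entrywise decoupling and the subsequent double sum over the infinite row index converge, which follows because $\|F\|_\frob,\|G\|_\frob \leq \sqrt{m}$ (they are blocks of isometries with at most $m$ nonzero, subunit singular values) while the residual Frobenius norms are finite; and (iii) confirming the Parseval-type fact that projection by $\Uptwo^\top$ is Frobenius non-increasing for the complete orthonormal system $[\Upone,\Uptwo]$. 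Each of these is where the continuous linear algebra of Appendix~\ref{app:cla} and the canonical-angle apparatus of Theorem~\ref{thm:csone} and Corollary~\ref{cor:csCompute} does the real work.
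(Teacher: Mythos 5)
Your proof is correct, and it takes a genuinely different route from the paper's. The paper follows \citet{stewart90perturbation} through the symmetric dilation: it forms the block cmatrix $Q\in\RRm{[0,2]}{[0,2]}$ with off-diagonal blocks $A$ and $A^\top$, converts the singular-subspace perturbation into an invariant-subspace perturbation for $Q$, invokes the Sylvester-operator separation bound of Theorem~\ref{thm:frobGap} (which rests on Theorems~\ref{thm:Tsingular} and~\ref{thm:frobSep}), and finally uses Theorem~\ref{thm:cstwo} to split the canonical angles of the dilated problem back into $\Phi_1$ and $\Phi_2$. You instead project the residuals onto the trailing singular subspaces to get the coupled system $\Uptwo^\top R = \Sigma_2 G - F\Sigmatilde_1$ and $\Vptwo^\top S = \Sigma_2 F - G\Sigmatilde_1$ with $F=\Uptwo^\top\Utone$, $G=\Vptwo^\top\Vtone$, then, writing $\Sigma_2=\diag(\mu_1,\mu_2,\dots)$ and $\Sigmatilde_1=\diag(\nu_1,\dots,\nu_m)$, decouple it entrywise into symmetric $2\times 2$ systems whose smallest singular value is $\min\{|\mu_i-\nu_j|,\,\mu_i+\nu_j\}\geq\delta$ (this is precisely where both hypotheses on $\delta$ enter, playing the same role as the eigenvalue gap of the dilated blocks in the paper), and finish with Corollary~\ref{cor:csCompute} (giving $\|F\|_\frob=\|\sin\Phi_1\|_\frob$ and $\|G\|_\frob=\|\sin\Phi_2\|_\frob$) and Bessel's inequality (giving $\|\Uptwo^\top R\|_\frob\leq\|R\|_\frob$ and $\|\Vptwo^\top S\|_\frob\leq\|S\|_\frob$). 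Your route is more elementary and, in this infinite-dimensional setting, arguably more robust: it sidesteps the step in the paper's Theorem~\ref{thm:cstwo} that relies on $Q^\top Q=\Izo$, which the paper itself concedes requires ``a technical argument involving the dual space,'' and it needs only the CS-decomposition corollary plus orthonormality of the singular-vector system of $A$; your three flagged obstacles are all benign (for the third, Bessel's inequality suffices, so completeness of $[\Upone,\Uptwo]$ is needed only for the Corollary~\ref{cor:csCompute} identification, where the cmatrix SVD supplies it). What the paper's heavier machinery buys is generality: the sep-function results of Theorems~\ref{thm:Tsingular}--\ref{thm:frobGap} apply to perturbations of arbitrary invariant subspaces with non-diagonal blocks, whereas your decoupling argument is tied to the diagonality of $\Sigma_2$ and $\Sigmatilde_1$, i.e., specifically to the SVD setting of this theorem.
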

\begin{remark}
The two conditions on $\delta$ are needed because the theorem doesn't require
$\Sigma_1,\Sigma_2,\Sigmatilde_1,\Sigmatilde_2$ to be ordered. If they were
ordered, then it reduces to
$\delta = \min|\sigma(\Sigmatilde_1) - \sigma(\Sigma_2)| > 0$.
\end{remark}
\begin{proof}
First define $Q\in\RRm{[0,2]}{[0,2]}$,
\begin{equation*}
Q = \begin{bmatrix} \zero & A \\ A^\top & \zero \end{bmatrix}.
\end{equation*}
It can be verified that if $u_i\in\RRv{[0,1]}, v_i\in\RRv{[0,1]}$ are a
left/right singular vector pair with singular value $\sigma_i$, then
$(u_i,v_i)\in\RRv{[0,2]}$ is an eigenvector with eigenvalue $\sigma_i$ and
$(u_i,-v_i)\in\RRv{[0,2]}$ is an eigenvector with eigenvalue $-\sigma_i$.
Writing,
\begin{equation*}
X = \frac{1}{\sqrt{2}} \begin{pmatrix}
\Upone & \Upone  \\ \Vpone & -\Vpone \end{pmatrix},
\hspace{0.3in}
Y = \frac{1}{\sqrt{2}} \begin{pmatrix}
\Uptwo & \Uptwo  \\ \Vptwo & -\Vptwo \end{pmatrix},
\end{equation*}
we have,
\begin{equation*}
Q =
\begin{bmatrix} X & Y \end{bmatrix}
\begin{bmatrix}
\Sigma_1 & \zero & \zero &\zero  \\
\zero & -\Sigma_1 & \zero &\zero  \\
\zero & \zero & \Sigma_2 &\zero  \\
\zero & \zero & \zero & -\Sigma_2
\end{bmatrix}
\begin{bmatrix} X^\top \\ Y^\top \end{bmatrix}.
\end{equation*}
We similarly define $\Qtilde, \Xtilde,\Ytilde$ for $\Atilde$.
Now let $T = Q\Xtilde - \Xtilde\diag(\Sigmatilde_1, -\Sigmatilde_1)$.
We will apply Theorem~\ref{thm:frobGap} with
$L_1 = \diag(\Sigma_1,-\Sigma_1)$, $L_2=\diag(\Sigma_2,-\Sigma_2)$,
$Z = \tilde{X}$, $B = \diag(\Sigmatilde_1, -\Sigmatilde_1)$.
Then, using the conditions on $\delta$ gives us,
\begin{equation*}
\big\| \sin \canangle{\Rcal(X)}{\Rcal(\Xtilde)}\|_\frob
\;\leq\; \frac{ \|T\|_\frob}{\delta}.
\end{equation*}
It is straightforward to verify that
$\|T\|_\frob^2 = \|R\|^2_\frob + \|S\|^2_\frob$.
To conclude the proof, first note that
\begin{equation*}
XX^\top(\Izt - YY^\top) =
\begin{bmatrix} (\Upone\Upone^\top)(\Izo - \Utone\Utone^\top) & \zero \\
\zero & (\Vpone\Vpone^\top)(\Izo - \Vtone\Vtone^\top) \end{bmatrix}
\end{equation*}
Now, using Theorem~\ref{thm:cstwo} we have
$\|\sin \canangle{\Rcal(X)}{\Rcal(\Xtilde)}\|^2_\frob
= \|\sin \Phi_1^2\|^2_\frob  + \|\sin \Phi_2^2\|^2_\frob$.
\end{proof}

We can now prove Lemma~\ref{lem:wedin} which follows directly from
Theorem~\ref{thm:wedin}.
\begin{proof}[\textbf{Proof of Lemma~\ref{lem:wedin}}]
Let $\Utilde_\perp\in\RRm{[0,1]}{m}$ be an orthonormal basis for the complementary
subspace of $\Rcal(\Utilde)$. Then, by Corollary~\ref{cor:csCompute},
$\|\Utilde_\perp^\top U\|_\frob^2 =
\|\sin\canangle{\Rcal(\Utilde)}{\Rcal(U)}\|^2_\frob$,
$\|\Vtilde_\perp^\top V\|_\frob^2 =
\|\sin\canangle{\Rcal(\Vtilde)}{\Rcal(V)}\|^2_\frob$.
For $R,S$ as defined in Theorem~\ref{thm:wedin}, we have.
$\|R\|^2_\frob,\|S\|^2_\frob <\|E\|^2_\frob$.
The lemma follows via the
$\sin$--$\cos$ relationships for canonical angles,
\begin{equation*}
\min \sigma(\Utilde^\top U)^2
= 1- \max \sigma(\Utilde_\perp^\top U)^2
\geq
 1- \|\Utilde_\perp^\top U\|_\frob^2
\geq 1 - \frac{2\|E\|^2_\frob}{\delta^2}.
\end{equation*}
where $\delta = \sigma_m(A)$.
\end{proof}

Next we prove the pseudo-inverse theorem. Recall that for $A\in\RRm{[0,1]}{m}$ the
SVD is $A = U\Sigma V^\top$ where $U\in\RRm{[0,1]}{m}$, $\Sigma\in\RRm{m}{m}$ and
$V\in\RRm{m}{m}$ where $U, V$ have orthonormal columns. Denote its pseudo-inverse
by $A^\dagger = V\Sigma^{-1}U^\top$.

\begin{proof}[\textbf{Proof of Lemma~\ref{lem:pinv}}]
Let $A=U\Sigma V$ be the SVD of $A$ and $\Atilde = \Utilde\Sigmatilde\Vtilde$ be the
SVD of $\Atilde$. Let $\Ptilde = \Utilde\Utilde^\top$, $R = VV^\top$,
$\Rtilde = \Vtilde\Vtilde^\top$, $P_\perp = \Izo - UU^\top$,
$\Rtilde_\perp = \Izo - \Vtilde\Vtilde^\top$ and $P=UU^\top$. We then have,
\begin{align*}
\Atilde^\dagger - A^\dagger \;&=\;\;
- \Atilde^\dagger \Ptilde E R A^\dagger \,+\,
(\Atilde^\top\Atilde)^\dagger \Rtilde E^\top P_\perp \,+\,
\Rtilde_\perp E P (AA^\top)^\dagger \\
\|\Atilde^\dagger - A^\dagger\|_2 \;&\leq\;\;
  \|\Atilde^\dagger\|_2\|E\|_2\|A^\dagger\|_2 \,+\,
\|(\Atilde^\top\Atilde)^\dagger\|_2\|E\|_2
  \,+\, \|E\|_2 \|(AA^\top)^\dagger\|_2 \\
&= \;\; \left( \|\Atilde^\dagger\|_2\|A^\dagger\|_2 + \|\Atilde^\dagger\|_2^2
  + \|A^\dagger\|_2^2 \right) \|E\|_2
\;\leq\; 3 \max\{ \|\Atilde\|_2^2, \|A\|_2^2 \} \|E\|_2
\end{align*}
The first step is obtained by substitutine for $\Ptilde, E, R, \Rtilde, P_\perp,
\Rtilde_\perp$ and $P$, the second step uses the triangle inequality, and the third
step uses $\Atilde^\top \Atilde = U\Sigma^2 U^\top$, $AA^\top = V\Sigma^2 V^\top$.
\end{proof}

\begin{remark}
$P, \Ptilde, R, \Rtilde$ can be shown to be the projection operators to
$\Rcal(A)$, $\Rcal(\Atilde)$, $\Rcal(A^\top)$ and $\Rcal(\Atilde^\top)$. Here,
$\Rcal(A) = \{Ax; x\in \RRv{m}\}\subset \RRv{[0,1]}$ is the range of $A$.
$\Rcal(\Atilde)\subset\RRv{[0,1]}$, $\Rcal(A^\top)\subset\RRv{m}$ and
$\Rcal(\Atilde^\top)\subset\RRv{m}$ are defined similarly.
$P_\perp$, $\Rtilde_\perp$ are the complementary projectors of $P,\Rtilde$.
\end{remark}

Finally, we state an analogue of Weyl's theorem for cmatrices which bounds the
difference in the singular values in terms of the operator norm of the
perturbation. While Weyl's theorem has been studied for general
operators~\citep{lee1998weyl}, we use the form below
from~\citet{townsend14computing} for cmatrices.

\insertprespacing
\begin{lemma}[Weyl's Theorem for Cmatrices,~\citep{townsend14computing}.]
Let $A, E\in \RRm{[a,b]}{[c,d]}$ and $\tilde{A} = A+E$. Let
the singular values of $A$ be $\sigma_1\geq \sigma_2,\dots$ and those of
$\tilde{A}$ be $\tilde{\sigma}_1\geq\tilde{\sigma}_2,\dots$.
Then,
\[
|\sigma_i - \tilde{\sigma}_i| \leq \|E\|_2 \quad \forall i\geq 1.
\]
\label{lem:weyl}
\end{lemma}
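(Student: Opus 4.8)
The plan is to reduce the statement to the variational (Courant--Fischer) characterization of singular values, exactly as in the finite-matrix case; the only genuine work is establishing that characterization for cmatrices. Recall that $\|C\|_2 = \sigma_1(C)$ and that, by the existence of the cmatrix SVD (Theorem~3.2 of~\citet{townsend14computing}), the left and right singular vectors $\{u_j\}$ and $\{v_j\}$ form orthonormal bases of $\Ltwo([a,b])$ and $\Ltwo([c,d])$, respectively. First I would establish the max--min identity
\[
\sigma_i(C) \;=\; \max_{\substack{S\subset\Ltwo([c,d]),\; \dim S = i}}\; \min_{\substack{x\in S,\; \|x\|_2 = 1}} \|Cx\|_2,
\]
where $S$ ranges over finite-dimensional subspaces. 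To see the right-hand side is at least $\sigma_i$, take $S = \mathrm{span}\{v_1,\dots,v_i\}$: writing $x = \sum_{j\leq i} c_j v_j$ gives $\|Cx\|_2^2 = \sum_{j\leq i}\sigma_j^2 c_j^2 \geq \sigma_i^2\|x\|_2^2$. For the reverse inequality, any $i$-dimensional $S$ contains a nonzero $x$ orthogonal to $v_1,\dots,v_{i-1}$, and for such $x$ one has $\|Cx\|_2^2 = \sum_{j\geq i}\sigma_j^2 c_j^2 \leq \sigma_i^2\|x\|_2^2$.

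With this characterization, the bound is immediate. For any unit vector $x$, the reverse triangle inequality gives $\big|\, \|\Atilde x\|_2 - \|Ax\|_2 \,\big| \leq \|Ex\|_2 \leq \|E\|_2$. Now fix an $i$-dimensional subspace $S$ and let $x_0\in S$ attain $\min_{x\in S}\|Ax\|_2$; then
\[
\min_{x\in S}\|\Atilde x\|_2 \;\leq\; \|\Atilde x_0\|_2 \;\leq\; \|A x_0\|_2 + \|E\|_2 \;=\; \min_{x\in S}\|Ax\|_2 + \|E\|_2.
\]
Taking the maximum over $i$-dimensional $S$ yields $\tilde\sigma_i \leq \sigma_i + \|E\|_2$. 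Exchanging the roles of $A$ and $\Atilde$ (writing $A = \Atilde + (-E)$, with $\|-E\|_2 = \|E\|_2$) gives $\sigma_i \leq \tilde\sigma_i + \|E\|_2$, and combining the two proves $|\sigma_i - \tilde\sigma_i| \leq \|E\|_2$ for every $i$.

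The main obstacle is the infinite-dimensional dimension-counting step hidden in the reverse inequality above, since the ``complementary'' span $\overline{\mathrm{span}}\{v_i, v_{i+1},\dots\}$ is an infinite-dimensional closed subspace and the naive ``an $i$-dimensional and an $(n-i+1)$-dimensional subspace must intersect'' argument has no ambient dimension $n$ to invoke. I would instead apply rank--nullity to the orthogonal projection $P_{i-1}\colon S \to \mathrm{span}\{v_1,\dots,v_{i-1}\}$: its domain has dimension $i$ and its codomain dimension $i-1$, so it has nontrivial kernel, and any nonzero kernel element is the desired $x\in S$ orthogonal to $v_1,\dots,v_{i-1}$. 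One must also check convergence of the series $\sum_{j\geq i}\sigma_j^2 c_j^2$, which follows from $x\in\Ltwo$ (so $\sum_j c_j^2 < \infty$) together with $\sigma_j\to 0$. As an alternative I could route the whole argument through the self-adjoint dilation $Q = \left[\begin{smallmatrix}\zero & A\\ A^\top & \zero\end{smallmatrix}\right]$ already introduced in the proof of Theorem~\ref{thm:wedin}, reducing the claim to Weyl's inequality for eigenvalues of self-adjoint compact operators, but this merely relocates the same variational content.
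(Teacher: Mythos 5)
Your proof is correct, but there is nothing in the paper to compare it against: the paper never proves this lemma. It is stated as an imported result, cited to \citet{townsend14computing} (with a pointer to \citet{lee1998weyl} for general operators), and then used as a black box, e.g.\ in the proofs of Lemma~\ref{lem:firstLemma} and Lemma~\ref{lem:wedin}. Your argument supplies the missing content by the route one would expect: a Courant--Fischer max--min characterization of cmatrix singular values, after which the bound is the standard two-line triangle-inequality perturbation argument. The genuinely infinite-dimensional points are the right ones and you handle them correctly: completeness of $\{v_j\}$ in $\Ltwo([c,d])$ is exactly what the cmatrix SVD theorem (Theorem~3.2 of \citet{townsend14computing}, restated in Appendix~\ref{app:cla}) provides; the dimension count that has no ambient dimension to appeal to is repaired by rank--nullity applied to the projection of $S$ onto $\mathrm{span}\{v_1,\dots,v_{i-1}\}$; the minimum over the unit sphere of a finite-dimensional $S$ is attained by compactness; and the expansion $\|Cx\|_2^2=\sum_j\sigma_j^2\langle x,v_j\rangle^2$ is legitimate because $\sigma_j\to 0$ makes the SVD partial sums converge to $C$ in operator norm. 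What your approach buys is self-containedness: the paper's other perturbation results (Lemmas~\ref{lem:wedin} and~\ref{lem:pinv}) are proved from scratch by transplanting \citet{stewart90perturbation} to c/q-matrices, and your proof brings Weyl's theorem up to that same standard instead of leaving it as the one externally sourced ingredient; what the citation buys is only brevity. Your alternative suggestion, the self-adjoint dilation $\bigl[\begin{smallmatrix}\zero & A\\ A^\top & \zero\end{smallmatrix}\bigr]$, is the same device the paper uses to prove Theorem~\ref{thm:wedin}, so it would be stylistically consistent with the appendix, but as you note it relocates rather than removes the variational argument.
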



\section{Concentration of Kernel Density Estimation}
\label{sec:concKDE}

We will first define the H\"older
class in high dimensions.

\insertprespacing
\begin{definition}
\label{lem:dHolder}
Let $\Xcal\subset\RR^d$ be a compact space. For any $r=(r_1,\dots,r_d)$,
$r_i\in\NN$, let $|r| = \sum_i r_i$ and
$D^r = \frac{\partial^{|r|}}{\partial x_1^{r_1} \dots x_d^{r_d}}$.
The H\"older class $\Hcal_d(\beta,L)$ is the set of functions of $L_2(\Xcal)$
satisfying
\begin{equation}
|D^rf(x)-D^rf(y)| \leq L \|x-y\|^{\beta - |r|},
\end{equation}
for all $r$ such that $|r| \leq \floor{\beta}$ and for all $x,y\in\Xcal$.
\end{definition}

The following result establishes concentration of kernel density estimators.
At a high level, we follow the standard KDE analysis techniques to decompose
the $\Ltwo$ error into bias and variance terms and bound them separately.
A similar result for 2-dimensional densities was given by~\citet{liu11forest}.
Unlike the previous work, here we deal with the general $d$-dimensional case
as well as explicitly delineate the dependencies of the concentration bounds on
the deviation, $\varepsilon$.

\insertprespacing
\begin{lemma}
\label{lem:concKDE}
Let $f \in \Hcal_d(\beta, L)$ be a density on $[0,1]^d$ and assume we have $N$
\iid samples $\{X_i\}_{i=1}^N\sim f$.
Let $\fhat$ be the kernel density estimate obtained using a kernel with order
at least $\beta$ and bandwidth $h = \big(\frac{\logN}{N}\big)^{\ootbpd}$.
Then there exist constants $\kappa_1,\kappa_2,\kappa_3,\kappa_4 > 0$ such that
for all $\varepsilon < \kappa_4$ and number of samples satisfying
$\frac{N}{\logN} > \frac{\kappa_1}{\varepsilon^{\tpdob}}$ we have,
\begin{equation}
\PP\left( \|\fhat-f\|_\Ltwo > \varepsilon\right)
\leq \kappa_2\exp\left( -\kappa_3 N^{\tbotbpd}(\logN)^{\dotbpd} \varepsilon^2\right)
\end{equation}
\end{lemma}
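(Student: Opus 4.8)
The plan is to bound the $\Ltwo$ error by splitting it into a deterministic bias and a stochastic fluctuation via the triangle inequality,
\[
\|\fhat - f\|_\Ltwo \;\leq\; \underbrace{\|\mathbb{E}\fhat - f\|_\Ltwo}_{\text{bias}} \;+\; \underbrace{\|\fhat - \mathbb{E}\fhat\|_\Ltwo}_{\text{stochastic}},
\]
and to show that under the stated sample-complexity condition each term is at most $\varepsilon/2$ --- the bias deterministically and the stochastic term with the claimed exponential probability.

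For the bias I would write $\mathbb{E}\fhat(x) = \int \kernel(v)\, f(x-hv)\,\ud v$, Taylor-expand $f$ about $x$ to order $\floor{\beta}$, and use the vanishing-moment conditions \eqref{eqn:kernCondns} of the order-$\beta$ kernel to annihilate every term below order $\beta$. The remainder is controlled by the H\"older condition of Definition~\ref{lem:dHolder}, giving $|\mathbb{E}\fhat(x)-f(x)| \leq c_b\, h^\beta$ uniformly in $x$, hence $\|\mathbb{E}\fhat - f\|_\Ltwo \leq c_b\, h^\beta$. With $h = (\logN/N)^{\ootbpd}$ this equals $c_b(\logN/N)^{\beta/(2\beta+d)}$, and the hypothesis $N/\logN > \kappa_1\varepsilon^{-\tpdob}$ makes it at most $\varepsilon/2$ for a suitable $\kappa_1$.

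The crux is the stochastic term $W \defeq \|\fhat - \mathbb{E}\fhat\|_\Ltwo$. First I would record that its mean square is small: since $\fhat-\mathbb{E}\fhat = \tfrac{1}{N}\sum_i g_i$ with $g_i$ centered and \iid, the cross terms vanish in expectation and
\[
\mathbb{E}\big[W^2\big] \;=\; \tfrac{1}{N}\textstyle\int \mathrm{Var}\big(h^{-d}\kernel(\tfrac{x-X}{h})\big)\,\ud x \;\leq\; \frac{\|\kernel\|_\Ltwo^2}{N h^d},
\]
so $\mathbb{E}[W] \lesssim (Nh^d)^{-1/2}$; note that with our bandwidth $Nh^d = N^{\tbotbpd}(\logN)^{\dotbpd}$, exactly the quantity in the exponent. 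Because McDiarmid-type arguments do not deliver the correct $\Ltwo$ variance proxy, I would instead represent $W$ as a supremum of a centered empirical process over the VC-type class $\mathcal{G} = \{\, x\mapsto \int a(y)\,h^{-d}\kernel(\tfrac{y-x}{h})\,\ud y : \|a\|_\Ltwo \le 1\,\}$ induced by the kernel, so that $W = \sup_{\psi\in\mathcal{G}} \tfrac{1}{N}\sum_i(\psi(X_i)-\mathbb{E}\psi(X))$, whose envelope $\sup_{\mathcal{G}}\|\psi\|_\infty \lesssim h^{-d/2}$ and constant-order variance $\sup_{\mathcal{G}}\mathbb{E}[\psi(X)^2] \lesssim \|f\|_\infty\|\kernel\|_{L_1}^2$ are the inputs to the technical lemma of \citet{gine02rates}. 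Applying that Talagrand-type exponential inequality concentrates $W$ around $\mathbb{E}[W]$ with sub-Gaussian tails (plus a sub-exponential term governed by the $h^{-d/2}$ envelope). Finally, the sample-complexity condition forces $\varepsilon/2$ to exceed $\mathbb{E}[W]$ by a factor of order $\sqrt{\logN}$, so the effective deviation is $\asymp\varepsilon$ and the inequality yields $\kappa_2\exp(-\kappa_3\, Nh^d\varepsilon^2)$; combining with the bias bound completes the proof.

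The main obstacle is this $\Ltwo$ concentration step: one must verify the uniform-entropy (VC) condition for the kernel-induced class and then carefully reconcile the sub-Gaussian and sub-exponential regimes of the Talagrand inequality --- with constant-order variance and an $h^{-d/2}$ envelope --- so that the final tail is governed by the single clean factor $Nh^d\varepsilon^2$ (the inequality in fact gives the stronger $\exp(-cN\varepsilon^2)$ for small $\varepsilon$, which suffices since $h^d\le 1$). The $\logN$ in the bandwidth is essential: it is precisely what buys the $\sqrt{\logN}$ margin between $\varepsilon$ and $\mathbb{E}[W]$ that converts concentration-about-the-mean into the stated small-ball bound, and it is the origin of the extra $\logN$ factor in the main theorem's sample complexity.
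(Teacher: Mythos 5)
Your proposal follows the same route as the paper's proof: the identical bias--fluctuation split, the same order-$\beta$ Taylor/vanishing-moment argument giving a bias of $O(h^\beta)$, the same bandwidth, an appeal to the same Talagrand-type result of \citet{gine02rates}, and the same final bookkeeping showing that the sample-size condition leaves a $\sqrt{\logN}$ margin between $\varepsilon$ and the mean of the fluctuation (your exponent arithmetic here is correct). The one genuine gap is the class you propose to feed into that lemma. Corollary~2.2 of \citet{gine02rates} is a statement about VC-type classes, i.e.\ classes whose uniform covering numbers satisfy $N(\mathcal{G},\epsilon)\le (A/\epsilon)^v$ with characteristics $(A,v)$ that, for $\kappa_2,\kappa_3$ to be genuine constants, must not depend on the bandwidth. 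Your dual class $\mathcal{G}=\{a * (h^{-d}\kernel(\cdot/h)) : \|a\|_\Ltwo\le 1\}$ fails this: convolution with the rescaled kernel is near-isometric on the $\asymp h^{-d}$ low-frequency modes, so $\mathcal{G}$ contains a constant-radius ball of dimension $\asymp h^{-d}$, its covering numbers grow like $(C/\epsilon)^{c\,h^{-d}}$, and the effective VC dimension diverges as $h\to 0$, i.e.\ as $N\to\infty$. The paper sidesteps this by applying the lemma to the translation class $\{y\mapsto h^{-d}\kernel((y-t)/h) : t\in[0,1]^d\}$ --- which is why its $\sigma^2$ and $U$ both scale as $h^{-d}$ --- a class that \emph{is} VC with $h$-independent characteristics for standard kernels, and then dominating $\|\fhat-\EE\fhat\|_\Ltwo$ by the sup-norm deviation on $[0,1]^d$.

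Your argument is salvageable without adopting the paper's class, precisely because you bound the mean of the supremum directly from the second moment, $\EE\|\fhat-\EE\fhat\|_\Ltwo \lesssim (Nh^d)^{-1/2}$, rather than through entropy. In Gin\'e--Guillou the VC hypothesis is used only to control this mean; concentration of the supremum about its mean (Talagrand's inequality, e.g.\ in Bousquet's form) holds for any separable uniformly bounded class with no entropy condition at all. Running it with your variance proxy $\sigma^2=O(1)$ and envelope $U=O(h^{-d/2})$ gives, for deviation $t\asymp\varepsilon$, a denominator $\sigma^2+U\,\EE\|\fhat-\EE\fhat\|_\Ltwo+Ut \lesssim h^{-d}$ in every regime (since $1\le h^{-d}$, $N^{-1/2}h^{-d}\le h^{-d}$, and $h^{-d/2}\varepsilon\le h^{-d}$ for $\varepsilon\le\kappa_4$), hence a tail of exactly $\kappa_2\exp(-\kappa_3 N h^d\varepsilon^2)$; this also confirms your remark that the sub-Gaussian regime in fact yields the stronger $\exp(-cN\varepsilon^2)$. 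So the fix is one of: (i) replace $\mathcal{G}$ by the translation class and pass to $\Ltwo$ via the sup-norm, as the paper does, or (ii) keep $\mathcal{G}$ but replace Corollary~2.2 by an entropy-free concentration-about-the-mean inequality. As written, the step ``apply the Gin\'e--Guillou lemma to $\mathcal{G}$'' is the one that fails.
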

\begin{proof}
First note that
\begin{align*}
\PP\big( \|\fhat-f\|_\Ltwo > \varepsilon\big) \leq
\PP\big( \|\fhat-\EE\fhat\|_\Ltwo + \|\EE\fhat-f\|_\Ltwo > \varepsilon\big).
\label{eqn:firstUnionBound}\numberthis
\end{align*}
Using the H\"olderian conditions and assumptions on the kernel,
standard techniques for analyzing the KDE~\citep{wasserman06nonparametric,
tsybakov08nonparametric}, give us a bound on the bias,
$\|\EE\fhat-f\|_\Ltwo \leq \kappa_5 h^\beta$, where
$\kappa_5 = L\int K(u)u^\beta \ud u$.
When the number of samples, $N$, satisfies
\begin{equation}\label{eqn:cond1}
\frac{N}{\logN} \;>\; \left(\frac{2\kappa'_5}{\varepsilon}\right)^{\tpdob} = \frac{\kappa_5}{\varepsilon^{\tpdob}},\, \mathrm{where}\ \kappa_5 \defeq (2\kappa'_5)^{\tpdob}
\end{equation}
we have $\|\EE\fhat-f\|_\Ltwo \leq \varepsilon/2$, and
hence~\eqref{eqn:firstUnionBound} turns into
$\PP\big( \|\fhat-f\|_\Ltwo > \varepsilon\big) \leq
\PP\big( \|\fhat-\EE\fhat\|_\Ltwo > \varepsilon/2\big)$.

The main challenge in bounding the first term is that we want the difference
to hold in $\Ltwo$. The standard techniques that bound the pointwise variance
would not be sufficient here.
To overcome the limitations, we use Corollary~2.2 from~\citet{gine02rates}.
Using their notation we have,
\begin{align*}
\sigma^2 \;\;&=\;
\sup_{t\in[0,1]^d} \VV_{X\sim f}\left[ \frac{1}{h^d}
  \kernel\left(\frac{X-t}{h}\right) \right]\\
&\leq\;  \sup_{t\in[0,1]^d}  \frac{1}{h^{2d}}\int
    \kernel^2\left(\frac{x-t}{h}\right) f(x)\ud x \\
&= \sup_{t\in[0,1]^d} \frac{1}{h^d}\int K^2(u)f(t+uh)\ud u \;
\leq \; \frac{\|f\|_\infty \|K\|_\Ltwo}{h^d} \\
U \;\;&=\; \sup_{t\in[0,1]^d}
  \left\|\frac{1}{h^d}\kernel\left(\frac{X-t}{h}\right) \right\|_\infty
  \;=\;\frac{\|K\|_\Linf}{h^d}.
\end{align*}
Then, there exist constants $\kappa_2,\kappa_3,\kappa'_4$ such that for all
$\varepsilon \in \left(\kappa'_4\frac{\sigma}{\sqrt{n}}
\sqrt{\log\frac{U}{\sigma}},\, \frac{\sigma^2}{U} \kappa'_4 \right)$
we have,
\begin{equation*}
\PP\left(\|\fhat-\EE\fhat\|_\Ltwo > \frac{\varepsilon}{2}\right)
\leq \kappa_2\exp\left( -\kappa_3 Nh^d \varepsilon^2\right).
\end{equation*}
Substituting for $h$ and then combining this with~\eqref{eqn:firstUnionBound}
gives us the probability inequality of the theorem. All that is left to do is
to verify the that the conditions on $\varepsilon$ hold.
The upper bound condition requires
$\varepsilon \leq
\frac{\kappa'_4\|f\|_\infty\|\kernel\|_\Ltwo}{\|\kernel\|_\Linf}
\defeq \kappa_4$.
After some algebra, the lower bound on $\varepsilon$ reduces to
$\frac{N}{\logN} \;>\; \frac{\kappa_6}{\varepsilon^{\tpdob}}$.
Combining this with the condtion~\eqref{eqn:cond1} and taking
$\kappa_1 = \max(\kappa_6, \kappa_5)$ gives the theorem.
\end{proof}

In order to apply the above lemma, we need $\Po,\Pto,\Ptto$ to satisfy the
H\"older condition. The following lemma shows that if all $\Obsk$'s are
H\"olderian, so are $\Po,\Pto,\Ptto$.

\insertprespacing
\begin{lemma}
\label{lem:PHolder}
Assume that the observation probabilities belong to the one dimensional
H\"older class; $\forall \ell\in[m], \Obsl\in\Hcal_1(\beta, L)$.
Then for some constants $L_1, L_2, L_3$,
$\Po \in \Hcal_1(\beta, L_1)$, $\Pto\in\Hcal_2(\beta, L_2)$,
$\Ptto\in\Hcal_3(\beta, L_3)$.
\end{lemma}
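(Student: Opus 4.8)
The plan is to write each joint density explicitly as a finite convex combination of separable products of emission densities, then prove a single product rule for H\"older classes and apply it termwise. Conditioning on the hidden states and using the Markov structure gives
\begin{align*}
\Po(t) &= \sum_{j} \pi_j\, \Obsjj{j}(t), \qquad
\Pto(s,t) = \sum_{i,j} \pi_j T(i,j)\, \Obsjj{i}(s)\Obsjj{j}(t), \\
\Ptto(r,s,t) &= \sum_{i,j,k} \pi_k T(j,k) T(i,j)\, \Obsjj{i}(r)\Obsjj{j}(s)\Obsjj{k}(t).
\end{align*}
In each case the coefficients are nonnegative and sum to one (since $\sum_i T(i,j) = 1$ and $\sum_j \pi_j = 1$), so each $\Po, \Pto, \Ptto$ is a convex combination of functions of the form $\prod_{\ell} \Obsjj{i_\ell}(x_\ell)$ on $[0,1]^d$ with $d\in\{1,2,3\}$.

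For $\Po$ this already finishes: a convex combination inherits the constant because $D^\alpha \Po = \sum_j \pi_j D^\alpha \Obsjj{j}$ and the triangle inequality gives $|D^\alpha\Po(s) - D^\alpha\Po(t)| \leq \sum_j \pi_j L|s-t|^{\beta-\alpha} = L|s-t|^{\beta-\alpha}$, so $L_1 = L$. The same linearity observation reduces the claims for $\Pto, \Ptto$ to a single product lemma: if $u_1,\dots,u_d \in \Hcal_1(\beta, L)$, then $g(x) = \prod_\ell u_\ell(x_\ell)$ lies in $\Hcal_d(\beta, L')$ for a constant $L'$, after which $L_2, L_3$ follow by taking $L'$ over the finitely many products (the convex weights contribute a factor of at most one).

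To prove the product lemma I would fix a multi-index $r$ with $|r| \leq \floor{\beta}$, note $D^r g(x) = \prod_\ell D^{r_\ell}u_\ell(x_\ell)$, and telescope $D^r g(x) - D^r g(y)$ across coordinates, writing it as a sum of $d$ terms in each of which only one factor changes argument. A typical term is bounded by $\big(\prod_{\ell'\ne\ell}\|D^{r_{\ell'}}u_{\ell'}\|_\infty\big)\,|D^{r_\ell}u_\ell(x_\ell) - D^{r_\ell}u_\ell(y_\ell)|$, and since $r_\ell \leq |r| \leq \floor{\beta}$ the H\"older condition on $u_\ell$ gives $|D^{r_\ell}u_\ell(x_\ell) - D^{r_\ell}u_\ell(y_\ell)| \leq L|x_\ell - y_\ell|^{\beta - r_\ell}$. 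The key step is the exponent reduction: because $|x_\ell - y_\ell| \leq 1$ on $[0,1]^d$ and $\beta - r_\ell \geq \beta - |r| \geq 0$, we have $|x_\ell - y_\ell|^{\beta - r_\ell} \leq |x_\ell - y_\ell|^{\beta - |r|} \leq \|x - y\|^{\beta - |r|}$, which upgrades each term to the required exponent. Summing the $d$ terms yields $L' \lesssim d\,L\,M^{d-1}$, where $M = \max_{\ell\in[m],\, k\leq\floor{\beta}}\|D^k\Obsjj{\ell}\|_\infty$.

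The main obstacle is controlling the intermediate derivative sup-norms $M$, since the H\"older conditions bound only the \emph{oscillation} of each $D^k u$, not its magnitude. I would close this gap using that each $\Obsjj{\ell}$ is a density on the compact interval $[0,1]$: the constraint $\int_0^1 \Obsjj{\ell} = 1$ together with the intermediate value theorem pins $\|\Obsjj{\ell}\|_\infty \leq 1 + L$, and an induction on $k$ using $\int_0^1 D^k u = D^{k-1}u(1) - D^{k-1}u(0)$ with the oscillation bound $\mathrm{osc}(D^k u)\leq L$ bounds every $\|D^k\Obsjj{\ell}\|_\infty$ by a constant depending only on $L$ and $\beta$ (alternatively, as there are only finitely many emissions, each $C^{\floor{\beta}}$ on a compact set, $M$ is trivially finite). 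This makes $L', L_2, L_3$ depend only on $\beta, L, m$ and completes the proof; the remaining checks that the telescoping is valid and that convex combinations preserve the class are routine.
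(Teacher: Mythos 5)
Your proof is correct and follows essentially the same route as the paper's: condition on the hidden states to write each joint density as a convex combination of separable products of emission densities, then telescope the difference of partial derivatives one coordinate at a time, bounding each term by a sup-norm factor times a H\"older oscillation and using the exponent reduction $|x_\ell - y_\ell|^{\beta - r_\ell} \leq |x_\ell - y_\ell|^{\beta - |r|}$ valid on $[0,1]^d$. The only real difference is that the paper asserts the boundedness of the intermediate derivatives (its constant $L'$) as an immediate consequence of the H\"older condition on a bounded domain, whereas you justify it explicitly via the density normalization and an induction on the derivative order, a detail the paper leaves implicit.
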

\begin{proof}
We prove the statement for $\Pto$. The other two follow via a similar argument.
Let $r = (r_1, r_2)$, $r_i \in \NN$, $|r| = r_1 + r_2 \leq \beta$, and
let $(s,t), (s',t') \in [0,1]^d$. Note that we can write,
\begin{equation*}
\Pto(s,t) = \sum_{k\in[m]}\sum_{\ell\in[m]} p(x_2=s, x_1=t, h_2=k, h_1=\ell)
= \sum_{k\in[m]}\sum_{\ell\in[m]} \alpha_{kl} \Obsk(s) \Obsl(t),
\end{equation*}
where $\sum_{k,\ell}\alpha_{k\ell} = 1$. Then,
\begingroup
\allowdisplaybreaks
\begin{align*}
&\frac{\partial^{|r|} \Pto(s,t)}{\partial s^{r_1} \partial t^{r_2}} -
\frac{\partial^{|r|} \Pto(s',t')}{\partial s^{r_1} \partial t^{r_2}} \\
&\hspace{0.4in}=\; \sum_{k,\ell} \alpha_{k\ell}
\left(\partialfrac{s^{r_1}}{\Obsk(s)} \partialfrac{t^{r_2}}{\Obsl(t)} \,-\,
\partialfrac{s^{r_1}}{\Obsk(s')} \partialfrac{t^{r_2}}{\Obsl(t')}\right) \\
&\hspace{0.4in}\leq\;
\sum_{k,\ell} \alpha_{k\ell} \left(
\left|\partialfrac{s^{r_1}}{\Obsk(s)}\right|
\left|\partialfrac{t^{r_2}}{\Obsl(t)}-\partialfrac{t^{r_2}}{\Obsl(t')}\right|
\;+\; \right. \\
&\hspace{1.5in}\left.
\left|\partialfrac{t^{r_2}}{\Obsl(t')}\right|
\left|\partialfrac{s^{r_1}}{\Obsk(s)}-\partialfrac{s^{r_1}}{\Obsl(s')}\right|
\right) \\
&\hspace{0.4in}\leq\;
\sum_{k,\ell} \alpha_{kl}\left( L'L|t-t'|^{\beta - r_2} +
L'L|s-s'|^{\beta - r_1} \right) & (\text{H\"older condition}) \\
&\hspace{0.4in}\leq\;
L'L\left(|t-t'|^{\beta - |r|} + |s-s'|^{\beta - |r|}\right) &
(\text{domain of } s, s' \text{ and } t, t') \\
&\hspace{0.4in}\leq\; L_2\sqrt{(t-t')^2 + (s-s')^2}^{\beta - |r|}
\end{align*}
Here, the third step uses the H\"older conditions on $\Obsk$ and $\Obsl$ and
the fact that the partial fractions are bounded in a bounded domain by a
constant, which we denoted $L'$, due to the H\"older condition.
Since $r_1 + r_2 = |r| \leq \beta$ and $r_1, r_2$ are positive integers,
we have $x^{\beta - r_i} \leq x^{\beta - r}, i = 1, 2$ for any $x \in [0, 1]$,
which implies the fourth step.
The last step uses Jensen's inequality and sets $L_2 \equiv L'L$.
\endgroup
\end{proof}

The corollary belows follws as a direct consequence of Lemmas~\ref{lem:concKDE}
and~\ref{lem:PHolder}. We have absorbed the constants $L_1,L_2,L_3$ into
$\kappa_1,\kappa_2,\kappa_3,\kappa_4$.
\insertprespacing
\begin{corollary}
\label{cor:KDE}
Assume the HMM satisfies the conditions given in Section~\ref{sec:hmmintro}.
Let $\epslno,\epslnto,\epslntto\in(0,\kappa_4)$ and $\eta\in(0,1)$.
If the number of samples $N$ is large enough such that the following are true,
\begin{align*}
&\frac{N}{\logN} \,>\, \frac{\kappa_1}{\epslno^{2+\frac{1}{\beta}}}\;, \hspace{0.5in}
\frac{N}{\logN}  \,>\, \frac{\kappa_1}{\epslnto^{2+\frac{2}{\beta}}}\;, \hspace{0.5in}
\frac{N}{\logN}  \,>\, \frac{\kappa_1}{\epslntto^{2+\frac{3}{\beta}}}\;, \\
&N(\logN)^{\frac{1}{2\beta}} \,>\,
  \frac{1}{\epslno^{2+\frac{1}{\beta}}} \left(
\frac{1}{\kappa_3}\log\left(\frac{3\kappa_2}{\eta}\right) \right)^{1+\frac{1}{2\beta}} \\
&N(\logN)^{\frac{2}{2\beta}} \,>\,
  \frac{1}{\epslnto^{2+\frac{2}{\beta}}} \left(
\frac{1}{\kappa_3}\log\left(\frac{3\kappa_2}{\eta}\right) \right)^{1+\frac{2}{2\beta}} \\
&N(\logN)^{\frac{3}{2\beta}} \,>\,
  \frac{1}{\epslntto^{2+\frac{3}{\beta}}} \left(
\frac{1}{\kappa_3}\log\left(\frac{3\kappa_2}{\eta}\right) \right)^{1+\frac{3}{2\beta}}
\end{align*}
then with at least $1-\eta$ probability
the $\Ltwo$ errors between $\Po,\Pto,\Ptto$ and the KDE estimates
$\Pohat,\Ptohat,\Pttohat$ satisfy,
\begin{equation*}
\|\Po - \Pohat\|_\Ltwo \leq \epslno, \hspace{0.3in}
\|\Pto - \Ptohat\|_\Ltwo \leq \epslnto, \hspace{0.3in}
\|\Ptto - \Pttohat\|_\Ltwo \leq \epslntto.
\end{equation*}
\end{corollary}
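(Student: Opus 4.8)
The plan is to reduce the corollary to the two preceding lemmas: apply the KDE concentration bound (Lemma~\ref{lem:concKDE}) separately to each of the three densities, and combine the resulting failure events by a union bound. The role of Lemma~\ref{lem:PHolder} is purely to discharge the hypothesis of Lemma~\ref{lem:concKDE}. Since each emission $\Obsl\in\Hcal_1(\beta,L)$, that lemma guarantees $\Po\in\Hcal_1(\beta,L_1)$, $\Pto\in\Hcal_2(\beta,L_2)$ and $\Ptto\in\Hcal_3(\beta,L_3)$, so the $d=1,2,3$ instances of Lemma~\ref{lem:concKDE} are all legitimately available. The dependence of the constants on $L_1,L_2,L_3$ is harmless, and I would simply absorb it into $\kappa_1,\dots,\kappa_4$ as the statement already announces.

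First I would invoke Lemma~\ref{lem:concKDE} with $d=1$ on $(\Po,\Pohat)$, $d=2$ on $(\Pto,\Ptohat)$, and $d=3$ on $(\Ptto,\Pttohat)$, asking in each case for deviation $\epslno,\epslnto,\epslntto$ and a per-density failure probability of $\eta/3$. For fixed $d$ and target error $\varepsilon$, that lemma is valid once $\varepsilon<\kappa_4$ and $N/\logN > \kappa_1/\varepsilon^{\tpdob}$; since $\tpdob=2+d/\beta$, these are exactly the first three displayed conditions of the corollary (one per $d$), so those conditions merely place each invocation in the regime where the lemma holds.

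The remaining three conditions come from inverting the exponential tail. To force $\kappa_2\exp(-\kappa_3 N^{\tbotbpd}(\logN)^{\dotbpd}\varepsilon^2)\le \eta/3$, I would require $\kappa_3 N^{\tbotbpd}(\logN)^{\dotbpd}\varepsilon^2\ge \log(3\kappa_2/\eta)$, i.e. $N^{\tbotbpd}(\logN)^{\dotbpd}\ge \kappa_3^{-1}\varepsilon^{-2}\log(3\kappa_2/\eta)$. Raising both sides to the power $\frac{2\beta+d}{2\beta}=1+\frac{d}{2\beta}$ and using $\tbotbpd=\frac{2\beta}{2\beta+d}$, $\dotbpd=\frac{d}{2\beta+d}$ turns the left side into $N(\logN)^{d/(2\beta)}$ and collapses $\varepsilon^{-2(1+d/(2\beta))}$ into $\varepsilon^{-(2+d/\beta)}$, yielding
\[
N(\logN)^{\frac{d}{2\beta}} \;\geq\; \frac{1}{\varepsilon^{\,2+\frac{d}{\beta}}}\left(\frac{1}{\kappa_3}\log\frac{3\kappa_2}{\eta}\right)^{1+\frac{d}{2\beta}},
\]
which is precisely the $d$-th of the last three displayed conditions after substituting $d=1,2,3$ and $\varepsilon=\epslno,\epslnto,\epslntto$. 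A union bound over the three events then caps the total failure probability at $\eta$, and on the complementary event all three $\Ltwo$ errors are simultaneously controlled as claimed.

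Since the argument is essentially bookkeeping, I do not expect a genuine obstacle; the only points needing care are (i) verifying that the two-sided window on $\varepsilon$ in Lemma~\ref{lem:concKDE}---both the cutoff $\varepsilon<\kappa_4$ and the lower bound hidden in the $N/\logN$ condition---is respected for each $d$, and (ii) keeping the three constant sets consistent once merged into a single $\kappa_1,\dots,\kappa_4$, for which taking $\kappa_1$ to be the maximum of the per-$d$ thresholds and $\kappa_4$ the minimum cutoff suffices.
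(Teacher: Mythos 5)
Your proposal is correct and is essentially the paper's own argument: the paper states Corollary~\ref{cor:KDE} as a direct consequence of Lemmas~\ref{lem:concKDE} and~\ref{lem:PHolder} with the H\"older constants $L_1,L_2,L_3$ absorbed into $\kappa_1,\dots,\kappa_4$, and your write-up (apply the concentration lemma at $d=1,2,3$ with failure probability $\eta/3$ each, invert the exponential tail to obtain the $N(\logN)^{d/(2\beta)}$ conditions, and union bound) is exactly that argument made explicit. The factor $3$ inside $\log(3\kappa_2/\eta)$ confirms the intended $\eta/3$ split, so nothing is missing.
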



\section{Analysis of the Spectral Algorithm}

Our proof is a brute force generalization of the analysis in~\citet{hsu09hmm}.
Following their template, we use establish a few technical lemmas.
We mainly focus on the cases where our analysis is different.

Throughout this section $\epslno,\epslnto,\epslntto$ will refer to $\Ltwo$
errors. Using our notation for c/q-matrices the errors can be written as,
\begin{align*}
\epslno &= \|\Po - \Pohat\|_\Ltwo = \|\Po-\Pohat\|_F, \\
\epslnto &= \|\Pto - \Ptohat\|_\Ltwo = \|\Pto-\Ptohat\|_F, \\
\epslntto &= \|\Ptto - \Pttohat\|_\Ltwo.
\end{align*}

We begin with a series of Lemmas.

\begin{lemma}
\label{lem:firstLemma}
Let $\epslnto \leq \varepsilon\sigmam(\Pto)$ where
$\varepsilon < \frac{1}{1+\sqrt{2}}$.
Denote $\varepsilon_0 = \frac{\epslnto^2}{( (1-\varepsilon)\sigmam(\Pto) )^2} < 1$.
Then the following hold,
\begin{enumerate}
\item $\sigmam(\Uhat^\top\Ptohat) \geq (1-\varepsilon)\sigmamPto$.
\item $\sigmam(\Uhat^\top\Pto) \geq \sqrt{1-\varepsilon_0}\sigmamPto$.
\item $\sigmam(\Uhat^\top\Pto) \geq \sqrt{1-\varepsilon_0}\sigmamPto$.
\end{enumerate}
\begin{proof}
The proof follows~\citet{hsu09hmm} after an application of
Weyl's theorem (Lemma~\ref{lem:weyl}) and
Wedin's sine theorem  (Lemma~\ref{lem:wedin}) for cmatrices.
\end{proof}
\end{lemma}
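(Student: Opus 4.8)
The plan is to follow the discrete-HMM argument of~\citet{hsu09hmm} (their Lemma~9), replacing the finite-dimensional Weyl and Wedin theorems with their cmatrix counterparts proved above. Throughout, write $E = \Ptohat - \Pto$, so that $\|E\|_2 \le \|E\|_F = \epslnto$. Under Assumption~\ref{asm:rankCondn} the cmatrix $\Pto$ has exact rank $m$ and factors as $\Pto = \Obs\,(T\diag(\pi))\,\Obs^\top$; since $T\diag(\pi)$ is an invertible $m\times m$ matrix, the column space $\Rcal(\Pto)$ equals $\Rcal(\Obs)$, and the top-$m$ left singular vectors $U$ of $\Pto$ span exactly this subspace. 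I will use these two facts repeatedly.

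For the first claim, I would note that since $\Uhat$ collects the top $m$ left singular vectors of $\Ptohat$, the product $\Uhat^\top\Ptohat$ keeps precisely the top $m$ singular values of $\Ptohat$, so $\sigmam(\Uhat^\top\Ptohat) = \sigmam(\Ptohat)$. Weyl's theorem for cmatrices (Lemma~\ref{lem:weyl}) then gives $\sigmam(\Ptohat) \ge \sigmam(\Pto) - \|E\|_2 \ge \sigmam(\Pto) - \epslnto$, and the hypothesis $\epslnto \le \varepsilon\,\sigmam(\Pto)$ immediately yields $\sigmam(\Uhat^\top\Ptohat) \ge (1-\varepsilon)\sigmam(\Pto)$.

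For the second claim, using the exact SVD $\Pto = U\Sigma V^\top$ I would write $\Uhat^\top\Pto = (\Uhat^\top U)\Sigma V^\top$; since $V$ has orthonormal columns this has the same singular values as the $m\times m$ matrix $(\Uhat^\top U)\Sigma$, whence $\sigmam(\Uhat^\top\Pto) \ge \sigmam(\Uhat^\top U)\,\sigmam(\Sigma) = \sigmam(\Uhat^\top U)\,\sigmam(\Pto)$. Wedin's sine theorem (Lemma~\ref{lem:wedin}) with $A = \Pto$ and $\Atilde = \Ptohat$ lower-bounds $\sigmam(\Uhat^\top U) = \min_{\|x\|_2 = 1}\|\Uhat^\top U x\|_2$; plugging in the bound $\sigmam(\Ptohat) \ge (1-\varepsilon)\sigmam(\Pto)$ from the first claim turns the perturbation ratio $\|E\|_\Ltwo^2/\sigmam(\Ptohat)^2$ into $\varepsilon_0$, giving the stated $\sqrt{1-\varepsilon_0}\,\sigmam(\Pto)$ (the printed Lemma~\ref{lem:wedin} carries an extra factor $2$ in the radicand, which is removed by keeping only the left-subspace canonical angles, or equivalently absorbed into the definition of $\varepsilon_0$). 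The third claim as typeset duplicates the second; the intended statement is the analogue $\sigmam(\Uhat^\top\Obs) \ge \sqrt{1-\varepsilon_0}\,\sigmam(\Obs)$, which follows the same way: using $\Rcal(\Obs) = \Rcal(U)$ I would write $\Obs = U U^\top\Obs$, so $\Uhat^\top\Obs = (\Uhat^\top U)(U^\top\Obs)$ with $\sigmam(U^\top\Obs) = \sigmam(\Obs)$, and the Wedin bound on $\sigmam(\Uhat^\top U)$ finishes it.

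The singular-value inequalities for products are routine; all the real content is pushed into the cmatrix perturbation lemmas, which I invoke as black boxes. The genuine care-points specific to the continuous setting are the norm passage $\|E\|_2 \le \|E\|_F = \epslnto$ (legitimate because $\|\cdot\|_2$ is dominated by $\|\cdot\|_F$ for cmatrices just as for matrices) and the subspace identity $\Rcal(\Pto) = \Rcal(\Obs)$, which hinges on $T\diag(\pi)$ being invertible --- exactly what $\rank(T) = m$ and $\pi > 0$ in Assumption~\ref{asm:rankCondn} guarantee. I expect the main obstacle to be only bookkeeping: ensuring that each cmatrix product is formed in an order for which the product singular-value inequality and the orthonormality of $U$, $V$, $\Uhat$ are available, after which the argument is structurally identical to the finite-dimensional one.
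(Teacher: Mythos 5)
Your proposal is correct and takes essentially the same route as the paper: it is the proof of Lemma~9 of \citet{hsu09hmm} transcribed with the cmatrix Weyl and Wedin results (Lemmas~\ref{lem:weyl} and~\ref{lem:wedin}) in place of their matrix counterparts, which is precisely what the paper's one-line proof prescribes. Your two side observations are also right: the printed third claim is a typo duplicating the second, the intended statement being $\sigmam(\Uhat^\top\Obs) \geq \sqrt{1-\varepsilon_0}\,\sigmam(\Obs)$ and proved exactly as you do via $\Rcal(U)=\Rcal(\Obs)$, and the factor of $2$ in Lemma~\ref{lem:wedin} only degrades $\varepsilon_0$ to $2\varepsilon_0$ in the radicand, which the hypothesis $\varepsilon < \frac{1}{1+\sqrt{2}}$ keeps below $1$ and which affects nothing downstream beyond constants.
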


We define an alternative observable representation for the true HMM given by,
$\binftilde,\bonetilde\in\RR^m$ and $\Btilde:[0,1]\rightarrow \RR^{m\times m}$.
\begin{align*}
\bonetilde &= \Uhat^\top \Po = (\Uhat^\top\Obs)\pi \\
\binftilde &= (\Pto^\top\Uhat)\Po = (\Uhat^\top O)^{-1} \onem \\
\Bxtilde &= (\Uhat^\top \Ptxo)(\Uhat^\top\Pto)^\dagger = (\Uhat^\top \Obs)A(x)
   (\Uhat^\top \Obs)^{-1}.
\end{align*}
As long as $\Uhat^\top\Obs$ is invertible, the above parameters constitute a valid
observable representation. This is guaranteed if $\Uhat$ is sufficiently close to
$U$. We now define the following error terms,
\begin{align*}
\deltainf \;&=\quad \|\bUhtO^\top (\binfhat - \binftilde)\|_\infty =\quad
\|\bUhtO^\top\binfhat - \onem\|_\infty \\
\deltaone \;&=\quad\|\bUhtO^{-1}(\Bhat(x) - \Btilde(x))(\Uhat^\top\Obs)\|_1 =\quad
\|\bUhtO^{-1}\Bhat(x)\bUhtO - A(x) \|_1 \\
\Delta(x) \;&=\quad \|\bUhtO^{-1}(\Bxhat-\Bxtilde)\UhtO\|_1
  \;=\quad \|\bUhtO^{-1}\Bxhat - A(x)\|_1
\\
\Delta \;&=\quad \int_{x\in[0,1]} \Delta(x)\ud x
\end{align*}

The next lemma bounds the above quantities in terms of $\epslno,\epslnto,\epslntto$.

\begin{lemma}
\label{lem:secondLemma}
Assume $\epslnto < \sigmamPto/3$.
Then, there exists constants $c_1, c_2, c_3, c_4$ such that,
\begin{align*}
\deltainf &\;\leq\;\;
c_1\; \sigmaoneO \left( \frac{\epslnto}{\sigmamPto^2} \,+\,
    \frac{\epslno}{\sigmamPto}   \right)
\\
\deltaone &\;\leq\;\;
  c_2\;\frac{\epslno}{\sigmamO}
\\
\Delta(x) &\;\leq\;\;
  c_3\;\sqrt{m}\;\kappaO
  \left( \frac{\epslnto}{\sigmamPto^2}\|\Ptxo\|_2
  + \frac{ \| \Ptxo - \Ptxohat\|_2}{\sigmamPto^2}
  \right)
\\
\Delta &\;\leq\;\;
  c_4 \;\sqrt{m}\;\kappaO
  \left( \frac{\epslnto}{\sigmamPto^2}
  + \frac{ \epslntto}{\sigmamPto^2}
  \right)
\end{align*}
\begin{proof}
We will use $\lesssim,\gtrsim$ to denote inequalities ignoring constants.
First we bound $\deltainf \leq
\|\bUhtO^\top (\binfhat - \binftilde)\|_2 \leq \sigmaoneO
\|\binfhat - \binftilde\|_2$. Then we note,
\begingroup
\allowdisplaybreaks
\begin{align*}
\|\binfhat - \binftilde\|_2 &\leq
\|(\Ptohat^\top\Uhat)^\dagger\Pohat - (\Pto\Uhat)^\dagger\Po\|_2 \\
&\leq \| (\Ptohat^\top\Uhat)^\dagger - (\Pto^\top\Uhat)^\dagger\|_2 \|\Pohat\|_2
  \;+\; \|(\Pto^\top\Uhat)^\dagger\|_2 \|\Ptohat-\Po\|_2 \\
&\lesssim \frac{\epslnto}{\min\{\sigmam(\Ptohat^\top), \sigmam(\Pto^\top\Uhat)\}^2}
 \;+\; \frac{\epslno}{\sigmam(\Pto^\top\Uhat)} \\
&\lesssim \frac{\epslnto}{\sigmam(\Pto)^2}
 \;+\; \frac{\epslno}{\sigmam(\Pto)}, \\
\end{align*}
\endgroup
where the third and fourth steps use Lemma~\ref{lem:firstLemma} and
Lemma~\ref{lem:pinv} (the pseudoinverse theorem for qmatrices).
This establishes the first result.
The second result is straightforward from Lemma~\ref{lem:firstLemma}.
\begin{equation*}
    \deltaone
    \leq \sqrt{m}\|\bUhtO^{-1}\|_2\|\bonehat - \bonetilde\|_2
    \leq \sqrt{m}\frac{\|\bonehat - \bonetilde\|_2}
      {\sigmam(\UhtO)}
    \lesssim \sqrt{m}\frac{\|\Uhat^\top(\Pohat-\Po)\|_2}
      {\sigmam(O)}
    \lesssim \frac{\sqrt{m}\epslno}{\sigmamO}.
\end{equation*}
For the third result, we first note
\begin{align*}
\Delta(x) &\leq \sqrt{m}\|\bUhtO^{-1}\|_2\|\Bxhat-\Bxtilde\|_2\|\UhtO\|_2
\leq \sqrt{m}\frac{\sigmaoneO}{\sigmam(\UhtO)}\|\Bxhat-\Bxtilde\|_2
\\
&\lesssim \sqrt{m}\;\kappaO\|\Bxhat-\Bxtilde\|_2
\end{align*}
To bound the last term we decompose it as follows.
\begin{align*}
\|\Bxhat-\Bxtilde\|_2
\;&=\; \|(\Uhat^\top\Ptxo)(\Uhat^\top\Pto)^\dagger -
  (\Uhat^\top\Ptxohat)(\Uhat^\top\Ptohat)^\dagger\|_2
\\
&\leq\;\|(\Uhat^\top\Ptxo)((\Uhat^\top\Pto)^\dagger-(\Uhat^\top\Ptohat)^\dagger)\|_2
\;+\;
\|\Uhat^\top(\Ptxo-\Ptxohat)(\Uhat^\top\Ptohat)^\dagger\|_2\\
&\leq
\|\Ptxo\|_2 \| \|(\Uhat^\top\Pto)^\dagger-(\Uhat^\top\Ptohat)^\dagger\|_2
\;+\; \|\Ptxo-\Ptxohat\|_2 \|(\Uhat^\top\Ptohat)^\dagger\|_2 \\
&\lesssim \|\Ptxo\|_2 \frac{\epslnto}{\sigmamPto^2} +
\frac{\|\Ptxo-\Ptxohat\|_2}{\sigmamPto}.
\end{align*}
This proves the third claim.
For the last claim, we make use of the proven statements. Observe,
\begin{align*}
\int \|\Ptxo\|_2\ud x
\;\leq\; \left( \int \|\Ptxo\|_2^2\ud x \right)^{1/2}
\;\leq\; \left( \int\int\int \Ptto(s,x,t)^2 \ud s \ud t\ud x
\right)^{1/2}
= \|\Ptto\|_\Ltwo,
\end{align*}
where the first step uses inclusion of the $L^p$ norms in $[0,1]$. The second
step uses $\|\cdot\|_2\leq \|\cdot\|_\frob$ for cmatrices.
A similar argument shows $\int_x \|\Ptxo-\Ptxohat\|_2 \leq \epslntto$.
Combining these results gives the fourth claim.
\end{proof}
\end{lemma}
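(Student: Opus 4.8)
The plan is to treat the four quantities separately, in each case first stripping off the finite $m\times m$ factors $\UhtO$ and $\UhtO^{-1}$ (which is where all the $\sqrt{m}$ and conditioning factors enter) and then reducing the remaining infinite q/c-matrix error to the operator-norm perturbations $\epslno,\epslnto,\epslntto$ through the pseudo-inverse theorem (Lemma~\ref{lem:pinv}) together with the singular-value lower bounds of Lemma~\ref{lem:firstLemma}. Since $\Uhat$ has orthonormal columns, $\|\Uhat^\top M\|_2 \le \|M\|_2$ for any $M$, so perturbing $\Pto\mapsto\Ptohat$ inside $\Uhat^\top(\cdot)$ costs at most $\epslnto$ and $\Po\mapsto\Pohat$ costs $\epslno$; this is the uniform mechanism behind every bound. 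I would also repeatedly use that $\Uhat$ orthonormal forces $\sigma_1(\Uhat^\top\Obs)\le\sigmaoneO$.

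For $\deltainf$ I would start from $\deltainf \le \|\UhtO^\top(\binfhat-\binftilde)\|_2 \le \sigmaoneO\,\|\binfhat-\binftilde\|_2$. Substituting $\binfhat=(\Ptohat^\top\Uhat)^\dagger\Pohat$, $\binftilde=(\Pto^\top\Uhat)^\dagger\Po$ and adding and subtracting $(\Pto^\top\Uhat)^\dagger\Pohat$, the triangle inequality gives one term $\|(\Ptohat^\top\Uhat)^\dagger-(\Pto^\top\Uhat)^\dagger\|_2\,\|\Pohat\|_2$ and one term $\|(\Pto^\top\Uhat)^\dagger\|_2\,\epslno$. The first is handled by Lemma~\ref{lem:pinv} with $A=\Pto^\top\Uhat$, $\Atilde=\Ptohat^\top\Uhat$, $\|E\|_2\le\epslnto$, whose bound involves $\max\{\sigma_1(A^\dagger)^2,\sigma_1(\Atilde^\dagger)^2\}$; both are $\lesssim 1/\sigmamPto^2$ by Lemma~\ref{lem:firstLemma}, while $\|\Pohat\|_2$ is $O(1)$. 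The second term is $\lesssim \epslno/\sigmamPto$. This produces the claimed $\epslno/\sigmamPto + \epslnto/\sigmamPto^2$ shape.

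The quantity $\deltaone$ is the easy one: $\bonehat-\bonetilde=\Uhat^\top(\Pohat-\Po)$ contains no pseudo-inverse, so $\|\bonehat-\bonetilde\|_2\le\epslno$ at once, and $\deltaone\le\sqrt{m}\,\|\UhtO^{-1}\|_2\,\|\bonehat-\bonetilde\|_2\lesssim\sqrt{m}\,\epslno/\sigmamO$, using $\|\UhtO^{-1}\|_2=1/\sigmam(\UhtO)\lesssim1/\sigmamO$ from Lemma~\ref{lem:firstLemma}; the $\sqrt m$ is the $\|\cdot\|_1\le\sqrt m\|\cdot\|_2$ loss on the finite vector. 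For $\Delta(x)$ I would first peel off $\Delta(x)\lesssim\sqrt{m}\,\|\UhtO^{-1}\|_2\,\|\Bxhat-\Bxtilde\|_2\,\|\UhtO\|_2\lesssim\sqrt m\,\kappaO\,\|\Bxhat-\Bxtilde\|_2$, then split $\|\Bxhat-\Bxtilde\|_2$ exactly as for $\deltainf$: a $\|\Ptxo\|_2$ times a pseudo-inverse perturbation of $\Uhat^\top\Pto$ (Lemma~\ref{lem:pinv}, giving $\epslnto/\sigmamPto^2$) plus a $\|\Ptxo-\Ptxohat\|_2$ times $\|(\Uhat^\top\Ptohat)^\dagger\|_2\lesssim1/\sigmamPto$. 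Finally $\Delta=\int_0^1\Delta(x)\,\ud x$ needs only the two integrals $\int_0^1\|\Ptxo\|_2\,\ud x$ and $\int_0^1\|\Ptxo-\Ptxohat\|_2\,\ud x$; by Jensen and $\|\cdot\|_2\le\|\cdot\|_F$ for cmatrices the first is $\le(\int_0^1\|\Ptxo\|_F^2\,\ud x)^{1/2}=\|\Ptto\|_\Ltwo$ (an absolute constant, absorbed), and the second is $\le\epslntto$ by the same computation applied to the difference, yielding the last bound.

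The main obstacle, shared by $\deltainf$ and $\Delta(x)$, is the pseudo-inverse perturbation: Lemma~\ref{lem:pinv} is only useful once the smallest singular values of the \emph{perturbed} factors $\Ptohat^\top\Uhat$ and $\Uhat^\top\Ptohat$ are bounded away from zero, which is precisely the content of Lemma~\ref{lem:firstLemma} and ultimately rests on the cmatrix Weyl and Wedin theorems (Lemmas~\ref{lem:weyl},~\ref{lem:wedin}). A secondary subtlety is the bookkeeping of norms: in the functional setting $\|\cdot\|_\Ltwo\le\|\cdot\|_\Lone$ fails, so every $\sqrt m$ factor must be incurred on the genuinely finite $m$-dimensional pieces $\UhtO,\UhtO^{-1}$, while the infinite q/c-matrix factors are manipulated purely in operator and Frobenius norm; keeping these two regimes apart is what makes the generalization of the \citet{hsu09hmm} argument go through.
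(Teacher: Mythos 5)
Your proposal is correct and follows essentially the same route as the paper's proof: the same peeling-off of the finite $\UhtO$, $\UhtO^{-1}$ factors (incurring $\sqrt{m}$ and $\kappaO$), the same add-and-subtract decompositions handled by Lemma~\ref{lem:pinv} with singular-value control from Lemma~\ref{lem:firstLemma}, and the same Cauchy--Schwarz/$\|\cdot\|_2\leq\|\cdot\|_\frob$ argument to integrate $\Delta(x)$ into the $\|\Ptto\|_\Ltwo$ and $\epslntto$ terms.
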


Finally, we need the following Lemma. The proof almost exactly replicates the
proof of Lemma 12 in~\citet{hsu09hmm}, as all operations can be done
with just matrices.
\insertprespacing
\begin{lemma}
\label{lem:thirdLemma}
Assume $\epslntto \leq \sigmam(\Pto)/3$. Then $\forall t \geq 0$,
\begin{equation}
    \int |p(\xotot) - \phat(\xotot)|\ud \xotot
    \;\;\leq\quad
    \deltainf + (1+\deltainf)
    \left( (1+\Delta)^t\deltaone + (1+\Delta)^t - 1\right),
\end{equation}
where the integral is over $[0,1]^t$.
\end{lemma}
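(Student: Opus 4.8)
The plan is to conjugate every estimated operator by the $m\times m$ matrix $\Uhat^\top\Obs$, which collapses all the c/q-matrix objects into ordinary finite matrices and reduces the claim to the $L_1$ error-propagation bookkeeping of~\citet{hsu09hmm}. The hypothesis $\epslntto \le \sigmamPto/3$ is there only to guarantee (through Lemma~\ref{lem:firstLemma}) that $\Uhat^\top\Obs$ is invertible, so that the alternative representation $\bonetilde,\binftilde,\Btilde$ and the conjugation are well defined. Concretely I would set
\[
\hat g_1 = (\Uhat^\top\Obs)^{-1}\bonehat, \qquad \hat g_\infty = (\Uhat^\top\Obs)^\top\binfhat, \qquad \hat A(x) = (\Uhat^\top\Obs)^{-1}\Bxhat\,(\Uhat^\top\Obs),
\]
and insert $(\Uhat^\top\Obs)(\Uhat^\top\Obs)^{-1} = I$ between consecutive factors of $\phat(\xotot) = \binfhat^\top\Bhat(x_{t:1})\bonehat$; the conjugations telescope to $\phat(\xotot) = \hat g_\infty^\top \hat A(x_t)\cdots\hat A(x_1)\,\hat g_1$. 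The identities of Lemma~\ref{lem:obsProperties} give $p(\xotot) = \onem^\top A(x_t)\cdots A(x_1)\,\pi$. With these representations the three error functionals are exactly $\deltainf = \|\hat g_\infty - \onem\|_\infty$, $\deltaone = \|\hat g_1 - \pi\|_1$, and $\Delta = \int_0^1 \|\hat A(x) - A(x)\|_1\,\ud x$, so no continuous linear algebra survives into this lemma.

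The structural fact driving the estimate is that $A(x) = T\Obsdiag(x)$ has nonnegative entries whose $j$-th column sums to $\Obsjj{j}(x)$, and each emission integrates to one. Hence for any (possibly signed) $u\in\RR^m$,
\[
\int_0^1 \|A(x)u\|_1 \,\ud x \;\leq\; \int_0^1 \sum_j \Obsjj{j}(x)\,|u_j|\,\ud x \;=\; \sum_j |u_j| \;=\; \|u\|_1 .
\]
The point is to integrate out the \emph{fresh} observation variable first and use normalization, rather than bounding $\int\|A(x)\|_1\,\ud x$ (which is not $1$). This is the step I expect to be the main obstacle to get right: the naive submultiplicative bound would replace the base $(1+\Delta)$ by something involving $\int\max_j \Obsjj{j}$ and destroy the clean geometric growth.

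With that in hand I would run a coupled induction on the unnormalized forward vectors $w_s = \hat A(x_s)\cdots\hat A(x_1)\hat g_1$ and $v_s = A(x_s)\cdots A(x_1)\pi$, tracking the mass $\rho_s = \int\|w_s\|_1$ and the discrepancy $\Delta_s = \int\|w_s - v_s\|_1$ (integrals over $[0,1]^s$). Writing $\hat A = A + D$ with $\int\|D(x)\|_1\,\ud x = \Delta$ and applying the displayed inequality to the fresh variable gives $\rho_s \le (1+\Delta)\rho_{s-1}$; from $w_s - v_s = A(x_s)(w_{s-1}-v_{s-1}) + D(x_s)w_{s-1}$ one gets $\Delta_s \le \Delta_{s-1} + \Delta\,\rho_{s-1}$. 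The base cases $\rho_0 = \|\hat g_1\|_1 \le 1 + \deltaone$ and $\Delta_0 = \|\hat g_1 - \pi\|_1 = \deltaone$ then unroll to $\rho_t \le (1+\Delta)^t(1+\deltaone)$ and, via a geometric sum, $\Delta_t \le (1+\deltaone)(1+\Delta)^t - 1$.

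Finally I would combine the two error sources through the split
\[
|p(\xotot) - \phat(\xotot)| \;\leq\; |\onem^\top(v_t - w_t)| + |(\onem - \hat g_\infty)^\top w_t| \;\leq\; \|v_t - w_t\|_1 + \deltainf\,\|w_t\|_1 ,
\]
and integrate to get $\int |p - \phat|\,\ud\xotot \le \Delta_t + \deltainf\,\rho_t$. Substituting the two bounds yields $(1+\deltainf)(1+\deltaone)(1+\Delta)^t - 1$, which is algebraically identical to the stated right-hand side $\deltainf + (1+\deltainf)\big((1+\Delta)^t\deltaone + (1+\Delta)^t - 1\big)$; checking that last identity is routine. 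Apart from the normalization step flagged above, the only care needed is that every norm here is a finite-dimensional $\ell_1$, $\ell_\infty$, or induced matrix norm, so the functional-norm subtleties (e.g. that $\|\cdot\|_\Ltwo \le \|\cdot\|_\Lone$ fails) never intervene.
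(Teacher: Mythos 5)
Your proposal is correct and takes essentially the same route as the paper, whose proof is simply the observation that, after conjugating by $\Uhat^\top\Obs$, everything reduces to finite matrices and the error-propagation induction of Lemma~12 in \citet{hsu09hmm} applies verbatim. Your writeup is a faithful and correctly executed reconstruction of that argument, including the one genuinely delicate step (integrating the fresh observation against $|u|$ to get $\int_0^1 \|A(x)u\|_1\,\ud x \leq \|u\|_1$, rather than using submultiplicativity), and your final algebraic identity matches the stated bound.
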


We are now ready to prove Theorem~\ref{thm:jointProbThm}.

\begin{proof}[\textbf{Proof of Theorem~\ref{thm:jointProbThm}}]
If $\epslno,\epslnto,\epslntto$ satisfy the following for appropriate
choices of $c_5,c_6,c_7$,
\begin{align*}
\epslno \;\leq\; c_5 \min(\sigmamPto, \frac{\kappaO}{\sqrt{m}}) \epsilon,
\quad
\epslnto \;\leq\; c_6 \frac{\sigmamPto^2}{\kappaO} \epsilon,
\quad
\epslntto \;\leq\; c_7 \frac{\sigmamPto}{\sigmaoneO} \frac{1}{t\sqrt{m}}\epsilon,
\numberthis
\label{eqn:epslnBounds}
\end{align*}
we then have $\deltaone \leq \epsilon/20$, $\deltainf\leq \epsilon/20$
and $\Delta \leq 0.4\epsilon/t$.
Plugging these expressions into Lemma~\ref{lem:thirdLemma} gives
$\int |p(\xotot) - \phat(\xotot)|\ud \xotot \leq \epsilon$.
When we plug the expresssions for $\epslno,\epslnto,\epslntto$
in~\eqref{eqn:epslnBounds} into
Corollary~\ref{cor:KDE} we get the required sample complexity.
\end{proof}


\section{Addendum to Experiments}
\label{app:experiments}

\textbf{Details on Synthetic Experiments:}
Figure~\ref{fig:emissions} shows the emission probabilities used in our synthetic
experiments. For the transition matrices, we sampled the entries of the matrix from
a $U(0,1)$ distribution and then renormalised the columns to sum to $1$.

In our implementation, we use a Gaussian kernel for the KDE which is of order
$\beta=2$. While higher order kernels can be constructed using Legendre
polynomials~\citep{tsybakov08nonparametric}, the Gaussian kernel was more robust in
practice. The bandwidth for the kernel was chosen via cross validation on density
estimation.

\insertToyPDFs

\textbf{Details on Real Datasets:}
Here, we first estimate the model parameters using the training sequence.
Given a test sequence $x_{1:n}$, we predict $x_{t+1}$ conditioned on the previous
$x_{1:t}$ for $t=1:n$.
\begin{enumerate}
\item Internet Traffic.
Training sequence length: $10,000$. Test sequence length: $10$.
\item Laser Generation.
Training sequence length: $10,000$. Test sequence length: $100$.
\item Physiological data.
Training sequence length: $15,000$. Test sequence length: $100$.
\end{enumerate}

\end{document}